\title{Eigenoption Discovery through the \\Deep Successor Representation}
\newcommand{\eg}{\emph{e.g.}, }
\newcommand{\ie}{\emph{i.e.}, }
\newcommand{\citeeg}[1]{\citeauthor{#1}, \citeyear{#1}}
\newcommand{\citeg}[1]{\citeauthor{#1}'s (\citeyear{#1})}
\newtheorem*{theorem*}{Theorem}
\author{Marlos C. Machado${^1}$\thanks{Corresponding author: \texttt{machado@ualberta.ca}} \ , Clemens Rosenbaum${^2}$, Xiaoxiao Guo${^3}$\\\textbf{Miao Liu${^3}$, Gerald Tesauro${^3}$, Murray Campbell${^3}$}\\
${^1}$ University of Alberta, Edmonton, AB, Canada\\
${^2}$ University of Massachusetts, Amherst, MA, USA\\
${^3}$ IBM Research, Yorktown Heights, NY, USA
}
\begin{document}

\maketitle

\begin{abstract}
Options in reinforcement learning allow agents to hierarchically decompose a task into subtasks, having the potential to speed up learning and planning. However, autonomously learning effective sets of options is still a major challenge in the field. In this paper we focus on the recently introduced idea of using representation learning methods to guide the option discovery process. Specifically, we look at \emph{eigenoptions}, options obtained from representations that encode diffusive information flow in the environment. We extend the existing algorithms for eigenoption discovery to settings with \emph{stochastic} transitions and in which \emph{handcrafted} features are not available.  We propose an algorithm that discovers eigenoptions while learning non-linear state representations from raw pixels. It exploits recent successes in the deep reinforcement learning literature and the equivalence between proto-value functions and the successor representation. We use traditional tabular domains to provide intuition about our approach and Atari 2600 games to demonstrate its potential.
\end{abstract}

\section{Introduction}
Sequential decision making usually involves planning, acting, and learning about temporally extended courses of actions over different time scales. In the reinforcement learning framework, \emph{options} are a well-known formalization of the notion of actions extended in time; and they have been shown to speed up learning and planning when appropriately defined (\eg \citeeg{Brunskill14}; \citeeg{Guo17}; \citeeg{Solway14}). In spite of that, autonomously identifying good options is still an open problem. This problem is known as the problem of option discovery.

Option discovery has received ample attention over many years, with varied solutions being proposed (\eg \citeeg{Bacon17}; \citeeg{Simsek04}; \citeeg{Daniel16}; \citeeg{Florensa17}; \citeeg{Konidaris09}; \citeeg{Mankowitz16}; \citeeg{McGovern01})
. Recently, \citet{Machado17a} and \citet{Vezhnevets17} proposed the idea of learning options that traverse directions of a latent representation of the environment. In this paper we further explore this idea.

More specifically, we focus on the concept of \emph{eigenoptions}~\citep{Machado17a}, options learned using a model of diffusive information flow in the environment.
They have been shown to improve agents' performance by reducing the expected number of time steps a uniform random policy needs in order to traverse the state space. Eigenoptions are defined in terms of proto-value functions (PVFs; \citeeg{Mahadevan05}), basis functions learned from the environment's underlying state-transition graph. PVFs and eigenoptions have been defined and thoroughly evaluated in the tabular case. Currently, eigenoptions can be used in environments where it is infeasible to enumerate states only when a \emph{linear} representation of these states is \emph{known beforehand}.

In this paper we extend the notion of eigenoptions to \emph{stochastic} environments with \emph{non-enumerated} states, which are commonly approximated
by feature representations
. Despite methods that learn representations generally being more flexible, more scalable, and often leading to better performance, current algorithms for eigenoption discovery cannot be combined with representation learning. We introduce an algorithm that is capable of discovering eigenoptions while  learning representations. The learned representations implicitly approximate the model of diffusive information flow (hereafter abbreviated as the DIF model) in the environment. We do so by exploiting the equivalence between PVFs and the successor representation (SR; \citeeg{Dayan93}). Notably, by using the SR we also start to be able to deal with stochastic transitions naturally, a limitation of previous algorithms.

We evaluate our algorithm in a tabular domain as well as on Atari 2600 games. We use the tabular domain to provide intuition about our algorithm and to compare it to the algorithms in the literature. Our evaluation in Atari 2600 games provides promising evidence of the applicability of our algorithm in a setting in which a representation of the agent's observation is learned from raw pixels.


\section{Background}\label{sec:background}
In this section we discuss the reinforcement learning setting, the options framework, and the set of options known as eigenoptions. We also discuss the successor representation, which is the main concept used in the proposed algorithm.

\subsection{Reinforcement Learning and Options}
We consider the reinforcement learning (RL) problem in which a learning agent interacts with an unknown environment in order to maximize a reward signal. RL is often formalized as a Markov decision process (MDP), described as a 5-tuple: $\langle \mathscr{S}, \mathscr{A}, p, r, \gamma \rangle$. At time $t$ the agent is in state $s_t \in \mathscr{S}$ where it takes action $a_t \in \mathscr{A}$ that leads to the next state $s_{t+1} \in \mathscr{S}$ according to the transition probability kernel $p(s'|s, a)$. The agent also observes a reward $R_{t+1}$ generated by the function $r : \mathscr{S} \times \mathscr{A} \rightarrow \mathbb{R}$. The agent's goal is to learn a policy $\pi : \mathscr{S} \times \mathscr{A} \rightarrow [0,1]$ that maximizes the expected discounted return $G_t \doteq \mathbb{E}_{\pi, p} \big[\sum_{k=0}^{\infty} \gamma^k R_{t+k+1} | s_t\big]$, where $\gamma \in [0, 1]$ is the discount factor.

In this paper we are interested in the class of algorithms that determine the agent's policy by being greedy with respect to estimates of value functions; either w.r.t. the state value $v_\pi(s)$, or w.r.t. the state-action value function $q_\pi(s,a)$. Formally, $v_\pi(s) = \mathbb{E}_{\pi,p} [G_t | s] = \sum_{a} \pi(a|s) q_\pi(s, a)$. Notice that in large problems these estimates have to be approximated because it is infeasible to learn a value for each state-action pair. This is generally done by parameterizing $q_\pi(s,a)$ with a set of weights $\bm{\theta}$ such that $q(s, a, \bm{\theta}) \approx q_\pi(s,a)$. Currently, neural networks are the most successful parametrization approach in the field (\emph{e.g.}, \citeeg{Mnih15}; \citeeg{Tesauro95}). One of the better known instantiations of this idea is the algorithm called Deep Q-network (DQN; \citeeg{Mnih15}), which uses a neural network to estimate state-action value functions from raw pixels.

Options \citep{Sutton99} are our main topic of study. They are temporally extended actions that allow us to represent courses of actions. An option $\omega \in \Omega$ is a 3-tuple $\omega = \langle \mathcal{I}_\omega, \pi_\omega, \mathcal{T}_\omega \rangle$ where $\mathcal{I}_\omega \subseteq \mathscr{S}$ denotes the option's initiation set, $\pi_\omega : \mathscr{S} \times \mathscr{A} \rightarrow [0,1]$ denotes the option's policy, and $\mathcal{T}_\omega \subseteq \mathscr{S}$ denotes the option's termination set. We consider the \emph{call-and-return} option execution model in which a meta-policy $\mu : \mathscr{S} \rightarrow \Omega$ dictates the agent's behavior (notice $\mathscr{A} \subseteq \Omega$). After the agent decides to follow option $\omega$ from a state in $\mathcal{I}_\omega$, actions are selected according to $\pi_\omega$ until the agent reaches a state in $\mathcal{T}_\omega$. We are interested in learning $\mathcal{I}_\omega, \pi_\omega$, and $\mathcal{T}_\omega$ from scratch.

\subsection{Proto-value Functions and Eigenoptions}\label{sec:pvfs}
Eigenoptions are options that maximize eigenpurposes $r_i^{\bf e}$, intrinsic reward functions obtained from the DIF model~\citep{Machado17a}.  Formally,
\begin{eqnarray}\label{eq:eigenpurpose}
r_i^{\bf e}(s, s') &=& {\bf e}^\top \Big(\bm{\phi}(s') - \bm{\phi}(s)\Big),
\end{eqnarray}
where $\bm{\phi}(\cdot)$ denotes a feature representation of a given state (\emph{e.g.}, one-hot encoding in the tabular case) and ${\bf e}$ denotes an eigenvector encoding the DIF model at a specific timescale. Each intrinsic reward function, defined by the eigenvector being used, incentivizes the agent to traverse a different latent dimension of the state space.

In the tabular case, the algorithms capable of learning eigenoptions encode the DIF model through the combinatorial graph Laplacian $\mathcal{L} = D^{-1/2}(D-W)D^{-1/2}$, where $W$ is the graph's weight matrix and $D$ is the diagonal matrix whose entries are the row sums of $W$. The weight matrix is a square matrix where the $ij$-th entry represents the connection between states $i$ and $j$. Notice that this approach does not naturally deal with stochastic or unidirectional transitions because $W$ is generally defined as a \emph{symmetric} adjacency matrix. Importantly, the eigenvectors of  $\mathcal{L}$ are also known as proto-value functions (PVFs; \citeeg{Mahadevan05}; \citeeg{Mahadevan07}). 

In settings in which states cannot be enumerated, the DIF model is represented through a matrix of transitions $T$, with row $i$ encoding the transition vector $\bm{\phi}(s_{t}) - \bm{\phi}(s_{t-1})$, where $\bm{\phi}(\cdot)$ denotes a fixed linear feature representation \emph{known beforehand} ($i$ can be different from $t$ if transitions are observed more than once). \citet{Machado17a}  justifies this sampling strategy with the fact that, in the tabular case, if every transition is sampled \emph{once}, the right eigenvectors of matrix $T$ converge to PVFs. Because transitions are added only once, regardless of their frequency, this algorithm is not well suited to stochastic environments. In this paper we introduce an algorithm that naturally deals with stochasticity and that does not require $\bm{\phi}(\cdot)$ to be known beforehand. Our algorithm learns the environment's DIF model while learning a representation of the environment from raw pixels.

\subsection{The Successor Representation}~\label{sec:sr}
The successor representation (SR; \citeeg{Dayan93}) determines state generalization by how similar its successor states are. It is defined to be the expected future occupancy of state $s'$ given the agent's policy is $\pi$ and its starting state is $s$. It can be seen as defining state similarity in terms of time. See Figure~\ref{fig:sr} for an example. The Euclidean distance between state A and state C is smaller than the Euclidean distance between state A and state B. However, if one considers the gray tiles to be walls, an agent in state A can reach state B much \emph{quicker} than state C. The SR captures this distinction, ensuring that state A is more similar to state B than it is to state C.

\begin{figure}[t!]
   \begin{center}
    \includegraphics[width=0.9\columnwidth]{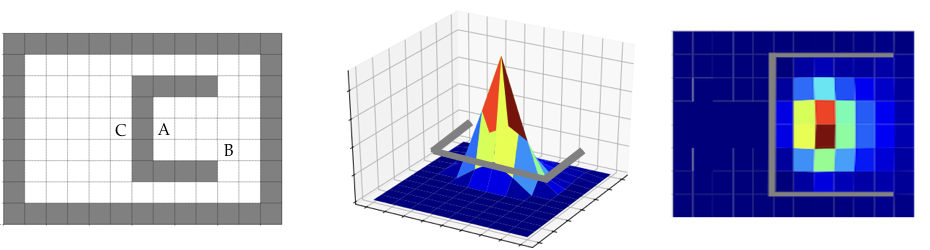}
     \centering
    \caption{Successor representation, with respect to the uniform random policy, of state A (left). This example is similar to \citeg{Dayan93}. The red color represents larger values while the blue color represents smaller values (states that are temporally further away).}\label{fig:sr}
    \end{center}
\end{figure}

Let $\mathbbm{1}_{\{\cdot\}}$ denote the indicator function, the SR, $\Psi_\pi(s, s')$, is formally defined, for $\gamma < 1$, as :
\begin{eqnarray*}
\Psi_\pi(s, s') &=& \mathbb{E}_{\pi, p} \Bigg[\sum_{t=0}^{\infty} \gamma^t \mathbbm{1}_{\{S_t = s'\}} \Big| S_0 = s\Bigg].
\end{eqnarray*}
This expectation can be estimated from samples with temporal-difference error \citep{Sutton88}:
\begin{eqnarray}
\hat{\Psi}(s,j) &\leftarrow& \hat{\Psi}(s,j) + \eta \Bigg[\mathbbm{1}_{\{s = j\}} + \gamma \hat{\Psi}(s', j) - \hat{\Psi}(s, j) \Bigg],\label{eq:sr_td}
\end{eqnarray}
where $\eta$ is the step-size. In the limit, the SR converges to $\Psi_\pi = (I - \gamma T_\pi)^{-1}$. This lets us decompose the value function into the product between the SR and the \emph{immediate} reward~\citep{Dayan93}: 
\begin{eqnarray*}
v_\pi(s) = \sum_{s' \in \mathcal{S}} \Psi_\pi(s, s') r(s').
\end{eqnarray*}
The SR is directly related to several other ideas in the field. It can be seen as the dual approach to dynamic programming and to value-function based methods in reinforcement learning \citep{Wang07}. Moreover, the eigenvectors generated from its eigendecomposition are equivalent to proto-value functions~\citep{Stachenfeld14,Stachenfeld17} and to slow feature analysis~\citep{Sprekeler11}. \clearpage
\floatname{algorithm}{Alg.}
\begin{minipage}[t]{.5\textwidth}
  \vspace{0pt}  
  \begin{algorithm}[H]
    \caption{\ \ Eigenoption discovery through the SR}\label{alg:general}
    \begin{algorithmic}
      \small
      \STATE $\hat{\Psi} \leftarrow$ \textsc{LearnRepresentation()}
        \STATE $E \leftarrow$ \textsc{ExtractEigenpurposes($\hat{\Psi}$)}
        \FOR {\textbf{each} eigepurpose ${\bf e}_i \in E$}
          \STATE $\langle \mathcal{I}_{e_i}, \pi_{e_i}, \mathcal{T}_{e_i} \rangle \leftarrow $ \textsc{LearnEigenoption(${\bf e}_i$)}
        \ENDFOR
    \end{algorithmic}
  \end{algorithm}
\end{minipage}
\hspace{0.1cm}
\begin{minipage}[t]{.5\textwidth}
  \vspace{0pt}
  \begin{algorithm}[H]
    \caption{\ \ \textsc{LearnRepresentation()} with the SR} \label{alg:sr}
    \begin{algorithmic}
      \small
      \FOR {a given number of steps $n$}
          \STATE Observe $s \in \mathscr{S}$, take action $a \in \mathscr{A}$ selected according to $\pi(s)$, and observe a next state $s' \in \mathscr{S}$\FOR {\textbf{each} state $j \in \mathscr{S}$}
            \STATE $\hat{\Psi}(s,j) \leftarrow \hat{\Psi}(s,j) + $
            \STATE \hspace{1.2cm} $\eta \big(\mathbbm{1}_{\{s = j\}} + \gamma \hat{\Psi}(s', j) - \hat{\Psi}(s, j)\big)$
          \ENDFOR
      \ENDFOR \RETURN $\hat{\Psi}$
    \end{algorithmic}
  \end{algorithm}
\end{minipage}

\vspace{0.3cm}
Such equivalences play a central role in the algorithm we describe in the next section. The SR may also have an important role in neuroscience. \citeauthor{Stachenfeld14}~(\citeyear{Stachenfeld14,Stachenfeld17}) recently suggested that the successor representation is encoded by the hippocampus, and that a low-dimensional basis set representing it is encoded by the enthorhinal cortex. Interestingly, both hippocampus and entorhinal cortex are believed to be part of the brain system responsible for spatial memory and navigation.

\section{Eigenoption Discovery}~\label{sec:algorithm}
In order to discover eigenoptions, we first need to obtain the eigenpurposes through the eigenvectors encoding the DIF model in the environment. This is currently done through PVFs, which the agent obtains by either explicitly building the environment's adjacency matrix or by enumerating all of the environment's transitions (\emph{c.f.} Section~\ref{sec:pvfs}). Such an approach is fairly effective in deterministic settings in which states can be enumerated and uniquely identified, \ie the tabular case. However, there is no obvious extension of this approach to stochastic settings. It may be hard for the agent to explicitly model the environment dynamics in a weight matrix. The existent alternative, to enumerate the environment's transitions, may have a large cost. These issues become worse when states cannot be enumerated, \ie the function approximation case. The existing algorithm that is applicable to the function approximation setting requires a fixed representation as input, not being able to learn a representation while estimating the DIF model.

In this paper we introduce an algorithm that addresses the aforementioned issues by estimating the DIF model through the SR. Also, we introduce a new neural network that is capable of approximating the SR from raw pixels by learning a latent representation of game screens. The learned SR is then used to discover eigenoptions, replacing the need for knowing the combinatorial Laplacian. In this section we discuss the proposed algorithm in the tabular case, the equivalence between PVFs and the SR, and the algorithm capable of estimating the SR, and eigenoptions, from raw pixels.

\subsection{The Tabular Case}
The general structure of the algorithms capable of discovering eigenoptions is fairly straightforward, as shown in Alg.~\ref{alg:general}. The agent learns (or is given) a representation that captures the DIF model (\eg the combinatorial Laplacian). It then uses the eigenvectors of this representation to define eigenpurposes (\textsc{ExtractEigenpurposes}), the intrinsic reward functions described by Equation~\ref{eq:eigenpurpose} that it will learn how to maximize. The option's policy is the one that maximizes this new reward function, while a state $s$ is defined to be terminal with respect to the eigenpurpose ${\bf e}_i$ if $q_*^{{\bf e}_i}(s,a) \leq 0$ for all $a \in \mathscr{A}$. The initiation set of an option ${\bf e}_i$ is defined to be $\mathscr{S} \setminus \mathcal{T}_{{\bf e}_i}$.

In the tabular case, our proposed algorithm is also fairly simple. Instead of assuming the matrix $\hat{\Psi}$ is given in the form of the graph Laplacian, or trying to estimate the graph Laplacian from samples by stacking the row vectors corresponding to the different observed transitions, we estimate the DIF model through the successor representation (\emph{c.f.} Alg.~\ref{alg:sr}). This idea is supported by the fact that, for our purposes, the eigenvectors of the normalized Laplacian and the eigenvectors of the SR are equivalent. Below we formalize this concept and discuss its implications. We show that the eigenvectors of the normalized Laplacian are equal to the eigenvectors of the SR scaled by $\gamma^{-1} D^{1/2}$.

The aforementioned equivalence ensures that the eigenpurposes extraction and the eigenoption learning steps remain unchanged. That is, we still obtain the eigenpurposes from the eigendecomposition\footnote{Notice the matrix $\hat{\Psi}$ is not guaranteed to be symmetric. In that case one can define the eigenpurposes to be $\hat{\Psi}$'s right eigenvectors, as we do in Section~\ref{sec:nn}.} of matrix $\hat{\Psi},$ and we still use each eigenvector ${\bf e}_i \in E$ to define the new learning problem in which the agent wants to maximize the eigenpurpose, defined in Equation~\ref{eq:eigenpurpose}.

Importantly, the use of the SR addresses some other limitations of previous work: 1) it deals with stochasticity in the environment and in the agent's policy naturally; 2) its memory cost is independent on the number of samples drawn by the agent; and 3) it does not assume that for every action there is another action the agent can take to return to the state it was before, \ie $W$ is symmetric.

\subsection{Relationship between PVFs and the SR}~\label{sec:proof}
As aforementioned, PVFs (the eigenvectors of the normalized Laplacian) are equal to the eigenvectors of the successor representation scaled by $\gamma^{-1} D^{1/2}$. To the best of our knowledge, this equivalence was first explicitly discussed by \citet{Stachenfeld14}. We provide below a more formal statement of such an equivalence, for the eingevalues and the eigenvectors of both approaches. We use the proof to further discuss the extent of this interchangeability.

\begin{theorem*}
\cite{Stachenfeld14}: Let $0 < \gamma < 1$ s.t. $\Psi = (I - \gamma T)^{-1}$ denotes the matrix encoding the SR, and let $\mathcal{L} = D^{-1/2} (D-W) D^{-1/2}$ denote the matrix corresponding to the normalized Laplacian, both obtained under a uniform random policy. The $i$-th eigenvalue ($\lambda_{\mbox{\tiny{SR}}, i}$) of the SR and the $j$-th eigenvalue ($\lambda_{\mbox{\tiny{PVF}}, j}$) of the normalized Laplacian are related as follows:
$$\lambda_{\mbox{\tiny{PVF}}, j} = \Big[1 - (1 - {\lambda_{\mbox{\tiny{SR}}, i}}^{-1}) \gamma^{-1}\Big]$$
The $i$-th eigenvector (${\bf e}_{\mbox{\tiny{SR}}, i}$) of the SR and the $j$-th eigenvector (${\bf e}_{\mbox{\tiny{PVF}}, j}$) of the normalized Laplacian, where $i + j = n + 1$, with $n$ being the total number of rows (and columns) of matrix $T$, are related as follows:
$${\bf e}_{\mbox{\tiny{PVF}}, j} = (\gamma^{-1} D^{1/2}) {\bf e}_{\mbox{\tiny{SR}}, i}$$
\end{theorem*}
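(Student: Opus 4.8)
The plan is to route everything through the eigendecomposition of the one-step transition matrix $T$, which under the uniform random policy is the central object relating all four matrices. First I would record the standard fact that, for a uniform random policy on the state-transition graph, $T = D^{-1}W$: from any state the agent moves to a neighbour with probability proportional to the edge weight, normalized by the degree. The key algebraic observation is then that $T$ is \emph{similar} to the symmetric normalized adjacency matrix $\widehat{W} \doteq D^{-1/2} W D^{-1/2}$, since $T = D^{-1}W = D^{-1/2}\widehat{W}D^{1/2}$. Because $W$ is symmetric, $\widehat{W}$ is symmetric, and this similarity guarantees that $T$ (and hence $\Psi$) is diagonalizable with real eigenvalues and a full eigenbasis, which is what makes the eigenvector correspondence well-defined despite $T$ itself not being symmetric.

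With this in hand, the argument reduces to tracking a single shared spectrum through three elementary transformations. Let $\mu$ be an eigenvalue of $\widehat{W}$ with eigenvector $v$. Then: (i) $\mathcal{L} = I - \widehat{W}$ gives $\mathcal{L}v = (1-\mu)v$, so the normalized Laplacian has the \emph{same} eigenvectors as $\widehat{W}$ with eigenvalue $\lambda_{\mbox{\tiny{PVF}}} = 1 - \mu$; (ii) the similarity $T = D^{-1/2}\widehat{W}D^{1/2}$ gives $T(D^{-1/2}v) = \mu (D^{-1/2}v)$, so $T$ has eigenvalue $\mu$ with eigenvector $D^{-1/2}v$; and (iii) since $\Psi = (I-\gamma T)^{-1}$, it shares the eigenvectors of $T$ with eigenvalue $\lambda_{\mbox{\tiny{SR}}} = (1-\gamma\mu)^{-1}$. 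Chaining (i)--(iii), the eigenvectors of $\Psi$ are exactly $D^{-1/2}$ applied to those of $\mathcal{L}$, so that ${\bf e}_{\mbox{\tiny{PVF}}} = D^{1/2}{\bf e}_{\mbox{\tiny{SR}}}$; the extra factor $\gamma^{-1}$ in the statement is immaterial, being a free choice of scale for an eigenvector.

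For the eigenvalue identity I would simply eliminate the shared quantity $\mu$. Inverting $\lambda_{\mbox{\tiny{SR}}} = (1-\gamma\mu)^{-1}$ yields $\mu = (1 - \lambda_{\mbox{\tiny{SR}}}^{-1})\gamma^{-1}$, and substituting into $\lambda_{\mbox{\tiny{PVF}}} = 1 - \mu$ reproduces $\lambda_{\mbox{\tiny{PVF}}} = 1 - (1 - \lambda_{\mbox{\tiny{SR}}}^{-1})\gamma^{-1}$ exactly. The remaining point is the index matching $i + j = n+1$: as functions of the shared variable $\mu$, the PVF map $\mu \mapsto 1-\mu$ is decreasing while the SR map $\mu \mapsto (1-\gamma\mu)^{-1}$ is increasing for $0 < \gamma < 1$ (note $1-\gamma\mu \geq 1-\gamma > 0$ since $\mu \in [-1,1]$). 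Hence ordering PVF eigenvalues increasingly and SR eigenvalues increasingly reverses the correspondence, so the $j$-th PVF eigenpair is paired with the $(n+1-j)$-th SR eigenpair.

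The main obstacle is not the chain of substitutions, which is routine, but making the eigenvector correspondence rigorous given that $T = D^{-1}W$ is \emph{not} symmetric. The clean resolution is the similarity to $\widehat{W}$, which supplies a genuine (real, complete) eigenbasis and makes the $D^{\pm 1/2}$ conjugation between the two eigenbases exact; I would also flag the standing assumption that $D$ is invertible (no isolated states), without which neither $\mathcal{L}$ nor the conjugation is defined. Finally, the whole argument presumes the uniform random policy so that $T = D^{-1}W$ holds; for a general policy this identity fails and the equivalence becomes only approximate, a caveat worth stating explicitly since the learned $\hat{\Psi}$ in Alg.~\ref{alg:sr} is estimated under such a policy.
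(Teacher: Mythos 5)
Your proposal is correct, and it reaches the result by a genuinely different organization than the paper. The paper's proof is a direct one-way verification: it starts from the SR eigenpair equation $(I-\gamma T)^{-1}{\bf e}_i = \lambda_i {\bf e}_i$, inverts it, rearranges the scalars to manufacture the quantity $\lambda_j' = 1-(1-\lambda_i^{-1})\gamma^{-1}$, substitutes $T = D^{-1}W$, and multiplies through by $D^{1/2}$ so that the left-hand side becomes $\mathcal{L}\,D^{1/2}\gamma^{-1}{\bf e}_i$ — i.e., it pushes a single SR eigenvector forward into an eigenvector of $\mathcal{L}$. You instead factor both objects through the symmetric matrix $\widehat{W} = D^{-1/2}WD^{-1/2}$, reading off $\mathcal{L} = I - \widehat{W}$ on one side and $T = D^{-1/2}\widehat{W}D^{1/2}$, $\Psi = (I-\gamma T)^{-1}$ on the other, then eliminating the shared eigenvalue $\mu$. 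The algebraic core (conjugation by $D^{1/2}$ plus the scalar map $\mu \mapsto (1-\gamma\mu)^{-1}$) is the same, but your hub-and-spoke structure buys three things the paper leaves implicit: (i) it \emph{proves} that the non-symmetric $T$ and $\Psi$ are diagonalizable with real spectrum, so the eigenpairs invoked in the statement actually exist — the paper simply posits ``the $i$-th eigenvalue and eigenvector of the SR''; (ii) because $\widehat{W}$ supplies a full basis of $n$ eigenvectors mapped injectively into both eigenbases, you get the correspondence in both directions at once, whereas the paper's argument alone only shows that each SR eigenvector yields a Laplacian eigenvector; and (iii) your monotonicity argument ($\mu \mapsto 1-\mu$ decreasing, $\mu \mapsto (1-\gamma\mu)^{-1}$ increasing, with $1-\gamma\mu \geq 1-\gamma > 0$) is an actual proof of the index reversal $i+j = n+1$, which the paper only asserts informally after its proof when discussing why one wants the \emph{largest} SR eigenvalues. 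Your closing caveats (invertibility of $D$, and the fact that $T = D^{-1}W$ is specific to the uniform random policy) are also correct and apply equally to the paper's version.
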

\begin{proof}
Let $\lambda_i$, ${\bf e}_i$ denote the $i$-th eigenvalue and eigenvector of the SR, respectively. Using the fact that the SR is known to converge, in the limit, to $(I - \gamma T)^{-1}$ (through the Neumann series), we~have:
\begin{eqnarray}
(I - \gamma T)^{-1} {\bf e}_i                         \!\!   &=&   \!\!   \lambda_i {\bf e}_i \nonumber \\
(I - \gamma T) {\bf e}_i                              \!\!   &=&   \!\!   \lambda_i^{-1} {\bf e}_i \nonumber \\
(I - T) \gamma^{-1} {\bf e}_i                         \!\!   &=&   \!\!   [1 - (1 - \lambda_i^{-1}) \gamma^{-1}] \gamma^{-1} {\bf e}_i \nonumber \\
(I - T) \gamma^{-1} {\bf e}_i                         \!\!   &=&   \!\!   \lambda_j' \gamma^{-1} {\bf e}_i \\
(I - D^{-1}W) \gamma^{-1} {\bf e}_i                   \!\!   &=&   \!\!   \lambda_j' \gamma^{-1} {\bf e}_i \nonumber \\
D^{-1/2}(D - W)D^{-1/2} D^{1/2} \gamma^{-1} {\bf e}_i       \!\!   &=&   \!\!   \lambda_j' \gamma^{-1} D^{1/2} {\bf e}_i \nonumber \qedhere
\end{eqnarray}
\end{proof}
Importantly, when using PVFs we are first interested in the eigenvectors with the corresponding smallest eigenvalues, as they are the ``smoothest'' ones. However, when using the SR we are interested in the eigenvectors with the \emph{largest} eigenvalues. The change of variables in Eq.~3 highlights this fact \ie $\lambda_j' = [1 - (1 - \lambda_i^{-1}) \gamma^{-1}]$. The indices~$j$ are sorted in the reverse order of the indices~$i$. This distinction can be very important when trying to estimate the relevant eigenvectors. Finding the largest eigenvalues/eigenvectors is statistically more robust to noise in estimation and does not depend on the lowest spectrum of the matrix. Moreover, notice that the scaling by $D^{1/2}$ does not change the direction of the eigenvectors when the size of the action set is constant across all states. This is often the case in the RL problems being studied.

\subsection{The Function Approximation Case: The SR through Deep Neural Networks}~\label{sec:nn}
The tabular case is interesting to study because it provides intuition about the problem and it is easier to analyze, both empirically and theoretically.  However, the tabular case is only realizable in toy domains. In real-world situations the number of states is often very large and the ability to generalize and to recognize similar states is essential. In this section, inspired by \citeg{Kulkarni16a} and \citeg{Oh15} work, we propose replacing Alg.~\ref{alg:sr} by a neural network that is able to estimate the successor representation from raw pixels. Such an approach circumvents the limitations of previous work that required a \emph{linear} feature representation to be provided beforehand.

\textit{The SR with non-enumerated states:} Originally, the SR was not defined in the function approximation setting, where states are described in terms of feature vectors. Successor features are the natural extension of the SR to this setting. We use \citeg{Barreto17} definition of successor features, where $\psi_{\pi, i}(s)$ denotes the successor feature $i$ of state $s \in \mathscr{S}$ when following a policy $\pi$:
\begin{eqnarray*}
\psi_{\pi, i}(s) &=& \mathbb{E}_{\pi, p} \Bigg[\sum_{t=0}^{\infty} \gamma^t \phi_i(S_t) \Big| S_0 = s\Bigg].
\end{eqnarray*}
In words, $\psi_{\pi, i}(s)$ encodes the discounted expected value of the $i$-th feature in the vector $\phi(\cdot)$ when the agent starts in state $s$ and follows the policy $\pi$. The update rule presented in Eq.~\ref{eq:sr_td} can be naturally extended to this definition. The temporal-difference error in the update rule can be used as a differentiable loss function, allowing us to estimate the successor features with a neural network.

\textit{Neural network architecture:} The architecture we used is depicted in Fig~\ref{fig:nn}. The reconstruction module is the same as the one introduced by \cite{Oh15}, but augmented by the SR estimator (the three layers depicted at the bottom). The SR estimator uses the learned latent representation as input \ie the output of the representation learning module.

\begin{figure*}[t]
  \begin{center}
   \includegraphics[width=\textwidth]{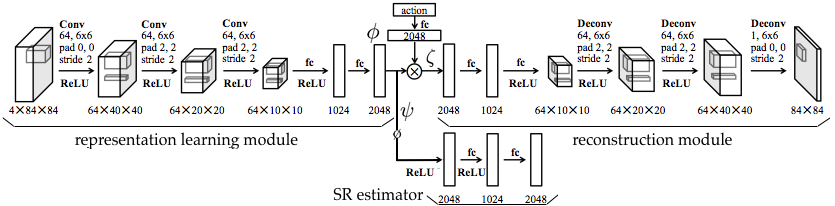}
     \centering
     \vspace{-0.6cm}
    \caption{Neural network architecture used to learn the SR. The symbols $\bigotimes$ and $\o$ denote element-wise multiplication and the fact that gradients are not propagated further back, respectively.}\label{fig:nn}
    \end{center}
\end{figure*}

The proposed neural network receives raw pixels as input and learns to estimate the successor features of a lower-dimension representation learned by the neural network. The loss function $\mathcal{L}_{SR}$ we use to learn the successor features is:
\begin{eqnarray*}
\mathcal{L}_{SR}(s, s') = \mathbb{E} \Bigg[ \Big( \phi^-(s) + \gamma \psi^-\big(\phi^-(s')\big) - \psi\big(\phi(s)\big) \Big)^2 \Bigg],
\end{eqnarray*}
where $\phi(s)$ denotes the feature vector encoding the learned representation of state $s$ and $\psi(\cdot)$ denotes the estimated successor features. In practice, $\phi(\cdot)$ is the output of the representation learning module and $\psi(\cdot)$ is the output of the SR estimator, as shown in Fig.~\ref{fig:nn}. The loss function above also highlights the fact that we have two neural networks. We use $^-$ to represent a \emph{target} network~\citep{Mnih15}, which is updated at a slower rate for stability purposes.

We cannot directly estimate the successor features from raw pixels using only $\mathcal{L}_{SR}$ because zero is one of its fixed points. This is the reason we added \citeg{Oh15} reconstruction module in the proposed network. It behaves as an auxiliary task~\citep{Jaderberg17} that predicts the \emph{next} state to be observed given the current state and action. By predicting the next state we increase the likelihood the agent will learn a representation that takes into consideration the pixels that are under its control, which has been shown to be a good bias in RL problems~\citep{Bellemare12}. Such an auxiliary task is defined through the network's reconstruction error $\mathcal{L}_{RE}$:
\begin{eqnarray*}
\mathcal{L}_{RE}(s, a, s') = \Big( \zeta \big(\phi(s), a\big) - s'\Big)^2,
\end{eqnarray*}
where $\zeta (\cdot)$ denotes the output of the reconstruction module, as shown in Fig.~\ref{fig:nn}. The final loss being optimized is $\mathcal{L}(s, a, s') = \mathcal{L}_{RE}(s, a, s') + \mathcal{L}_{SR}(s, s')$.

Finally, to ensure that the SR will not interfere with the learned features, we zero the gradients coming from the SR estimator (represented with the symbol $\o$ in Fig.~\ref{fig:nn}). We trained our model with RMSProp and we followed the same protocol \citet{Oh15} used to initialize the network.

\textit{Eigenoption learning:} In Alg.~\ref{alg:general}, the function \textsc{ExtractEigenpurposes} returns the eigenpurposes described by Eq.~\ref{eq:eigenpurpose}. Eigenpurposes are defined in terms of a feature representation $\phi(s_t)$ of the environment and of the eigenvectors ${\bf e}_i$ of the DIF model (the SR in our case). We use the trained network to generate both. It is trivial to obtain $\phi(s_t)$ as we just use the output of the appropriate layer in the network as our feature representation. To obtain ${\bf e}_i$ we first need to generate a meaningful matrix since our network outputs a \emph{vector} of successor features instead of a matrix. We do so by having the agent follow the uniform random policy while we store the network outputs $\psi(s_t)$, which correspond to the network estimate of the successor features of state $s_t$. We then create a matrix $T$ where row $t$ corresponds to $\psi(s_t)$ and we define ${\bf e}_i$ to be its right eigenvectors.

Once we have created the eigenpurposes, the option discovery problem is reduced to a regular RL problem where the agent aims to maximize the cumulative sum of rewards. Any learning algorithm can be used for that. We provide details about our approach in the next section.

\section{Experiments}
We evaluate the discovered eigenoptions quantitatively and qualitatively in this section. We use the traditional rooms domain to evaluate the impact, on the eigenvectors and on the discovered options, of approximating the DIF model through the SR. We then use Atari 2600 games to demonstrate how the proposed network does discover purposeful options from raw pixels.

\subsection{Tabular case}\label{sec:experiments_tabular}
Our first experiment evaluates the impact of estimating the SR from samples instead of assuming the DIF model was given in the form of the normalized Laplacian. We use the rooms domain~(Fig.~\ref{fig:4room}; \citeeg{Sutton99}) to evaluate our method. Fig.~\ref{fig:eigenvector} depicts the first eigenvector obtained from the SR while Fig.~\ref{fig:eigenoption} depicts the corresponding eigenoption. We followed the uniform random policy for 1,000 episodes to learn the SR. Episodes were 100 time steps long. We used a step-size of $0.1$, and we set $\gamma = 0.9$. The estimated eigenvector is fairly close to the true one and, as expected, the obtained eigenvector is fairly similar to the PVFs that are obtained for this domain. In the Appendix we provide the plots for the true SR and the PVF, as well as plots for different eigenvectors, comparing them to those obtained from $(I - \gamma T)^{-1}$.

\begin{figure}[t]
    \centering
    \begin{subfigure}{0.24\textwidth}
        \includegraphics[width=\textwidth]{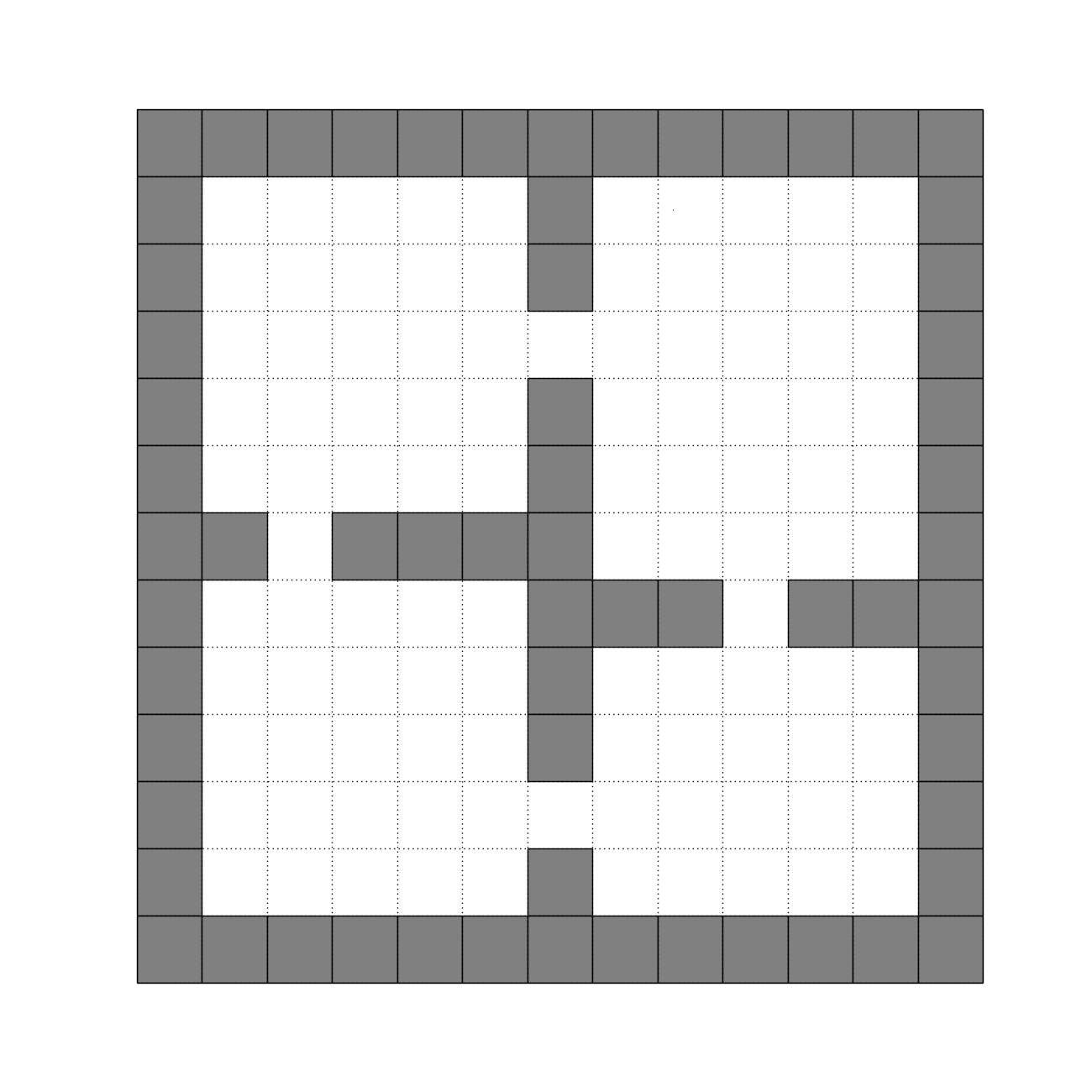}
        \caption{Rooms domain}
        \label{fig:4room}
    \end{subfigure}
    \begin{subfigure}{0.24\textwidth}
        \raisebox{2.75mm}{\includegraphics[width=\textwidth]{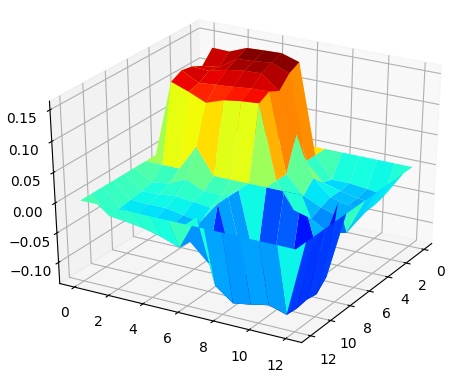}}
        \caption{First eigenvector}
        \label{fig:eigenvector}
    \end{subfigure}
    \begin{subfigure}{0.24\textwidth}
        \includegraphics[width=\textwidth]{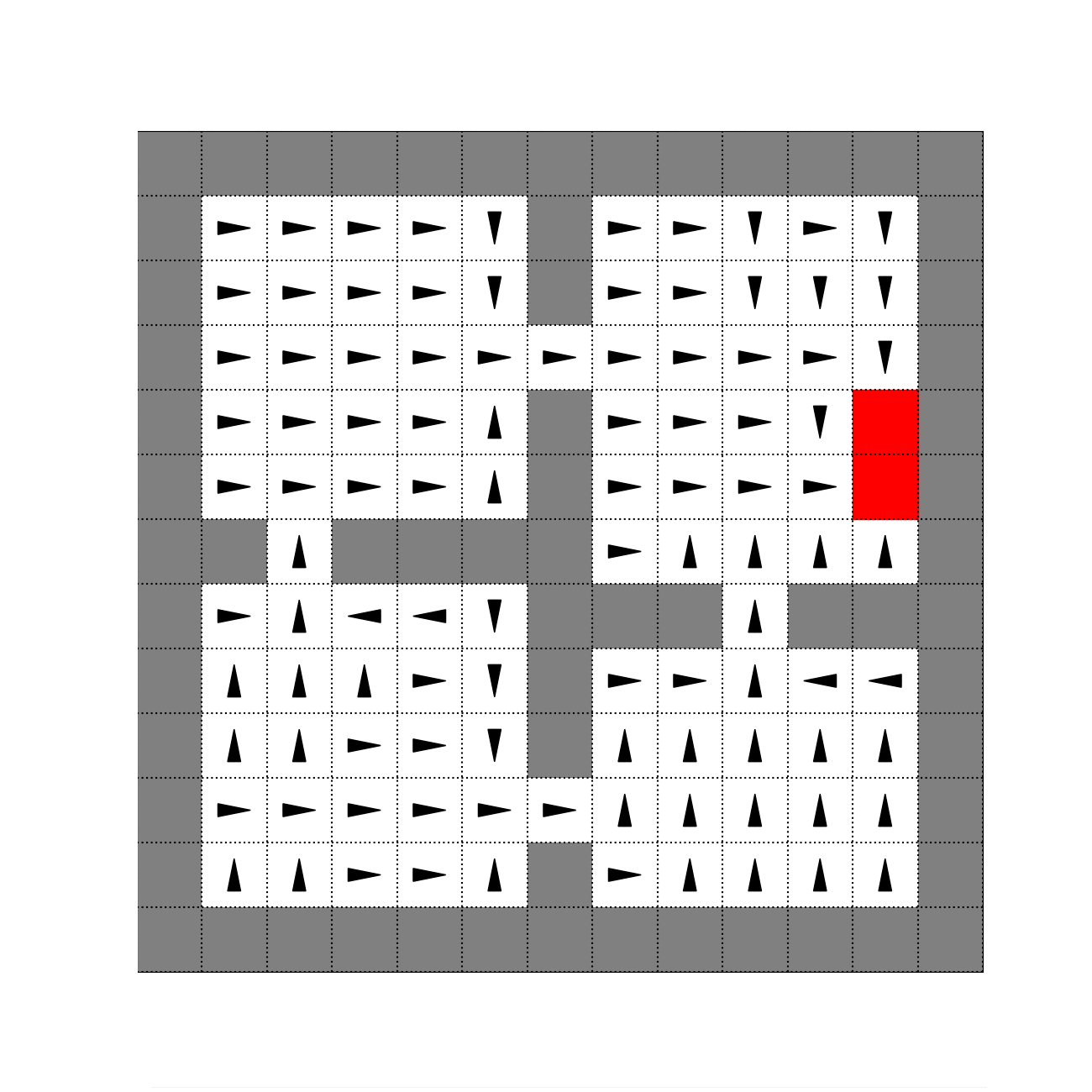}
        \caption{First eigenoption}
        \label{fig:eigenoption}
    \end{subfigure}
    \begin{subfigure}{0.24\textwidth}
        \raisebox{3.25mm}{\includegraphics[width=\textwidth]{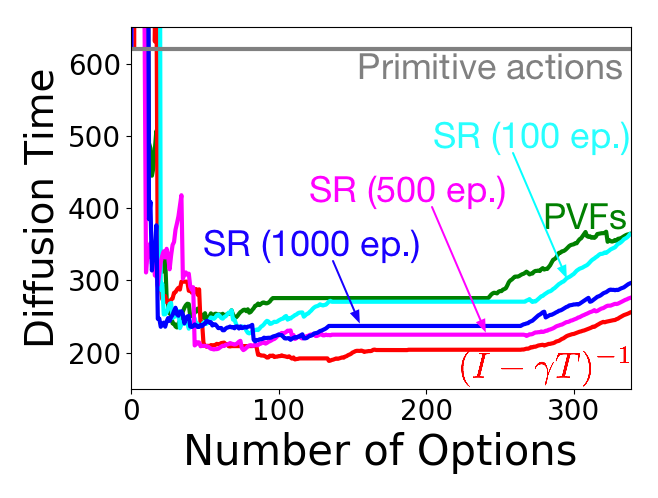}}
        \caption{Diffusion time}
        \label{fig:diffusion_time}
    \end{subfigure}
    \caption{Results in the rooms domain. The rightmost figure depicts the diffusion time as eigenoptions are added to the agent's action set (sorted by eigenvalues corresponding to the eigenpurposes).}\label{fig:tabular}
\end{figure}

Eigenoptions are known for improving the agent's ability to explore the environment. We use the metric diffusion time to validate whether such an ability is preserved with our method. The diffusion time can be seen as a proxy for how hard it is for an agent to reach the goal state when following a uniform random policy. It is defined as the expected number of decisions (action selection steps) an agent needs to take, when following the uniform random policy, to navigate between two randomly chosen states. We compared the agent's diffusion time when using eigenoptions obtained with PVFs to the diffusion time when using eigenoptions obtained with estimates of the SR. As we can see in Fig~\ref{fig:diffusion_time}, the eigenoptions obtained with the SR do help the agent to explore the environment. The gap between the diffusion time when using PVFs and when using the SR is likely due to different ways of dealing with corners. The SR implicitly models self-loops in the states adjacent to walls, since the agent takes an action and it observes it did not move. 

We also evaluated how the estimates of the SR evolve as more episodes are used during learning, and its impact in the diffusion time (Fig~\ref{fig:diffusion_time}). In the Appendix we present more results, showing that the local structure of the graph is generally preserved. Naturally, more episodes allow us to learn more accurate estimates of the SR as a more global facet of the environment is seen, since the agent has more chances to further explore the state space. However, it seems that even the SR learned from few episodes allow us to discover useful eigenoptions, as depicted in Fig.~\ref{fig:diffusion_time}. The eigenoptions obtained from the SR learned using only 100 episodes are already capable of reducing the agent's diffusion time considerably. Finally, it is important to stress that the discovered options do more than randomly selecting subgoal states. ``Random options'' only reduce the agent's diffusion time when hundreds of them are added to the agent's action set~\citep{Machado17a}.

Finally, we evaluated the use of the discovered eigenoptions to maximize reward. In our experiments the agent learned, off-policy, the greedy policy over primitive actions (target policy) while following the uniform random policy over actions and eigenoptions (behavior policy). We used Q-learning~\citep{Watkins92} in our experiments -- parameters $\lambda = 0$, $\alpha = 0.1$, and $\gamma = 0.9$. As before, episodes were $100$ time steps long. Figure~\ref{fig:tabular_reward} summarizes the obtained results comparing the performance of our approach to regular Q-learning over primitive actions. The eigenoptions were extracted from estimates of the SR obtained after $100$ episodes. The reported results are the average over $24$ independent runs when learning the SR, with each one of these runs encoding $100$ runs evaluating Q-Learning. The options were added following the sorting provided by the eigenvalues. For example, \emph{4 options} denotes an agent with the action set used in the behavior policy being composed of the four primitive actions and the four eigenoptions generated by the top 2 eigenvalues (both directions are being used). Notice that these results do not try to take the sample efficiency of our approach into consideration, they are only meant to showcase how eigenoptions, once discovered, can speed up learning. The sample complexity of learning options is generally justified in lifelong learning settings where they are re-used over multiple tasks (\emph{e.g.}, \citeeg{Brunskill14}). This is beyond the scope of this paper.

\begin{figure}[t]
    \centering
    \begin{subfigure}{0.25\textwidth}
        \includegraphics[width=\textwidth]{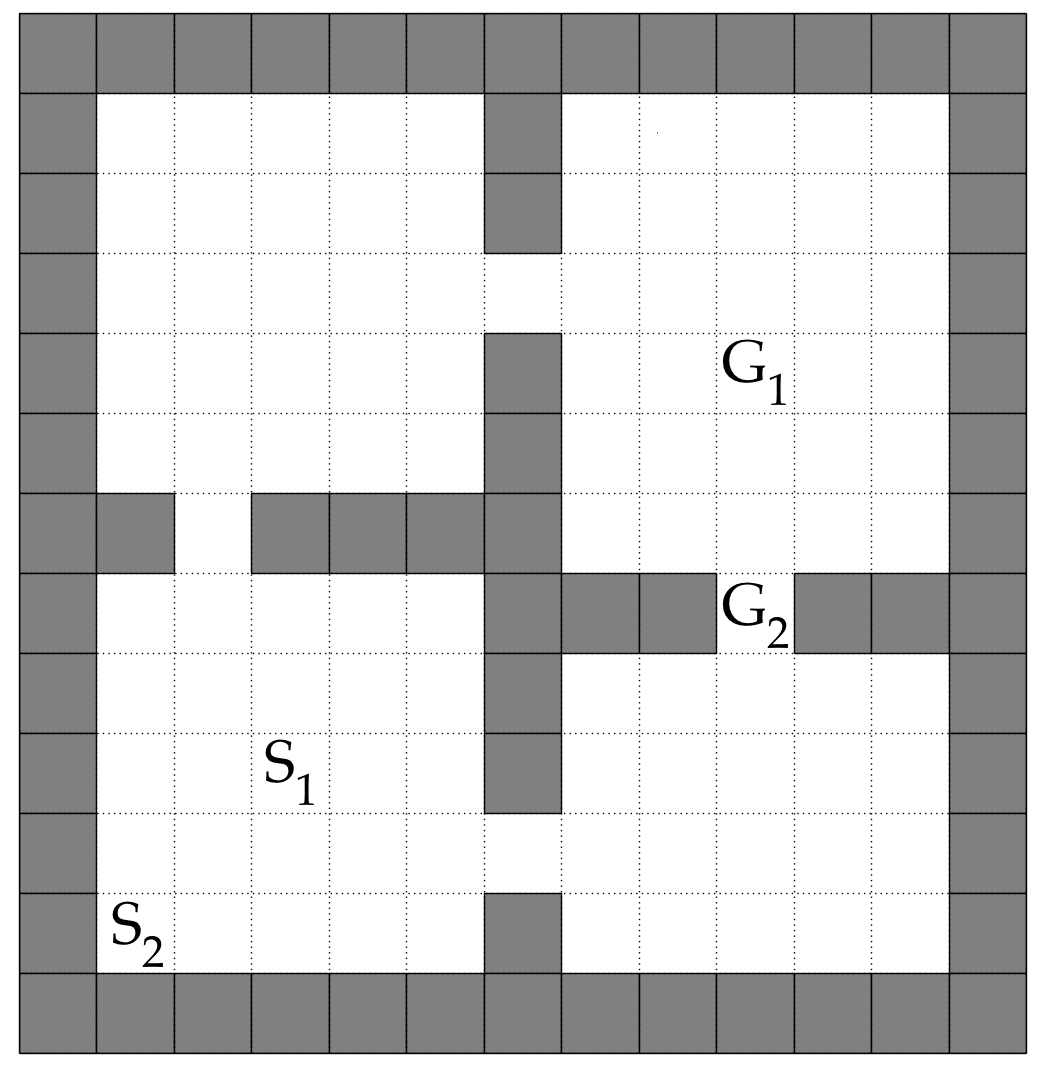}
        \caption{Used environments}
        \label{fig:4room_with_reward}
    \end{subfigure}
    \begin{subfigure}{0.35\textwidth}
        \includegraphics[width=\textwidth]{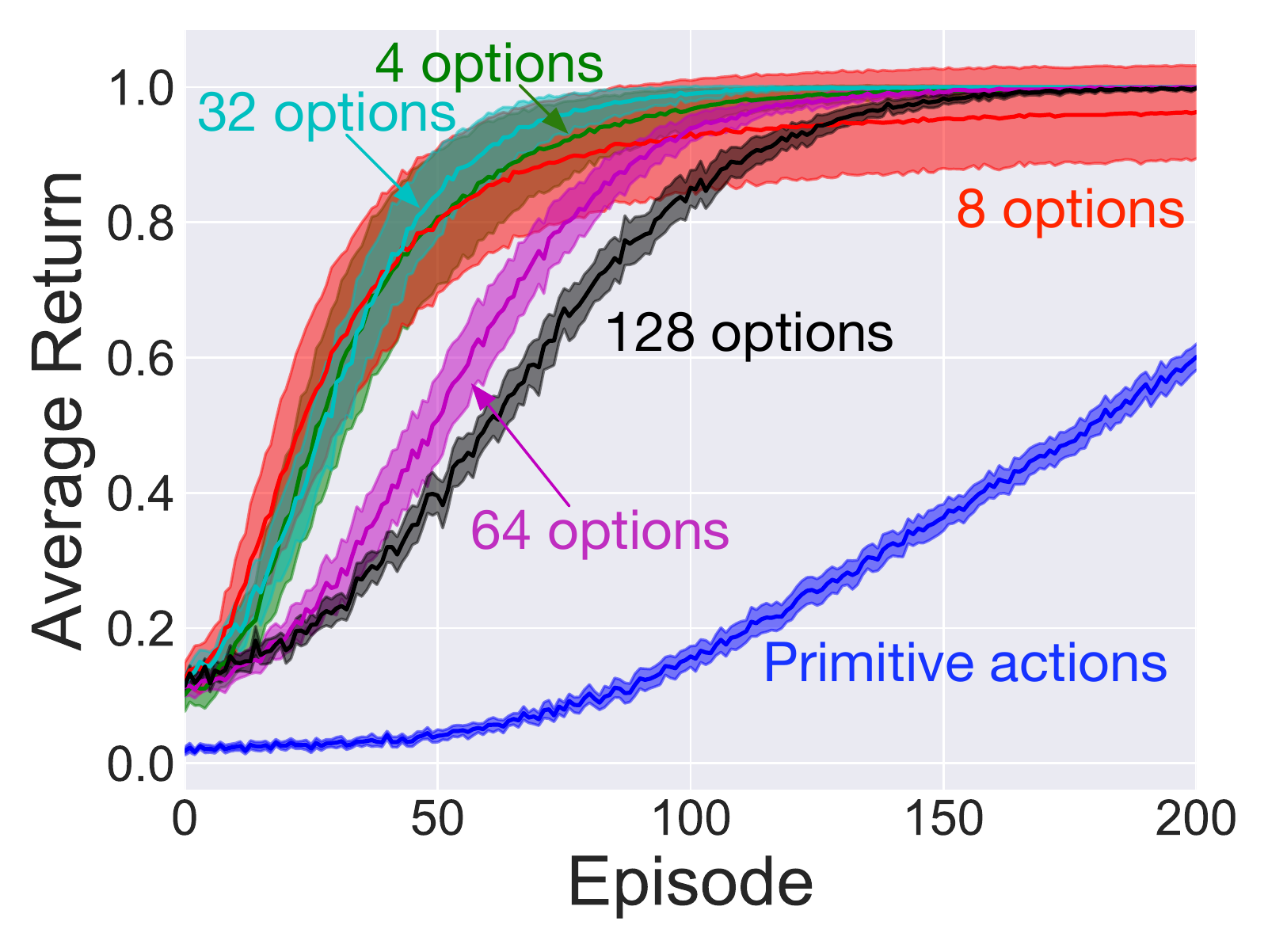}
        \caption{Results in env. with $S_1, G_1$}
        \label{fig:eigenvector}
    \end{subfigure}
    \begin{subfigure}{0.35\textwidth}
        \includegraphics[width=\textwidth]{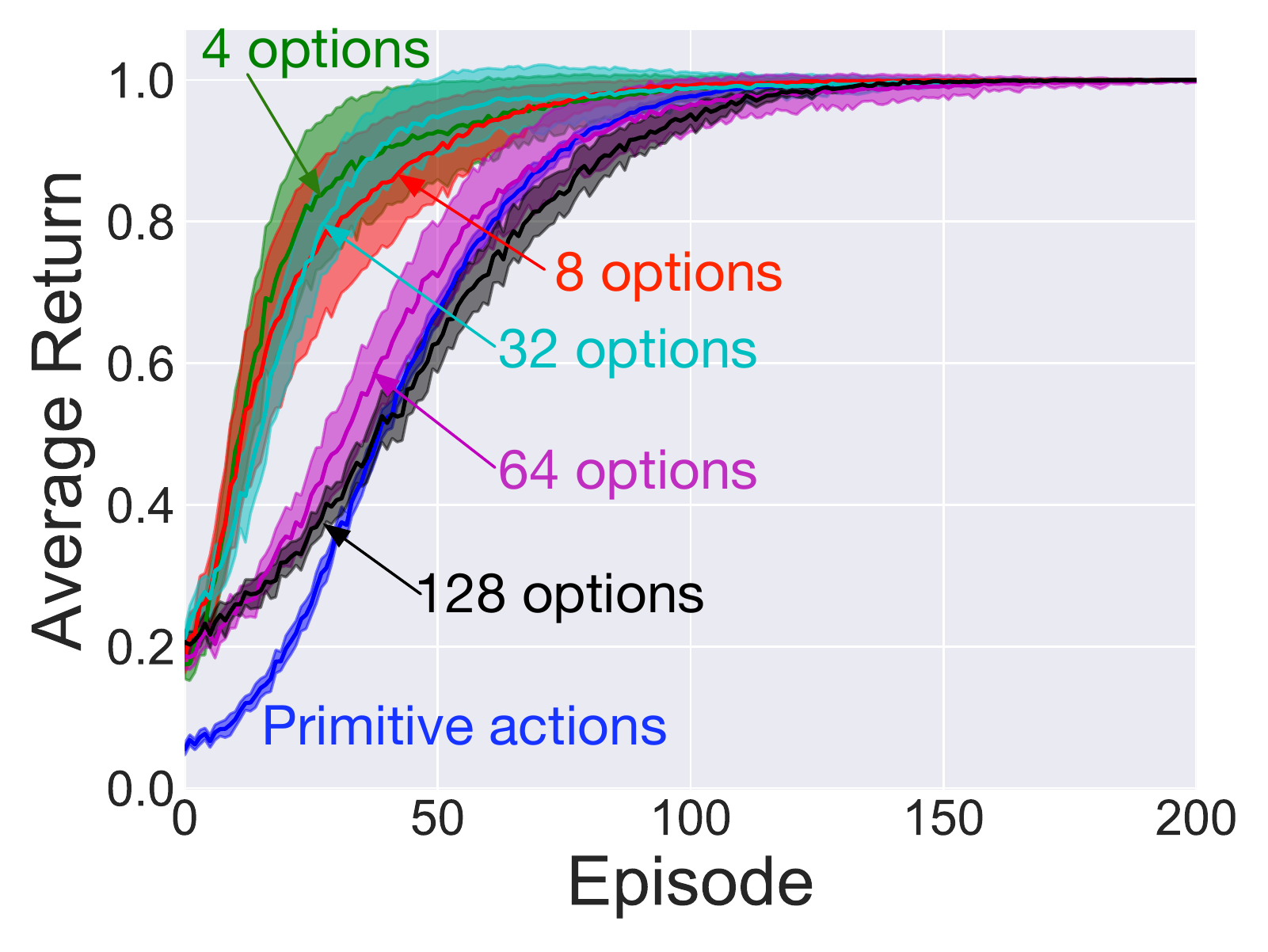}
        \caption{Results in env. with $S_2, G_2$}
        \label{fig:eigenoption}
    \end{subfigure}
    \caption{Different environments (varying start and goal locations) used in our evaluation (a), as well as the learning curves obtained in each one of these environments (b, c) for different number of options obtained from the SR when estimated after $100$ episodes. See text for more details.}\label{fig:tabular_reward}
    \vspace{-0.2cm}
\end{figure}

The obtained results clearly show that eigenoptions are not only capable of reducing the diffusion time in the environment but of also improving the agent's control performance. They do so by increasing the likelihood that the agent will cover a larger part of the state space given the same amount of time. Moreover, as before, it seems that a very accurate estimate of the successor representation is not necessary for the eigenoptions to be useful. Similar results can be obtained for different locations of the start and goal states, and when the estimates of the SR are more accurate. These results can be seen in the Appendix.


\subsection{Atari 2600}~\label{fig:experiments_atari}
This second set of experiments evaluates the eigenoptions discovered when the SR is obtained from raw pixels. We obtained the SR through the neural network described in Section~\ref{sec:algorithm}. We used four Atari 2600 games from the Arcade Learning Environment~\citep{Bellemare13} as testbed: \textsc{Bank Heist}, \textsc{Freeway}, \textsc{Montezuma's Revenge}, and \textsc{Ms. Pac-Man}.

We followed the protocol described in the previous section to create eigenpurposes. We trained the network in Fig.~\ref{fig:nn} to estimate the SR under the uniform random policy. Since the network does not impact the policy being followed, we built a dataset of $500,000$ samples for each game and we used this dataset to optimize the network weights. We passed through the shuffled dataset 10 times, using RMSProp with a step size of $10^{-4}$. Once we were done with the training, we let the agent follow a uniform random policy for $50,000$ steps while we stored the SR output by the network for each observed state as a row of matrix $T$. We define ${\bf e}$, in the eigenpurposes we maximize (\emph{c.f.}, Eq.~\ref{eq:eigenpurpose}), to be the right eigenvectors of the matrix $T$, while ${\phi}(\cdot)$ is extracted at each time step from the network in Fig.~\ref{fig:nn}. Due to computational constraints, we approximated the final eigenoptions. We did so by using the ALE's internal emulator to do a one-step lookahead and act greedily with respect to each eigenpurpose (in practice, this is equivalent to learning with $\gamma = 0$). This is not ideal because the options we obtain are quite limited, since they do not deal with delayed rewards. However, even in such limiting setting we were able to obtain promising results, as we discuss below.

\begin{figure}[t]
    \centering
    \begin{subfigure}{0.32\textwidth}
        \includegraphics[width=\textwidth]{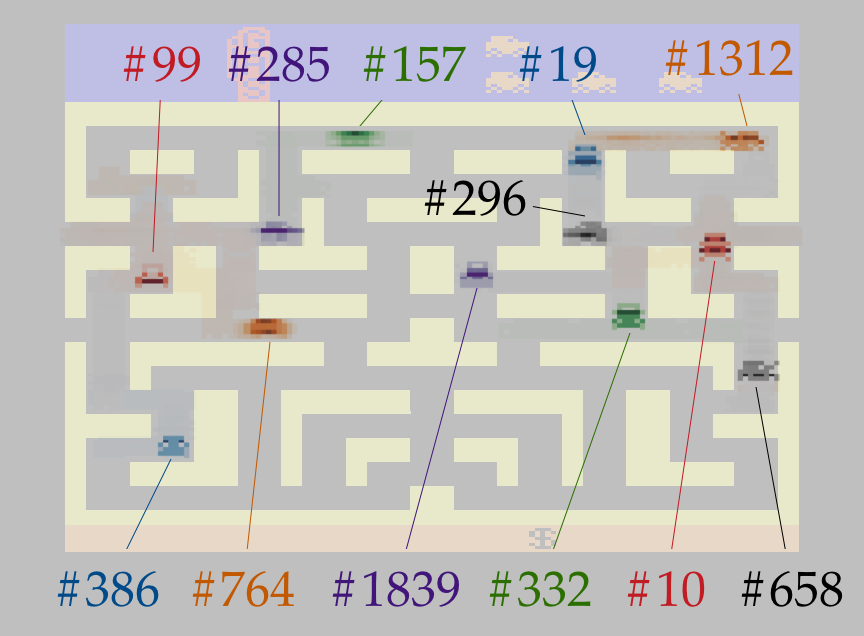}
        \caption{\textsc{Bank Heist}}
        \label{fig:bank_heist}
    \end{subfigure}
    ~
    \begin{subfigure}{0.32\textwidth}
        \includegraphics[width=\textwidth]{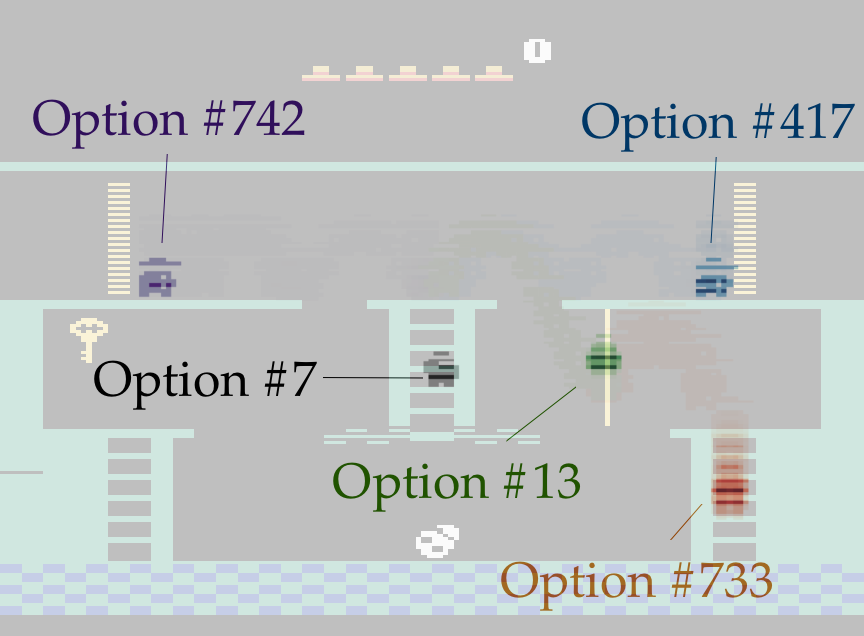}
        \caption{\textsc{Montezuma's Revenge}}
        \label{fig:montezuma_revenge}
    \end{subfigure}
    ~
    \begin{subfigure}{0.32\textwidth}
        \includegraphics[width=\textwidth]{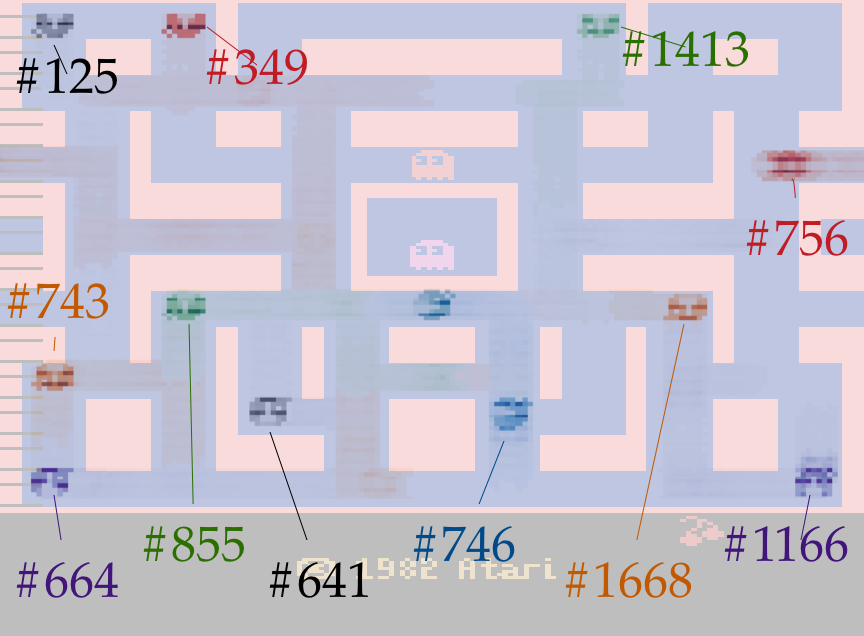}
        \caption{\textsc{Ms. Pacman}}
        \label{fig:ms_pacman}
    \end{subfigure}
    \caption{Plots of density of state visitation of eigenoptions discovered in three Atari 2600 games.  States visited more frequently show darker images of the avatar.  Note that an eigenoption's overwhelming mass of visitations corresponds to its terminal state, and that disparate options have different terminal states.
    \vspace{-0.2cm}
}\label{fig:atari}
\end{figure}

Following~\citet{Machado17a}, we evaluate the discovered eigenoptions qualitatively. We execute all options following the procedure described above (greedy one-step lookahead) while tracking the avatar's position on the screen. Figure~\ref{fig:atari} summarizes the behavior of some of the meaningful options discovered. The trajectories generated by different options are represented by different colors and the color's intensity at a given location represents how often the agent was at that location. Eigenoptions were introduced as options that generate purposeful behavior and that help agents explore the environment. We can clearly see that the discovered eigenoptions are indeed purposeful. They aim to reach a specific location and stay there. If this was not the case the agent's trajectory would be much more visible. Instead, what we actually observe is that the mass of visitation is concentrated on one location on the screen, dominating (color intensity) all the others. The location the agent is spending most of its time on can in fact be seen as the option's terminal state. Constantly being  in a state suggests the agent has arrived to a myopic local maximum for that eigenpurpose.

In three out of four games (\textsc{Bank Heist}, \textsc{Montezuma's Revenge}, \textsc{Ms. Pacman}) our algorithm discovers options that clearly push the agent to corners and to other relevant parts of the state space, corroborating the intuition that eigenoptions also improve exploration. In  \textsc{Montezuma's Revenge}, the terminal state of the highlighted options even correspond to what are considered good subgoals for the game~\citep{Kulkarni16b}. It is likely that additional subgoals, such as the key, were not found due to our myopic greedy approach. This approach may also explain why our algorithm was ineffective in \textsc{Freeway}. Avoiding cars may be impossible without longer-term planning. A plot depicting the two meaningful options discovered in this game is in the Appendix.
Importantly, the fact that myopic policies are able to navigate to specific locations and stay there also suggests that, as in the tabular case, the proposed approach gives rise to dense intrinsic rewards that are very informative. This is another important constrast between randomly assigned subgoals and our approach. Randomly assigned subgoals do not give rise to such dense rewards. Thus, one can argue that our approach does not only generate useful options but it also gives rise to dense eigenpurposes, making it easier to build the policies associated with them.

It is important to stress that our algorithm was able to discover eigenoptions, \emph{from raw pixels}, similar to those obtained by algorithms that use the RAM state of the game as a feature representation. The RAM state of the game often uses specific bytes to encode important information of the game, such as the position of the player's avatar in the game. Our algorithm had to implicitly learn what were the meaningful parts of the screen. Also, different from previous algorithms, our approach is not constrained by the dimensionality of the state representation nor to binary features. Based on this discussion, we consider our results to be very promising, even though we only depict options that have effect on the initial state of the games. We believe that in a more general setting (\eg using DQN to learn policies) our algorithm has the potential to discover even better options.

\section{Related Work}
Our work was directly inspired by \citet{Kulkarni16a}, the first to propose approximating the SR using a neural network. We use their loss function in a novel architecture. Because we are not directly using the SR for control, we define the SR in terms of states, instead of state-action pairs.  Different from \citet{Kulkarni16a}, our network does not learn a reward model and it does not use an autoencoder to learn a representation of the world. It tries to predict the \emph{next} state the agent will observe. The prediction module we used was introduced by \cite{Oh15}. Because it predicts the next state, it implicitly learns representations that take into consideration the parts of the screen that are under the agent's control. The ability to recognize such features is known as \emph{contingency awareness}, and it is known to have the potential to improve agents' performance~\citep{Bellemare12}. \citet{Kulkarni16a} did suggest the deep SR could be used to find bottleneck states, which are commonly used as subgoals for options, but such an idea was not further explored. Importantly, \citet{Jong08} and \citet{Machado17a} have shown that options that look for bottleneck states can be quite harmful in the learning process. 

The idea of explicitly building hierarchies based on the \emph{learned} latent representation of the state space is due to \citet{Machado17a} and \citet{Vezhnevets17}. \citet{Machado17a} proposed the concept of \emph{eigenoptions}, but limited to the linear function approximation case. \citet{Vezhnevets17} do not explicitly build options with initiation and termination sets. Instead, they learn a hierarchy through an end-to-end learning system that does not allow us to easily retrieve options from it. Finally, \citet{Kompella17} has proposed the use of slow feature analysis (SFA; \citeeg{Wiskott02}) to discover options. \citet{Sprekeler11} has shown that, given a specific choice of adjacency function, PVFs (and consequently the SR) are equivalent to SFA. However, their work is limited to linear function approximation. Our method also differs in how we define the initiation and termination sets. The options they discover look for bottleneck states, which is not our case.

\section{Conclusion}
In this paper we introduced a new algorithm for \emph{eigenoption} discovery in RL. Our algorithm uses the successor representation (SR) to  estimate the model of diffusive information flow in the environment, leveraging the equivalence between proto-value functions (PVFs) and the SR. This approach circumvents several limitations from previous work: (i) it builds increasingly accurate estimates using a constant-cost update-rule; (ii) it naturally deals with stochastic MDPs; (iii) it does not depend on the assumption that the transition matrix is symmetric; and (iv) it does not depend on handcrafted feature representations. The first three items were achieved by simply using the SR instead of the PVFs, while the latter was achieved by using a neural network to estimate the SR.

The proposed framework opens up multiple possibilities for investigation in the future. It would be interesting to evaluate the compositionality of eigenoptions, or how transferable they are between similar environments, such as the different modes of Atari 2600 games~\citep{Machado17b}. 
Finally, now that the fundamental algorithms have been introduced, it would be interesting to investigate whether one can use eigenoptions to accumulate rewards instead of using them for exploration.

\subsubsection*{Acknowledgments}
The authors would like to thank Craig Sherstan and Martha White for feedback on an earlier draft, Kamyar Azizzadenesheli, Marc G. Bellemare and Michael Bowling for useful discussions, and the anonymous reviewers for their feedback and suggestions.

\bibliography{options_sr}
\bibliographystyle{iclr2018_conference}

\clearpage

\section*{Appendix: Supplementary Material}

This supplementary material contains details omitted from the main text due to space constraints. The list of contents is below:
\begin{itemize}
\item A more detailed proof of the theorem in the paper;
\item Empirical results evaluating how the number of episodes used to learn the successor representation impacts the obtained eigenvectors and their corresponding eigenoptions;
\item Evaluation of the reconstruction module (auxiliary task) that learns the latent representation that is used to estimate the successor representation.
\end{itemize}

\subsection*{A more detailed proof of the theorem in the main paper}

\begin{theorem*}
\cite{Stachenfeld14}: Let $0 < \gamma < 1$ s.t. $\Psi = (I - \gamma T)^{-1}$ denotes the matrix encoding the SR, and let $\mathcal{L} = D^{-1/2} (D-W) D^{-1/2}$ denote the matrix corresponding to the normalized Laplacian, both obtained under a uniform random policy. The $i$-th eigenvalue ($\lambda_{\mbox{\tiny{SR}}, i}$) of the SR and the $j$-th eigenvalue ($\lambda_{\mbox{\tiny{PVF}}, j}$) of the normalized Laplacian are related as follows:

$$\lambda_{\mbox{\tiny{PVF}}, j} = \Big[1 - (1 - {\lambda_{\mbox{\tiny{SR}}, i}}^{-1}) \gamma^{-1}\Big]$$
The $i$-th eigenvector (${\bf e}_{\mbox{\tiny{SR}}, i}$) of the SR and the $j$-th eigenvector (${\bf e}_{\mbox{\tiny{PVF}}, j}$) of the normalized Laplacian, where $i + j = n + 1$, with $n$ being the total number of rows (and columns) of matrix $T$, are related as follows:

$${\bf e}_{\mbox{\tiny{PVF}}, j} = (\gamma^{-1} D^{1/2}) {\bf e}_{\mbox{\tiny{SR}}, i}$$
\end{theorem*}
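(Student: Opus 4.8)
The plan is to work entirely at the level of the eigendecomposition, transforming the eigenvalue problem for $\Psi$ into the eigenvalue problem for $\mathcal{L}$ through a short chain of algebraic identities. The starting point is the observation that if $\mathbf{e}_i$ is an eigenvector of $\Psi = (I-\gamma T)^{-1}$ with eigenvalue $\lambda_i$, then it is automatically an eigenvector of the inverse $\Psi^{-1} = I - \gamma T$ with eigenvalue $\lambda_i^{-1}$ (this is legitimate because $\Psi$ is invertible, so no eigenvalue vanishes). This lets me discard the matrix inverse and manipulate the polynomial $I - \gamma T$ directly. First I would rearrange $(I-\gamma T)\mathbf{e}_i = \lambda_i^{-1}\mathbf{e}_i$ to isolate the combination $I - T$: solving for $T\mathbf{e}_i$ gives $T\mathbf{e}_i = \gamma^{-1}(1-\lambda_i^{-1})\mathbf{e}_i$, hence $(I-T)\mathbf{e}_i = [1-(1-\lambda_i^{-1})\gamma^{-1}]\mathbf{e}_i$. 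Reading off the scalar on the right immediately yields the claimed eigenvalue relation $\lambda_{\text{PVF},j} = 1-(1-\lambda_{\text{SR},i}^{-1})\gamma^{-1}$; I would keep the spare factor $\gamma^{-1}$ on both sides to line up with the steps that follow.

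The crux is converting $I - T$ into the normalized Laplacian $\mathcal{L}$, and here I would invoke the two facts special to the uniform random policy on the state-transition graph: the transition matrix factors as $T = D^{-1}W$, so that $I - T = D^{-1}(D-W)$. The key observation, which I expect to be the main obstacle, is recognizing that this random-walk Laplacian is \emph{similar} to the symmetric normalized Laplacian under conjugation by $D^{1/2}$, namely $D^{-1}(D-W) = D^{-1/2}\mathcal{L}D^{1/2}$. This similarity is precisely the source of the $D^{1/2}$ scaling in the eigenvector relation. Substituting it into the relation above and multiplying through on the left by $D^{1/2}$ gives $\mathcal{L}\,(\gamma^{-1}D^{1/2}\mathbf{e}_i) = \lambda_{\text{PVF},j}\,(\gamma^{-1}D^{1/2}\mathbf{e}_i)$, which exhibits $\gamma^{-1}D^{1/2}\mathbf{e}_{\text{SR},i}$ as an eigenvector of $\mathcal{L}$ with eigenvalue $\lambda_{\text{PVF},j}$, establishing $\mathbf{e}_{\text{PVF},j} = (\gamma^{-1}D^{1/2})\mathbf{e}_{\text{SR},i}$.

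Finally I would pin down the index correspondence $i+j = n+1$. Since the map $f(\lambda) = 1-(1-\lambda^{-1})\gamma^{-1}$ has derivative $f'(\lambda) = -\gamma^{-1}\lambda^{-2} < 0$, it is strictly decreasing, so the convention of ranking SR eigenvalues from largest to smallest is equivalent to ranking the PVF eigenvalues from smallest to largest. Matching the smoothest PVF eigenvector (smallest $\lambda_{\text{PVF}}$) to the largest SR eigenvalue under these reversed orderings forces the pairing $i+j=n+1$. The one subtlety worth flagging is that the $D^{1/2}$ rescaling preserves eigenvector direction only up to the entrywise weights in $D$: as the paper notes, when the action set has constant size across states, $D$ is a scalar multiple of the identity and directions are preserved exactly, so in the RL settings of interest the two families of eigenvectors coincide as directions.
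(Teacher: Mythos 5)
Your proof is correct and follows essentially the same route as the paper's: invert the eigenvalue equation for $\Psi$, rearrange to an eigenvalue equation for $I-T$, use $T = D^{-1}W$ under the uniform random policy, and conjugate by $D^{1/2}$ to recover $\mathcal{L}$. Your packaging of the last step as the similarity $D^{-1}(D-W) = D^{-1/2}\mathcal{L}D^{1/2}$, and your monotonicity argument ($f'(\lambda) = -\gamma^{-1}\lambda^{-2} < 0$) for the index reversal $i+j=n+1$, are just cleaner statements of what the paper does inline.
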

\begin{proof}
This proof is more detailed than the one presented in the main paper. Let $\lambda_i$, ${\bf e}_i$ denote the $i$-th eigenvalue and eigenvector of the SR. Using the fact that the SR is known to converge, in the limit, to $(I - \gamma T)^{-1}$ (through the Neumann series), we have:
\begin{eqnarray}
(I - \gamma T)^{-1} {\bf e}_i                            &=&      \lambda_i {\bf e}_i \nonumber \\
(I - \gamma T) (I - \gamma T)^{-1} {\bf e}_i             &=&      \lambda_i (I - \gamma T) {\bf e}_i \nonumber \\
{\bf e}_i                                                &=&      \lambda_i (I - \gamma T) {\bf e}_i \nonumber \\
(I - \gamma T) {\bf e}_i                                 &=&      \lambda_i^{-1} {\bf e}_i \nonumber \\
(I - \gamma T) \gamma^{-1} {\bf e}_i                     &=&      \lambda_i^{-1} \gamma^{-1} {\bf e}_i \nonumber \\
\gamma^{-1}{\bf e}_i - T {\bf e}_i                          &=&      \lambda_i^{-1} \gamma^{-1} {\bf e}_i \nonumber \\
T {\bf e}_i                                              &=&      \gamma^{-1} {\bf e}_i - \lambda_i^{-1} \gamma^{-1} {\bf e}_i \nonumber \\
                                                         &=&      (1 - \lambda_i^{-1}) \gamma^{-1} {\bf e}_i \nonumber \\
I{\bf e}_i - T {\bf e}_i                                    &=&      I{\bf e}_i - (1 - \lambda_i^{-1}) \gamma^{-1} {\bf e}_i \nonumber \\
(I - T) {\bf e}_i                                        &=&      [\gamma - (1 - \lambda_i^{-1})] \gamma^{-1} {\bf e}_i \nonumber \\
(I - T) \gamma^{-1} {\bf e}_i                            &=&      [1 - (1 - \lambda_i^{-1}) \gamma^{-1}] \gamma^{-1} {\bf e}_i \nonumber \\
                                                         & &      \nonumber \\
(I - T) \gamma^{-1} {\bf e}_i                            &=&      \lambda_j' \gamma^{-1} {\bf e}_i \nonumber \\
(I - D^{-1}W) \gamma^{-1} {\bf e}_i                      &=&      \lambda_j' \gamma^{-1} {\bf e}_i \nonumber \\
(D^{-1}(D - W)) \gamma^{-1} {\bf e}_i                    &=&      \lambda_j' \gamma^{-1} {\bf e}_i \nonumber \\
D^{1/2}(D^{-1}(D - W)) \gamma^{-1} {\bf e}_i             &=&      \lambda_j' \gamma^{-1} D^{1/2} {\bf e}_i \nonumber\\
D^{-1/2}(D - W) \gamma^{-1} {\bf e}_i                    &=&      \lambda_j' \gamma^{-1} D^{1/2} {\bf e}_i \nonumber\\
D^{-1/2}(D - W)D^{-1/2} D^{1/2} \gamma^{-1} {\bf e}_i    &=&      \lambda_j' \gamma^{-1} D^{1/2} {\bf e}_i \nonumber\\
\mathcal{L} D^{1/2} \gamma^{-1} {\bf e}_i                &=&      \lambda_j' \gamma^{-1} D^{1/2} {\bf e}_i \nonumber \qedhere
\end{eqnarray}
\end{proof}
\clearpage

\subsection*{The impact the number of episodes has in learning the SR and the eigenoptions}

In Section~\ref{sec:experiments_tabular} we briefly discussed the impact of estimating the successor representation from samples instead of assuming the agent has access to the normalized Laplacian. It makes much more sense to use the successor representation as the DIF model in the environment if we can estimate it quickly.  The diffusion time was the main evidence we used in Section~\ref{sec:experiments_tabular} to support our claim that early estimates of the successor representation are useful for eigenoption discovery. In order to be concise we did not actually plot the eigenvectors of the estimates of the successor representation at different moments, nor explicitly compared them to proto-value functions or to  the eigenvectors of the matrix $(I - \gamma T)^{-1}$. We do so in this section.

Figures~\ref{fig:ev_1st_eigen}--\ref{fig:ev_4th_eigen} depict the first four eigenvectors of the successor representation in the Rooms domain, after being learned for different number of episodes (episodes were $100$ time steps long, $\eta = 0.1$, $\gamma = 0.9$). We also depict the corresponding eigenvectors of the $(I - \gamma T)^{-1}$ matrix\footnote{Recall $(I - \gamma T)^{-1}$ is the matrix to which the successor representation converges to in the limit.}, and of the normalized Laplacian \citep{Machado17a}. Because the eigenvectors orientation (sign) is often arbitrary in an eigendecomposition, we matched their orientation to ease visualization.

Overall, after $500$ episodes we already have an almost perfect estimate of the first eigenvectors in the environment; while $100$ episodes seem to not be enough to accurately learn the DIF model in all rooms. However, learning the successor representation for $100$ episodes seems to be enough to generate eigenoptions that reduce the agent's diffusion time, as we show in Figure~\ref{fig:diffusion_time}. We can better discuss this behavior by looking at Figures~\ref{fig:ev_1st_policy}--\ref{fig:ev_4th_policy}, which depict the options generated by the obtained eigenvectors.

With the exception of the options generated after learning the successor representation for $100$ episodes, all the eigenoptions obtained from estimates of the successor representation already move the agent towards the ``correct'' room(s). Naturally, they do not always hit the corners, but the general structure of the policies can be clearly seen. We also observe that the eigenoptions obtained from proto-value functions are shifted one tile from the corners. As discussed in the main paper, this is a consequence of how \citeg{Machado17a} dealt with corners. They did not model self-loops in the MDP, despite the fact that the agent can be in the same state for two consecutive steps. The successor representation captures this naturally. Finally, we use Figure~\ref{fig:1st_option_100} to speculate why the options learned after 100 episodes are capable of reducing the agent's diffusion time. The first eigenoption learned by the agent moves it to the parts of the state space it has never been to, this may be the reason that the combination of these options is so effective. It also suggests that incremental methods for option discovery and exploration are a promising path for future work.

\subsection*{Using eigenoptions to accumulate reward in the environment}

In Section~\ref{sec:experiments_tabular} we also evaluated the agent's ability to accumulate reward after the eigenoptions have been learned. We further analyze this topic here. As in Section~\ref{sec:experiments_tabular}, the agent learned, off-policy, the greedy policy over primitive actions (target policy) while following the uniform random policy over actions and eigenoptions (behavior policy). We used Q-learning~\citep{Watkins92} in our experiments -- parameters $\lambda = 0$, $\alpha = 0.1$, and $\gamma = 0.9$. Episodes were $100$ time steps long. Figures~\ref{fig:results_scenario_1}--\ref{fig:results_scenario_4} summarize the obtained results comparing the performance of our approach to regular Q-learning over primitive actions in four different environments (\emph{c.f.} Figure~\ref{fig:scenarios}). We evaluate the agent's performance when using eigenoptions extracted from estimates of the SR obtained after $100$, $500$, and $1000$ episodes, as well eigenoptions obtained from the true SR, \emph{i.e.}, $(I - \gamma T)^{-1}$. The reported results are the average over $24$ independent runs when learning the SR, with each one of these runs encoding $100$ runs evaluating Q-Learning. The options were added following the sorting provided by the eigenvalues. For example, \emph{4 options} denotes an agent with the action set used in the behavior policy being composed of the four primitive actions and the four eigenoptions generated by the top 2 eigenvalues (both directions are being used).

We can see that eigenoptions are not only capable of reducing the diffusion time in the environment but of also improving the agent's control performance. They do so by increasing the likelihood that the agent will cover a larger part of the state space given the same amount of time. Interestingly, few eigenoptions seem to be enough for the agent. Moreover, although rough estimates of the SR seem to be enough to improve the agent's performance (\emph{e.g.}, estimates obtained after only $100$ episodes). More accurate predictions of the SR are able to further improve the agent's performance, mainly when dozens of eigenoptions are being used. The first eigenoptions to be accurately estimated are those with larger eigenvalues, which are the ones we add first.

\subsection*{Evaluation of the reconstruction task}

In Section~\ref{fig:experiments_atari} we analyzed the eigenoptions we are able to discover in four games of the Arcade Learning Environment. We did not discuss the performance of the proposed network in the auxiliary tasks we defined. We do so here. Figures~\ref{fig:reconstruction_bank_heist}--\ref{fig:reconstruction_ms_pacman} depict a comparison between the target screen that should be predicted and the network's actual prediction for ten time steps in each game. We can see that it accurately predicts the general structure of the environment and it is able to keep track of most moving sprites on the screen. The prediction is quite noisy, different from \citeg{Oh15} result. Still, it is interesting to see how even an underperforming network is able to learn useful representations for our algorithm. It is likely better representations would result in better options.

\subsection*{Eigenoptions discovered in \textsc{Freeway}}
Figure~\ref{fig:freeway} depicts the two meaningful eigenoptions we were able to discover in the game \textsc{Freeway}. As in Figure~\ref{fig:atari}, each option is represented by the normalized count of the avatar's position on the screen in a trajectory. The trajectories generated by different options are represented by different colors and the color's intensity at a given location represents how often the agent was at that location.

\vspace{4cm}

\begin{figure}[h]
  \centering
    \includegraphics[width=0.5\textwidth]{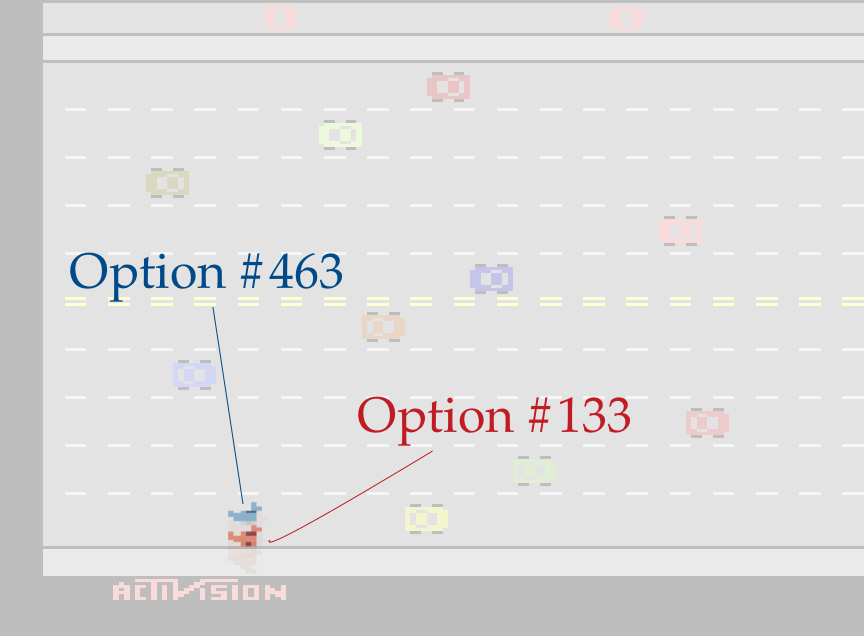}
    \caption{Eigenoptions discovered in the game \textsc{Freeway}.}
    \label{fig:freeway}
\end{figure}

\clearpage

\begin{figure}
    \centering
    \begin{subfigure}[t]{0.19\textwidth}
        \includegraphics[width=\textwidth]{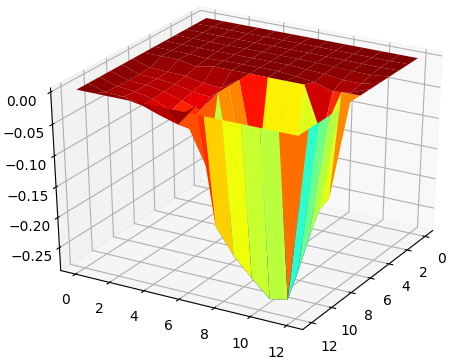}
        \caption{100 episodes}
    \end{subfigure}
    \begin{subfigure}[t]{0.19\textwidth}
        \includegraphics[width=\textwidth]{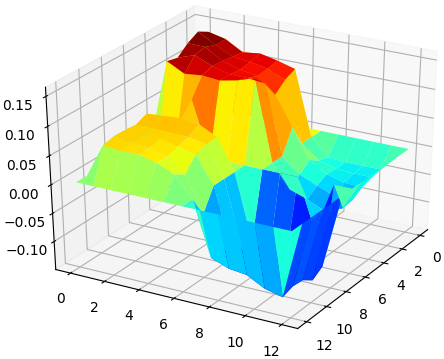}
        \caption{500 episodes}
    \end{subfigure}
    \begin{subfigure}[t]{0.19\textwidth}
        \includegraphics[width=\textwidth]{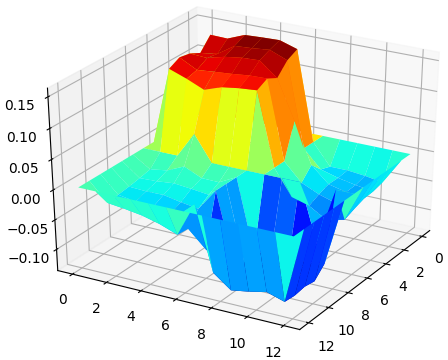}
        \caption{1,000 episodes}
    \end{subfigure}
    \begin{subfigure}[t]{0.19\textwidth}
        \includegraphics[width=\textwidth]{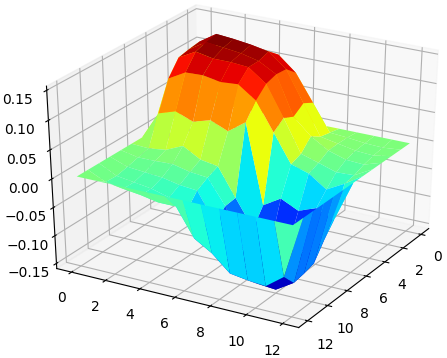}
        \caption{PVF}
    \end{subfigure}
    \begin{subfigure}[t]{0.19\textwidth}
        \includegraphics[width=\textwidth]{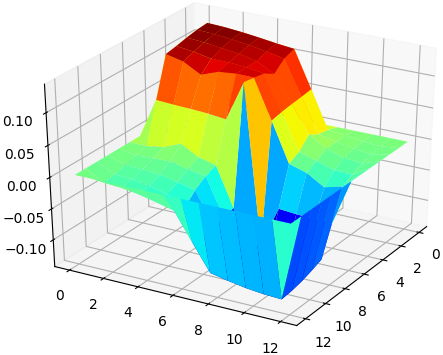}
        \caption{$(I - \gamma T)^{-1}$}
    \end{subfigure}
\caption{Evolution of the \textbf{first eigenvector} being estimated by the SR and baselines.}\label{fig:ev_1st_eigen}
\end{figure}

\begin{figure}
    \centering
    \begin{subfigure}[t]{0.19\textwidth}
        \includegraphics[width=\textwidth]{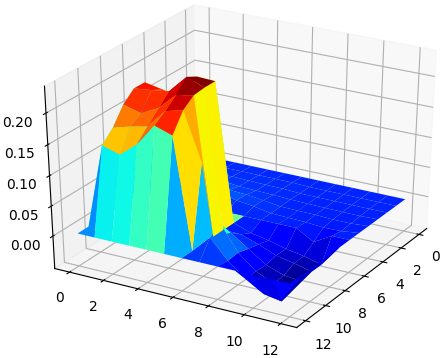}
        \caption{100 episodes}
    \end{subfigure}
    \begin{subfigure}[t]{0.19\textwidth}
        \includegraphics[width=\textwidth]{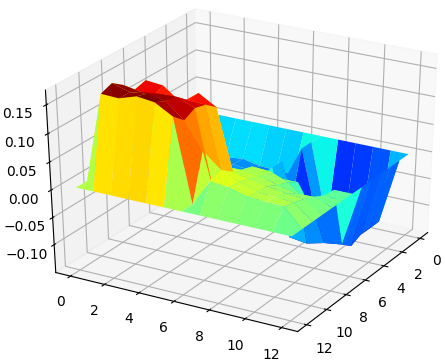}
        \caption{500 episodes}
    \end{subfigure}
    \begin{subfigure}[t]{0.19\textwidth}
        \includegraphics[width=\textwidth]{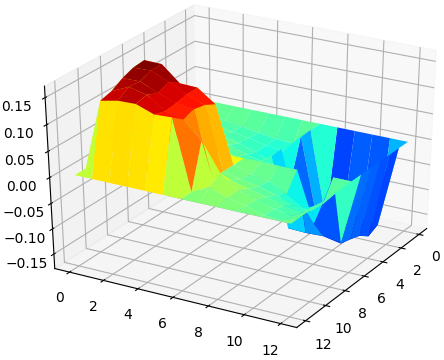}
        \caption{1,000 episodes}
    \end{subfigure}
    \begin{subfigure}[t]{0.19\textwidth}
        \includegraphics[width=\textwidth]{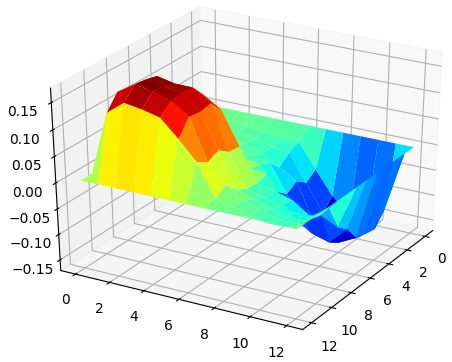}
        \caption{PVF}
    \end{subfigure}
    \begin{subfigure}[t]{0.19\textwidth}
        \includegraphics[width=\textwidth]{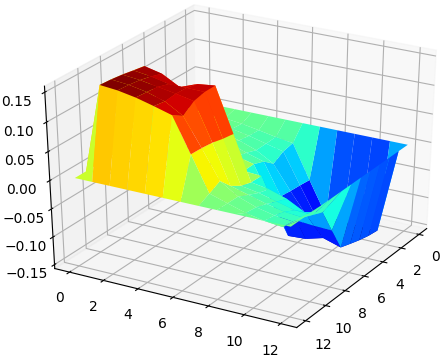}
        \caption{$(I - \gamma T)^{-1}$}
    \end{subfigure}
\caption{Evolution of the \textbf{second eigenvector} being estimated by the SR and baselines.}\label{fig:ev_2nd_eigen}
\end{figure}

\begin{figure}
    \centering
    \begin{subfigure}[t]{0.19\textwidth}
        \includegraphics[width=\textwidth]{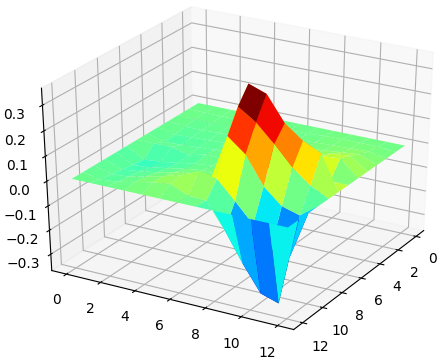}
        \caption{100 episodes}
    \end{subfigure}
    \begin{subfigure}[t]{0.19\textwidth}
        \includegraphics[width=\textwidth]{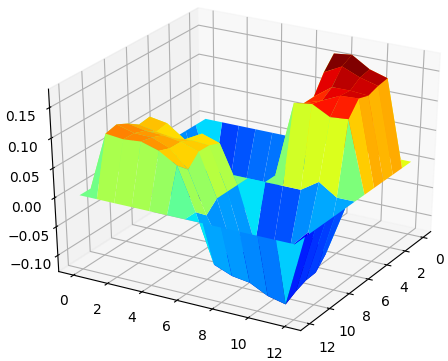}
        \caption{500 episodes}
    \end{subfigure}
    \begin{subfigure}[t]{0.19\textwidth}
        \includegraphics[width=\textwidth]{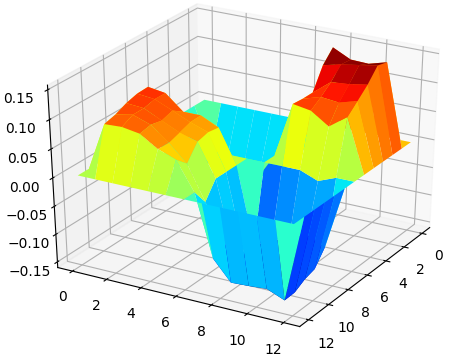}
        \caption{1,000 episodes}
    \end{subfigure}
    \begin{subfigure}[t]{0.19\textwidth}
        \includegraphics[width=\textwidth]{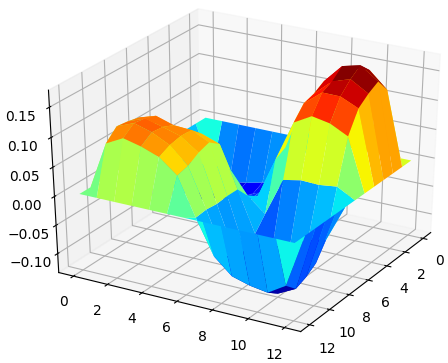}
        \caption{PVF}
    \end{subfigure}
    \begin{subfigure}[t]{0.19\textwidth}
        \includegraphics[width=\textwidth]{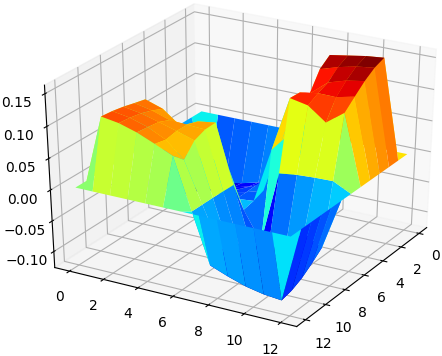}
        \caption{$(I - \gamma T)^{-1}$}
    \end{subfigure}
\caption{Evolution of the \textbf{third eigenvector} being estimated by the SR and baselines.}\label{fig:ev_3rd_eigen}
\end{figure}

\begin{figure}
    \centering
    \begin{subfigure}[t]{0.19\textwidth}
        \includegraphics[width=\textwidth]{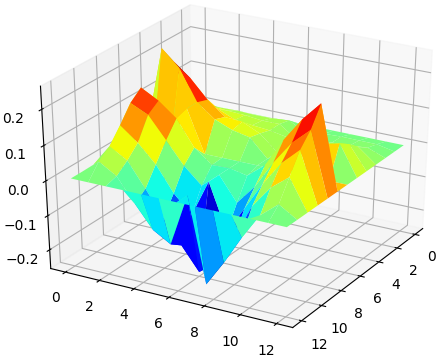}
        \caption{100 episodes}
    \end{subfigure}
    \begin{subfigure}[t]{0.19\textwidth}
        \includegraphics[width=\textwidth]{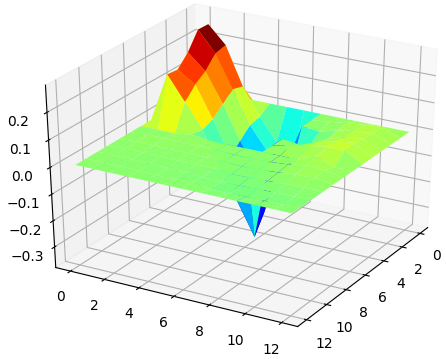}
        \caption{500 episodes}
    \end{subfigure}
    \begin{subfigure}[t]{0.19\textwidth}
        \includegraphics[width=\textwidth]{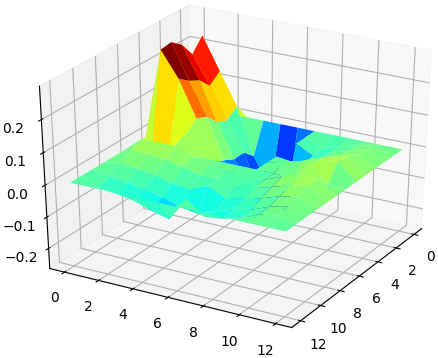}
        \caption{1,000 episodes}
    \end{subfigure}
    \begin{subfigure}[t]{0.19\textwidth}
        \includegraphics[width=\textwidth]{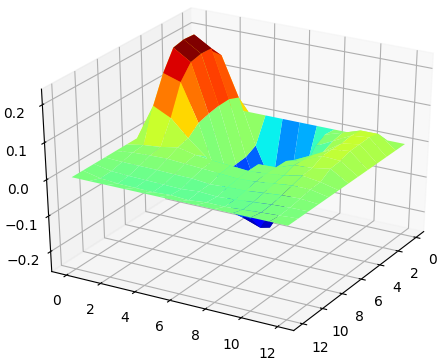}
        \caption{PVF}
    \end{subfigure}
    \begin{subfigure}[t]{0.19\textwidth}
        \includegraphics[width=\textwidth]{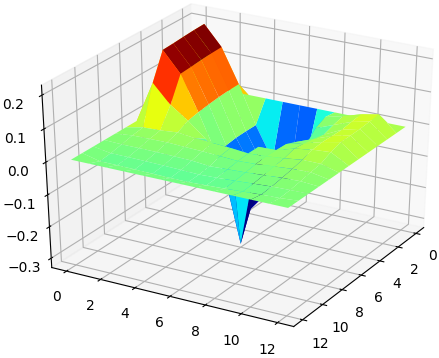}
        \caption{$(I - \gamma T)^{-1}$}
    \end{subfigure}
\caption{Evolution of the \textbf{fourth eigenvector} being estimated by the SR and baselines.}\label{fig:ev_4th_eigen}
\end{figure}

\clearpage

\begin{figure}
    \centering
    \begin{subfigure}[t]{0.19\textwidth}
        \includegraphics[width=\textwidth]{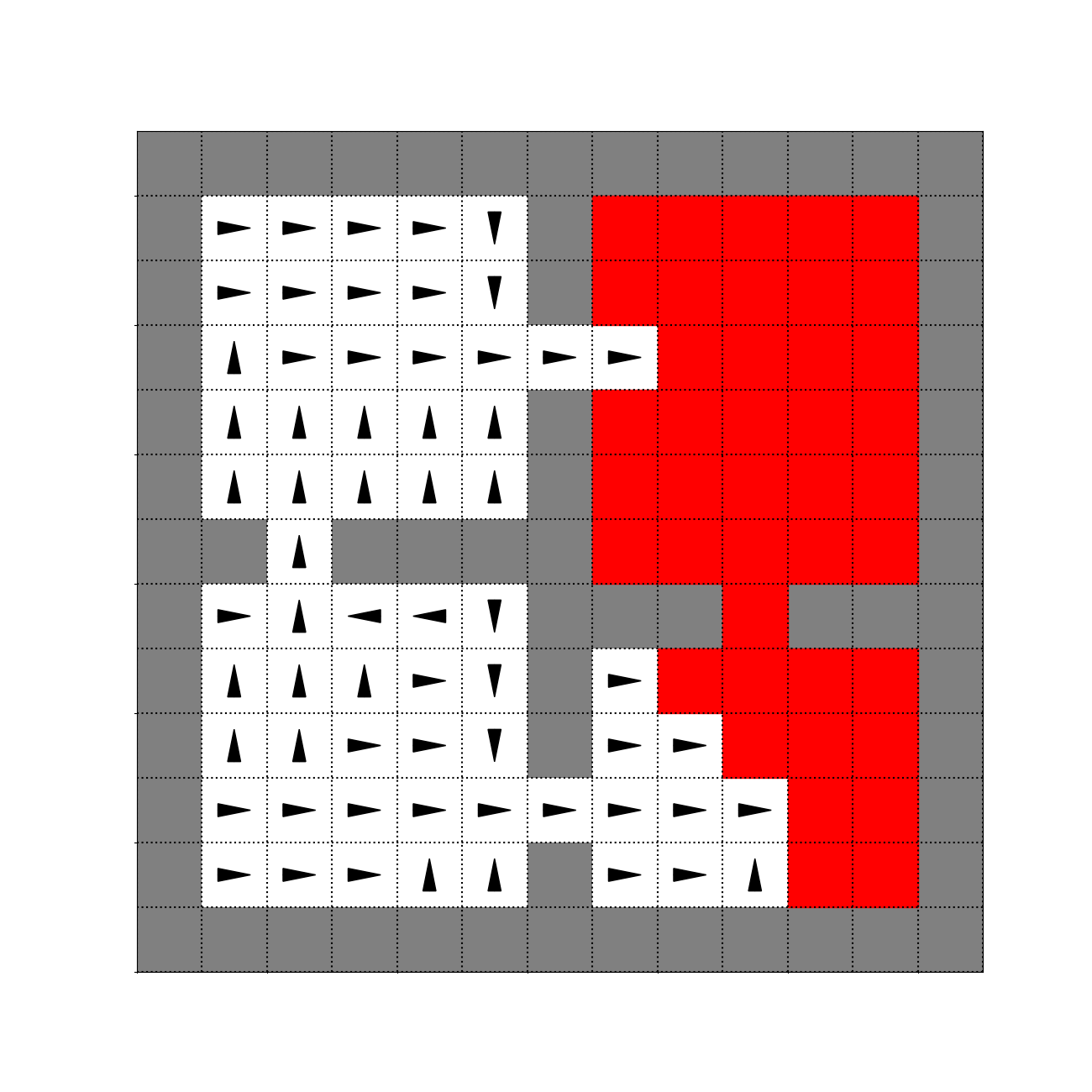}
        \caption{100 episodes}\label{fig:1st_option_100}
    \end{subfigure}
    \begin{subfigure}[t]{0.19\textwidth}
        \includegraphics[width=\textwidth]{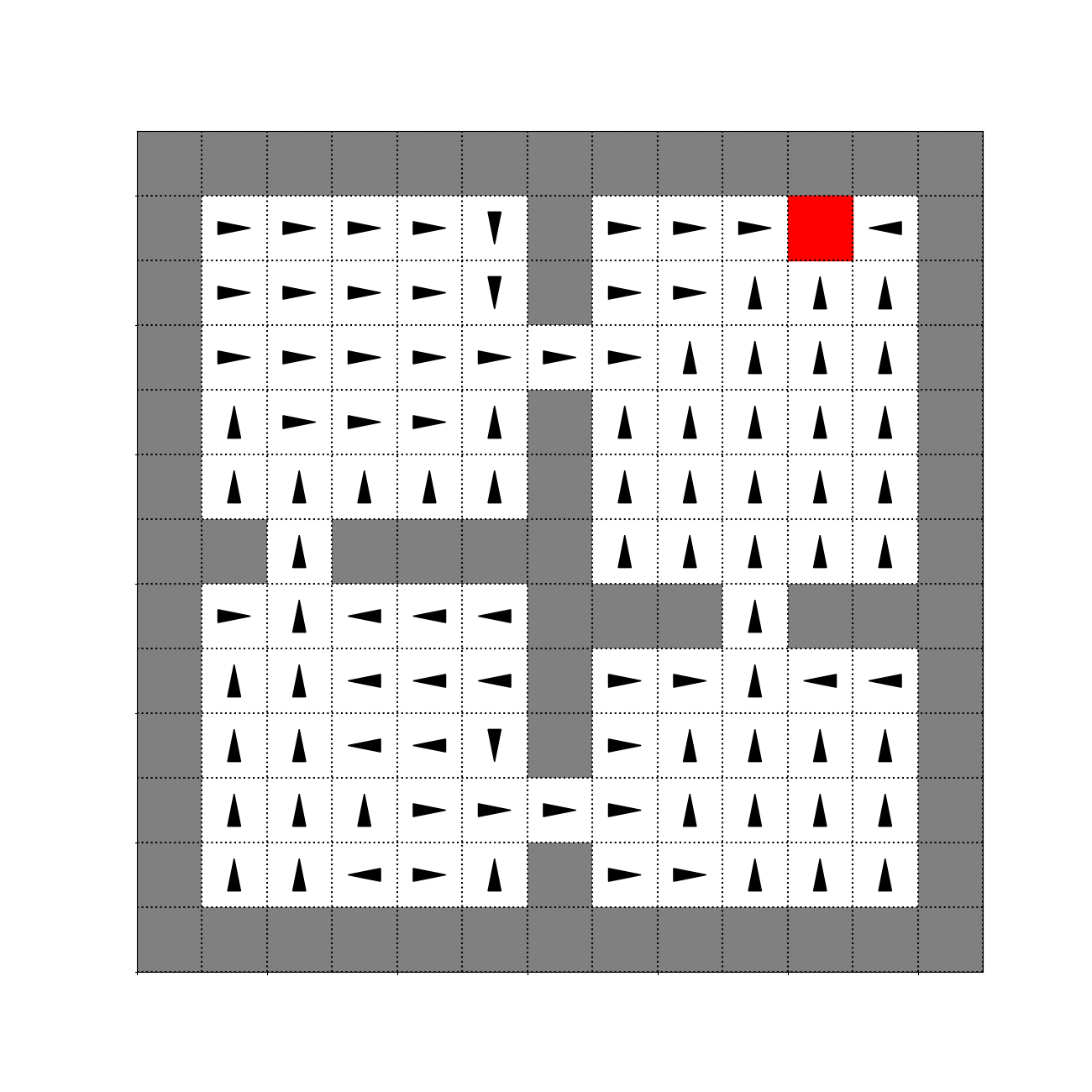}
        \caption{500 episodes}
    \end{subfigure}
    \begin{subfigure}[t]{0.19\textwidth}
        \includegraphics[width=\textwidth]{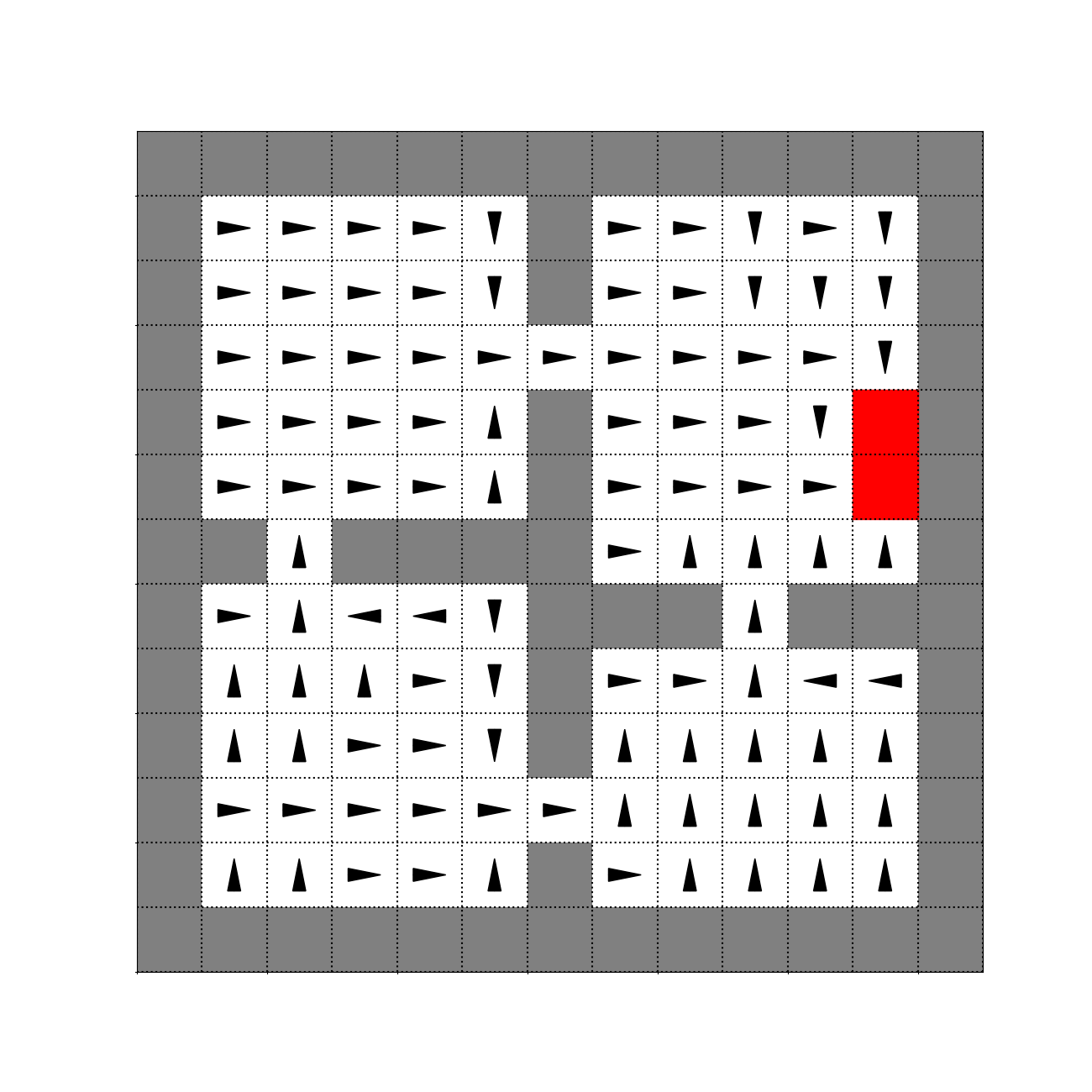}
        \caption{1,000 episodes}
    \end{subfigure}
    \begin{subfigure}[t]{0.19\textwidth}
        \includegraphics[width=\textwidth]{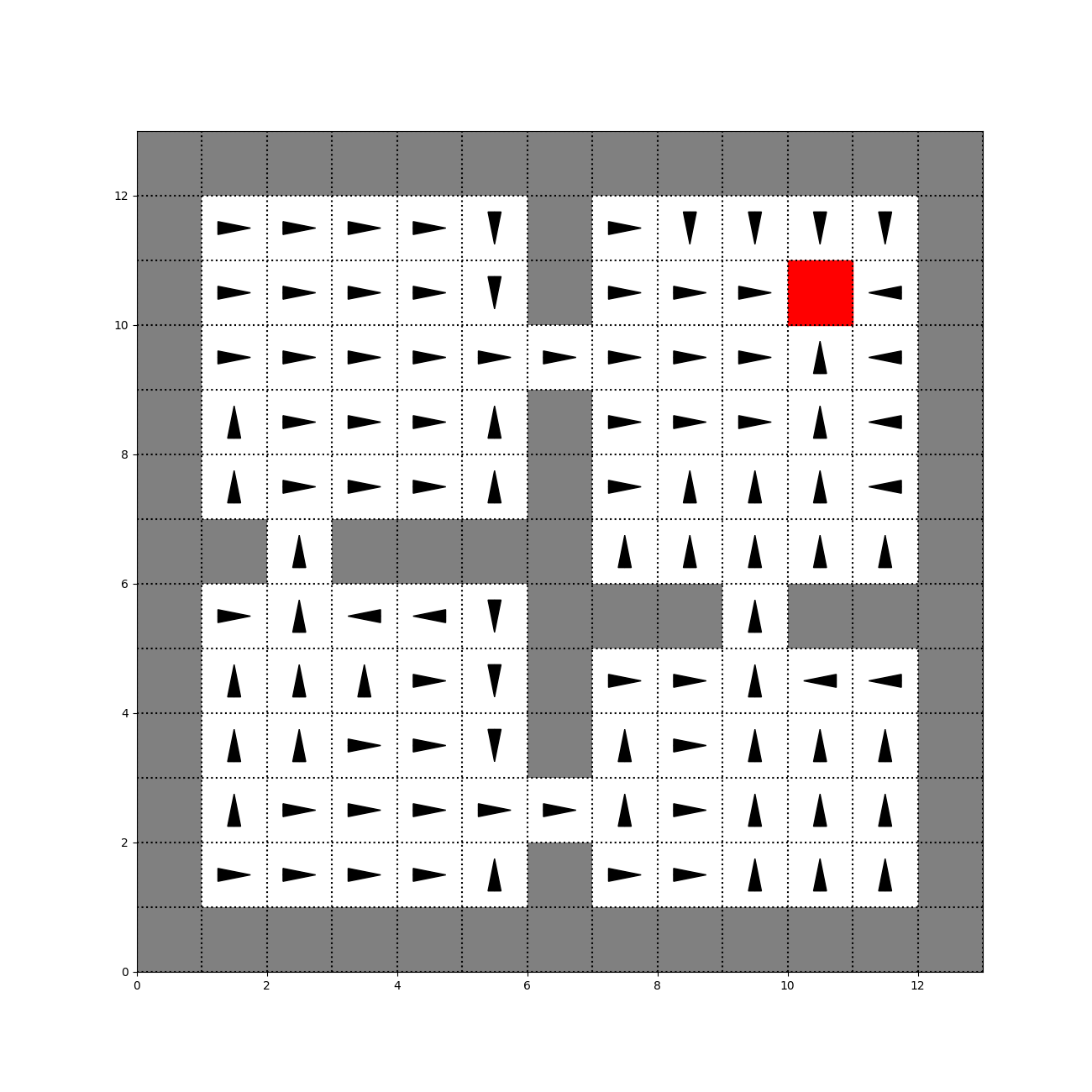}
        \caption{PVF}
    \end{subfigure}
    \begin{subfigure}[t]{0.19\textwidth}
        \includegraphics[width=\textwidth]{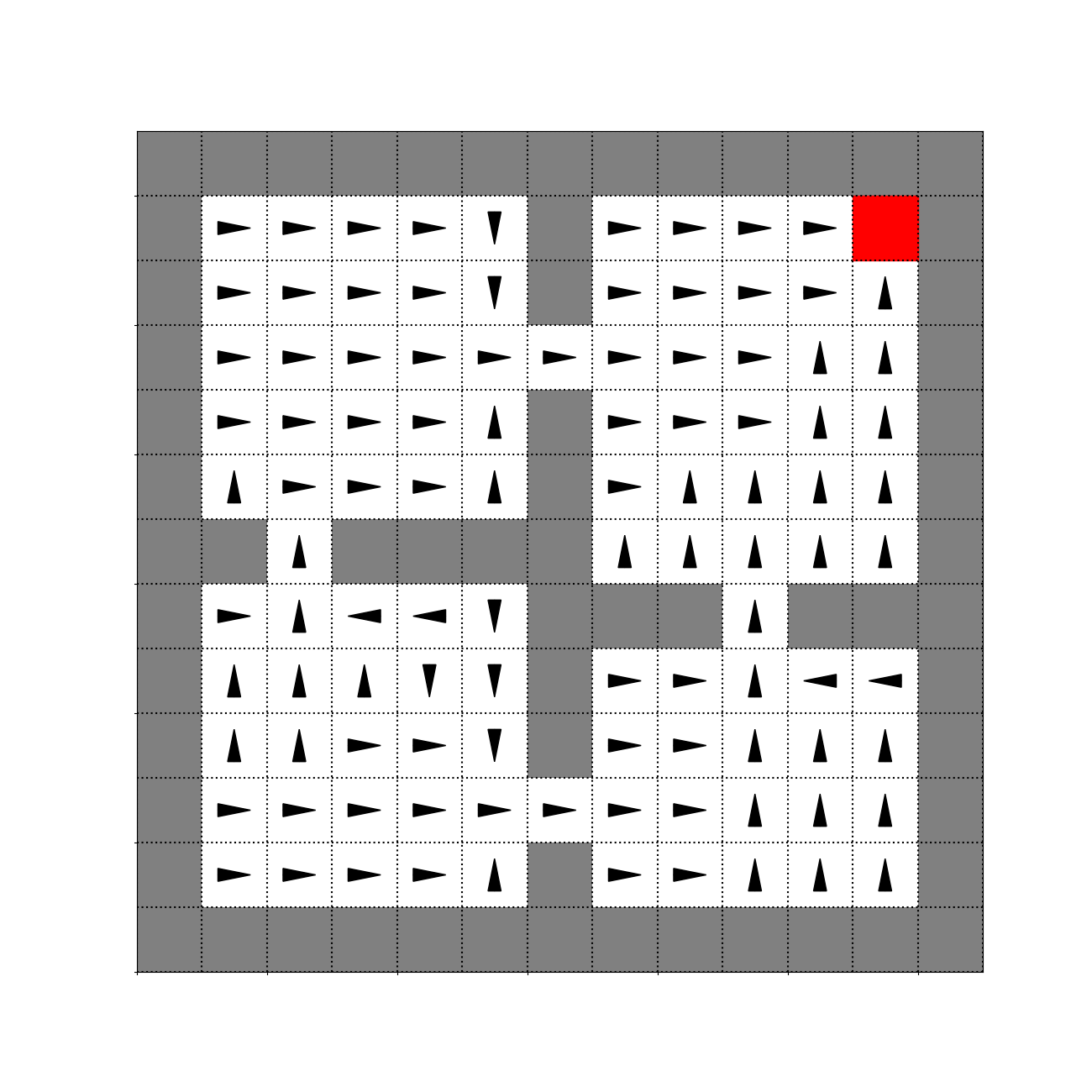}
        \caption{$(I - \gamma T)^{-1}$}
    \end{subfigure}
\caption{Evolution of the \textbf{first eigenoption} being estimated by the SR and baselines.}\label{fig:ev_1st_policy}
\end{figure}

\begin{figure}
    \centering
    \begin{subfigure}[t]{0.19\textwidth}
        \includegraphics[width=\textwidth]{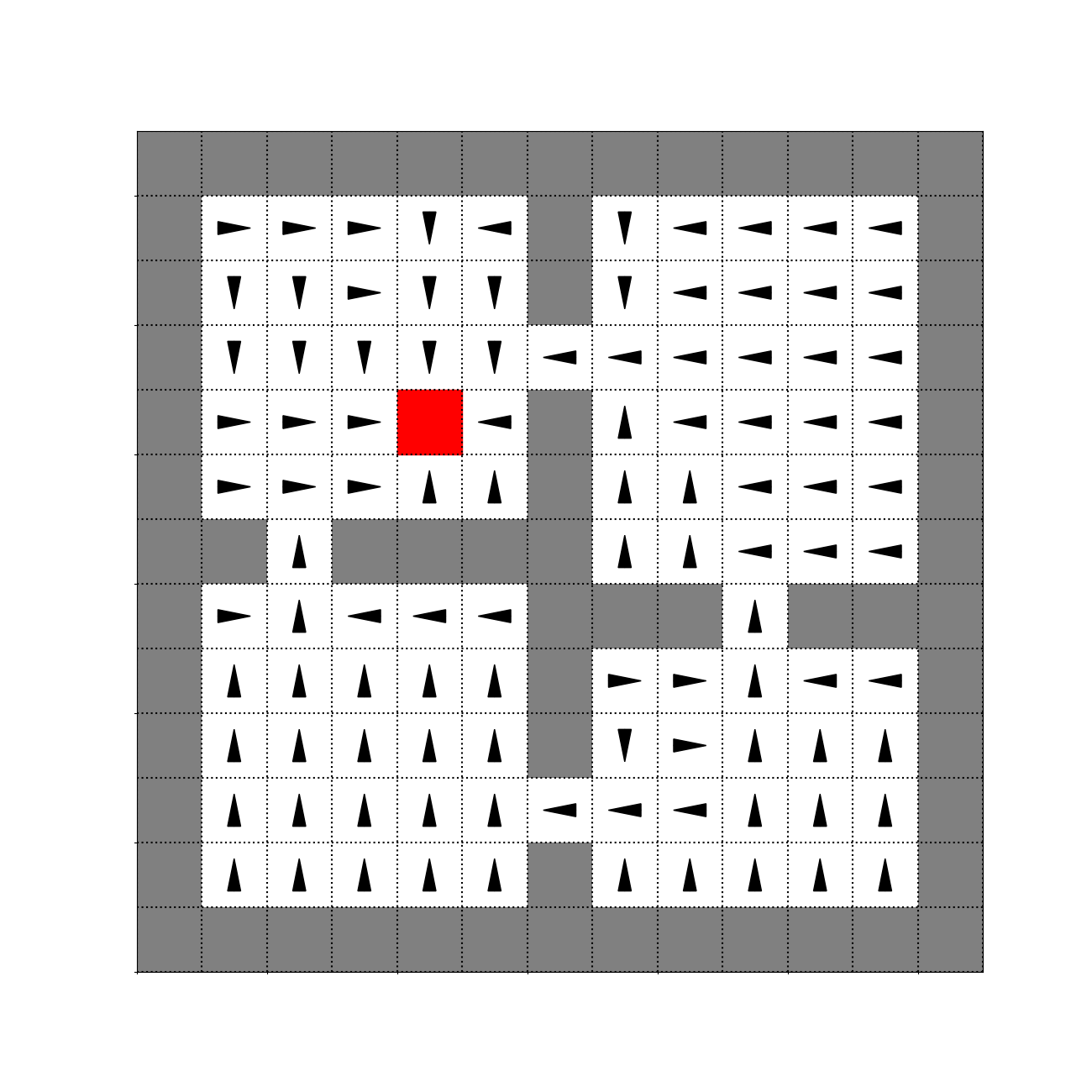}
        \caption{100 episodes}
    \end{subfigure}
    \begin{subfigure}[t]{0.19\textwidth}
        \includegraphics[width=\textwidth]{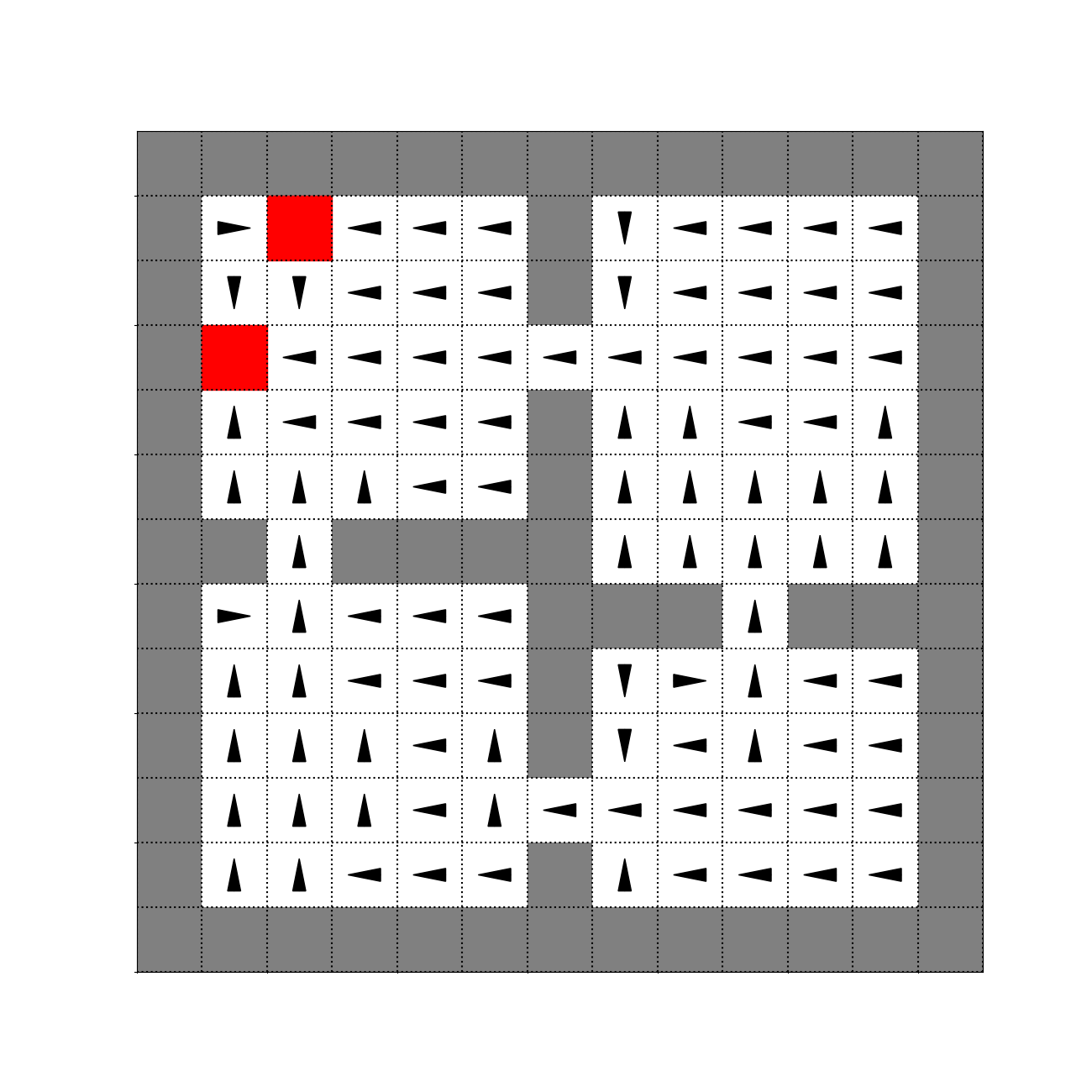}
        \caption{500 episodes}
    \end{subfigure}
    \begin{subfigure}[t]{0.19\textwidth}
        \includegraphics[width=\textwidth]{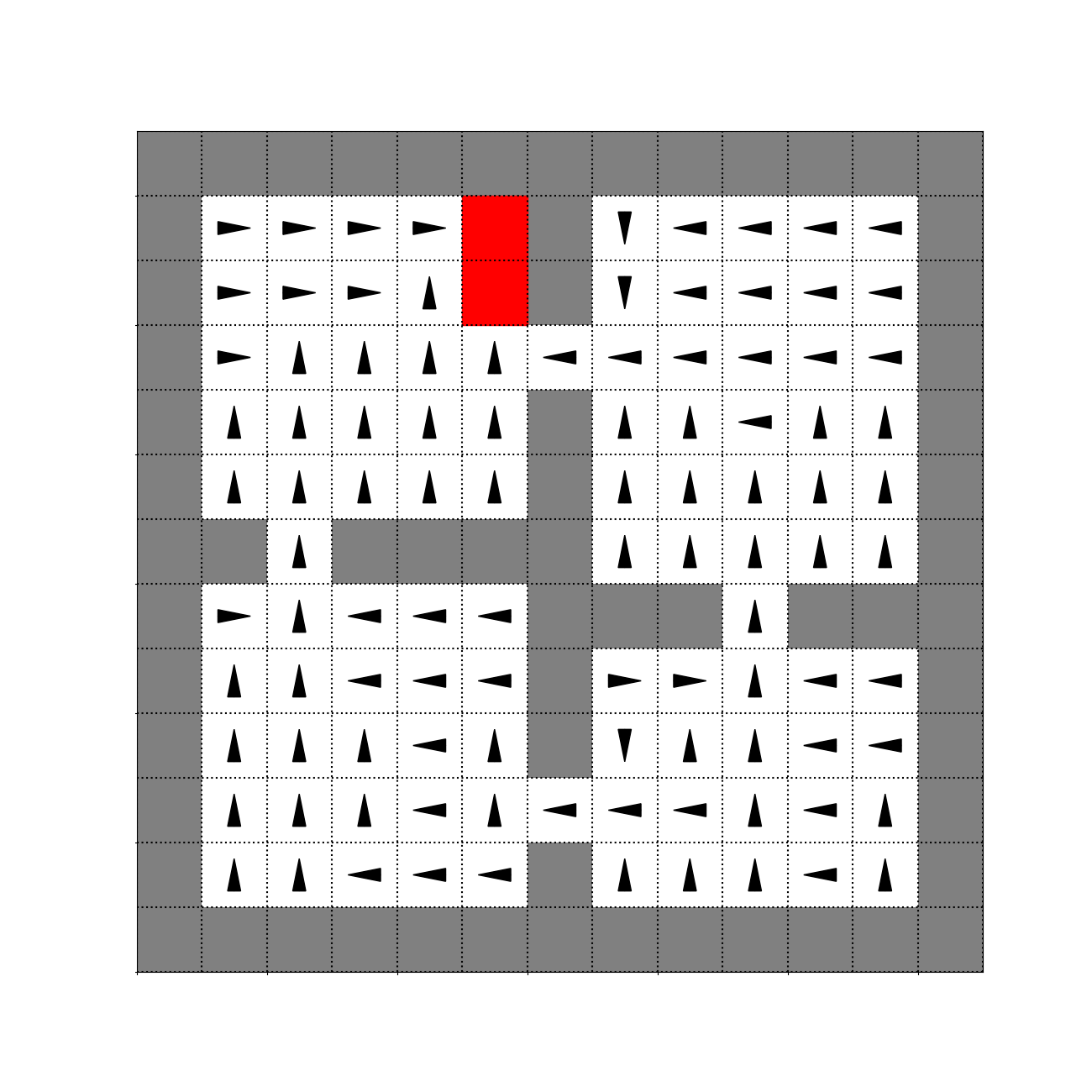}
        \caption{1,000 episodes}
    \end{subfigure}
    \begin{subfigure}[t]{0.19\textwidth}
        \includegraphics[width=\textwidth]{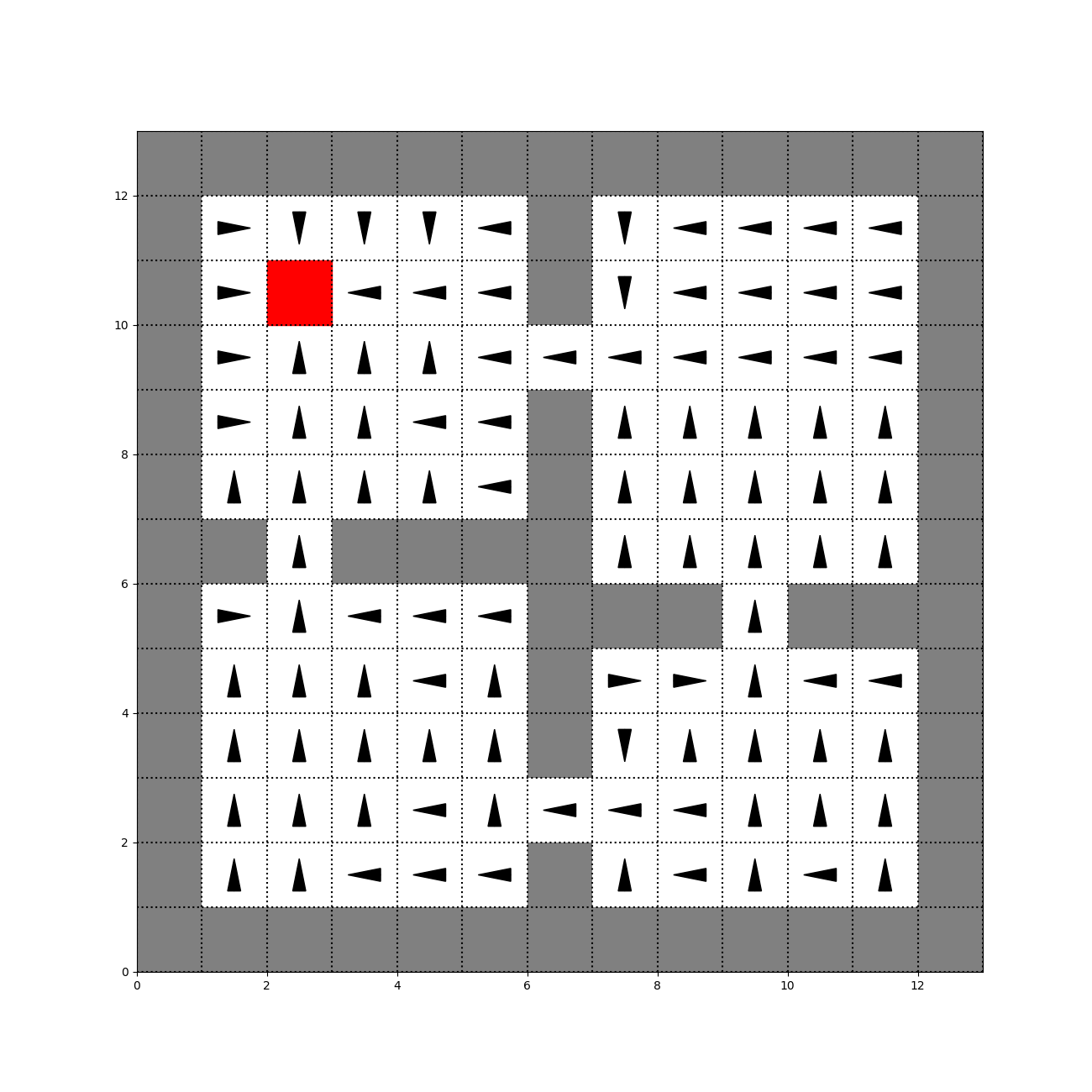}
        \caption{PVF}
    \end{subfigure}
    \begin{subfigure}[t]{0.19\textwidth}
        \includegraphics[width=\textwidth]{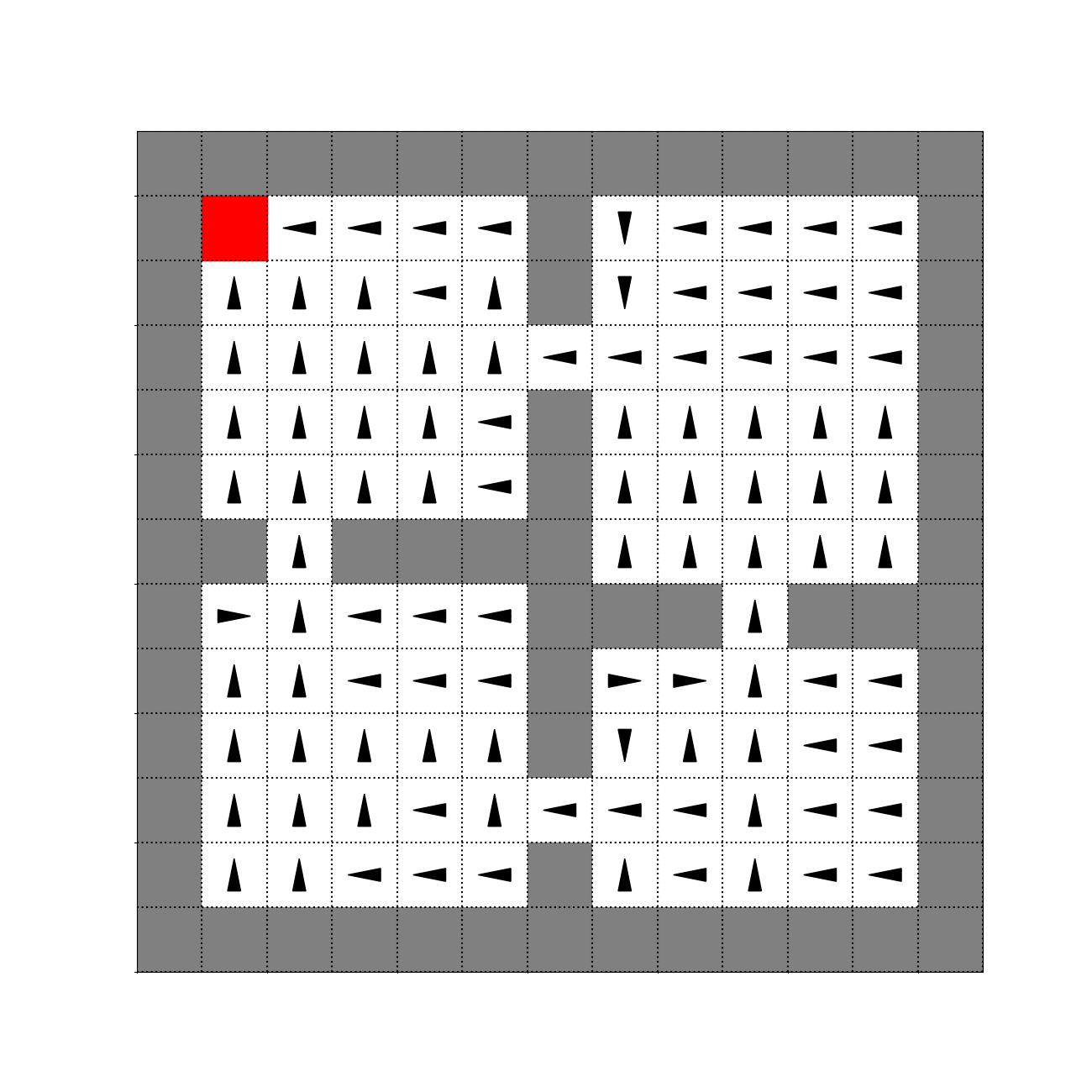}
        \caption{$(I - \gamma T)^{-1}$}
    \end{subfigure}
\caption{Evolution of the \textbf{second eigenoption} being estimated by the SR and baselines.}\label{fig:ev_2nd_policy}
\end{figure}

\begin{figure}
    \centering
    \begin{subfigure}[t]{0.19\textwidth}
        \includegraphics[width=\textwidth]{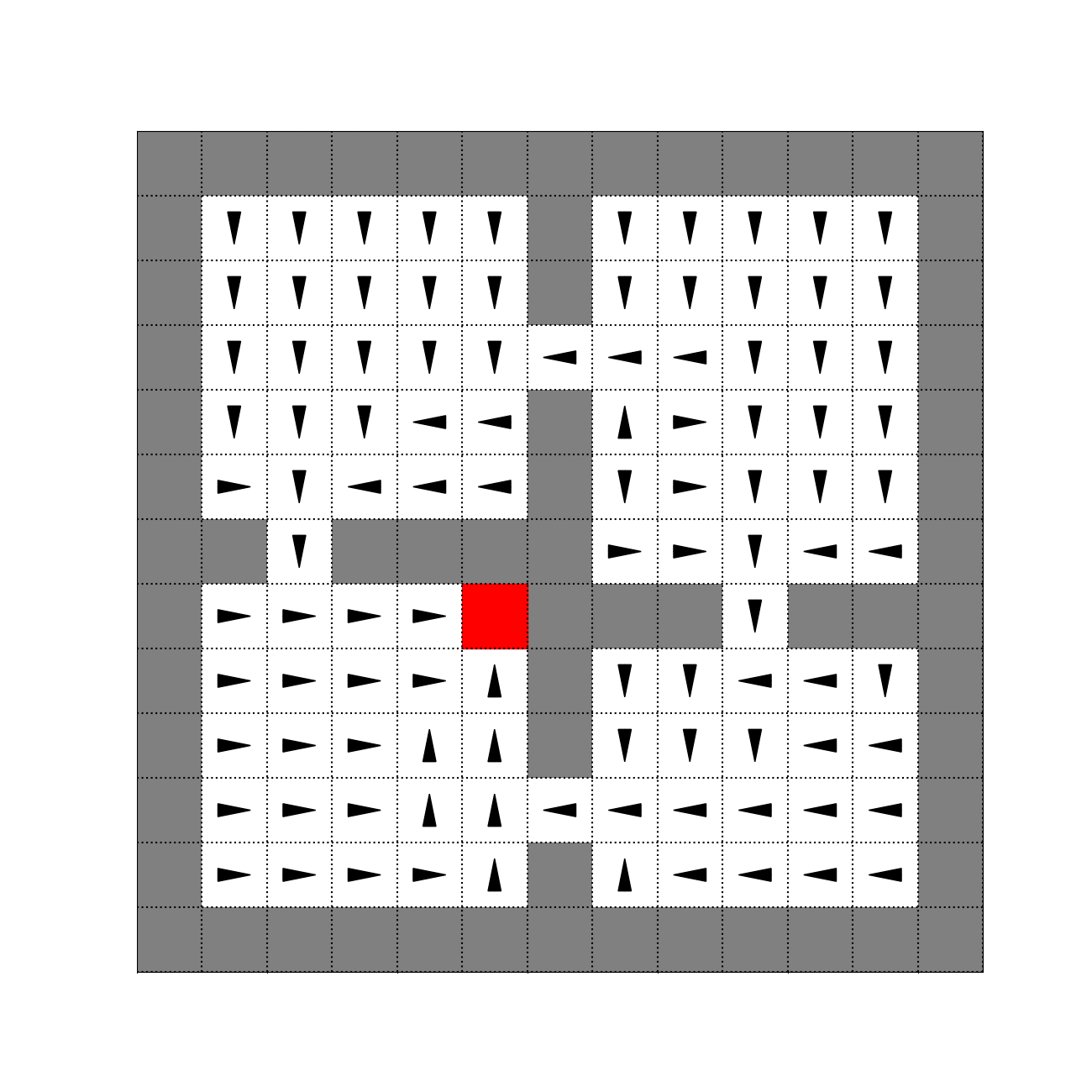}
        \caption{100 episodes}
    \end{subfigure}
    \begin{subfigure}[t]{0.19\textwidth}
        \includegraphics[width=\textwidth]{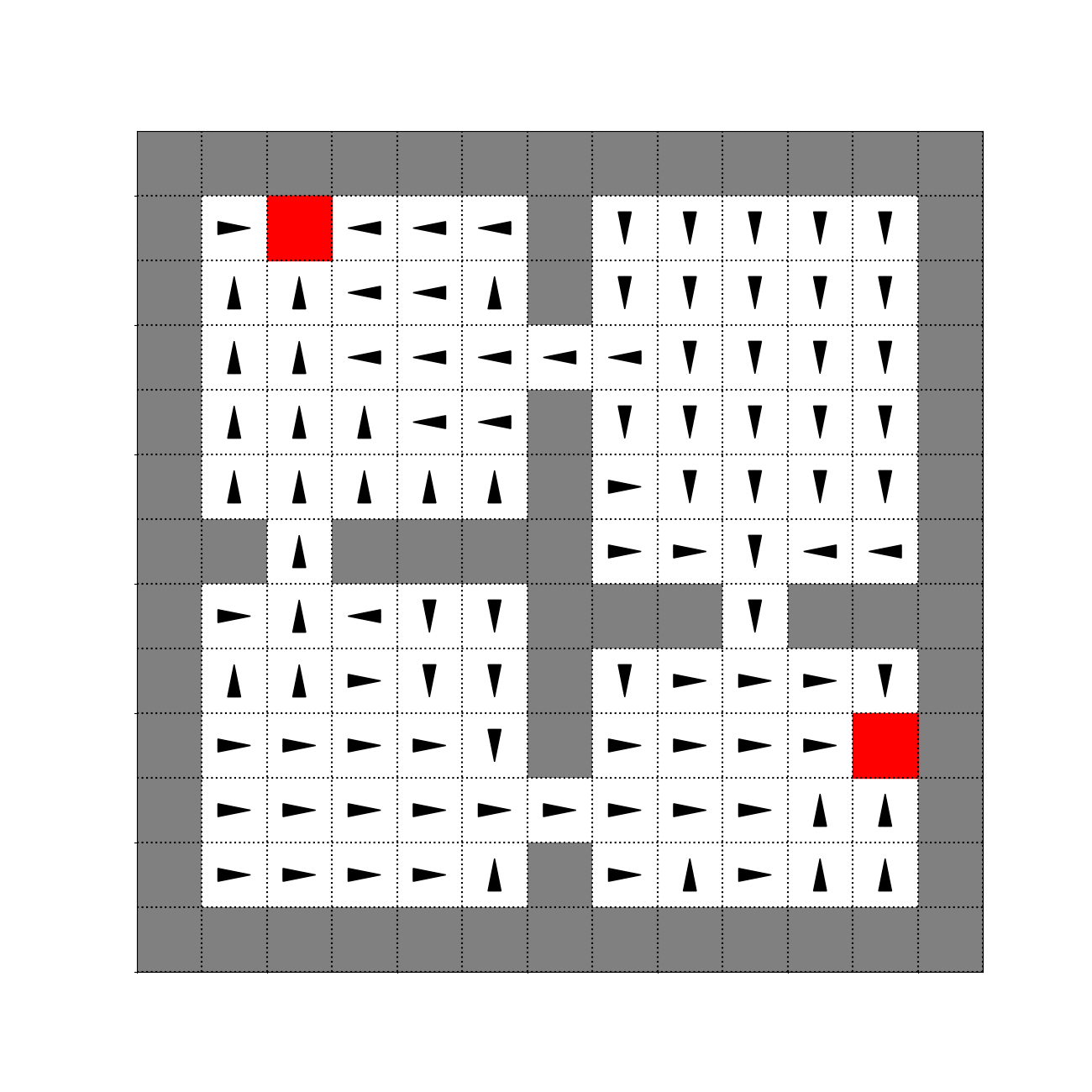}
        \caption{500 episodes}
    \end{subfigure}
    \begin{subfigure}[t]{0.19\textwidth}
        \includegraphics[width=\textwidth]{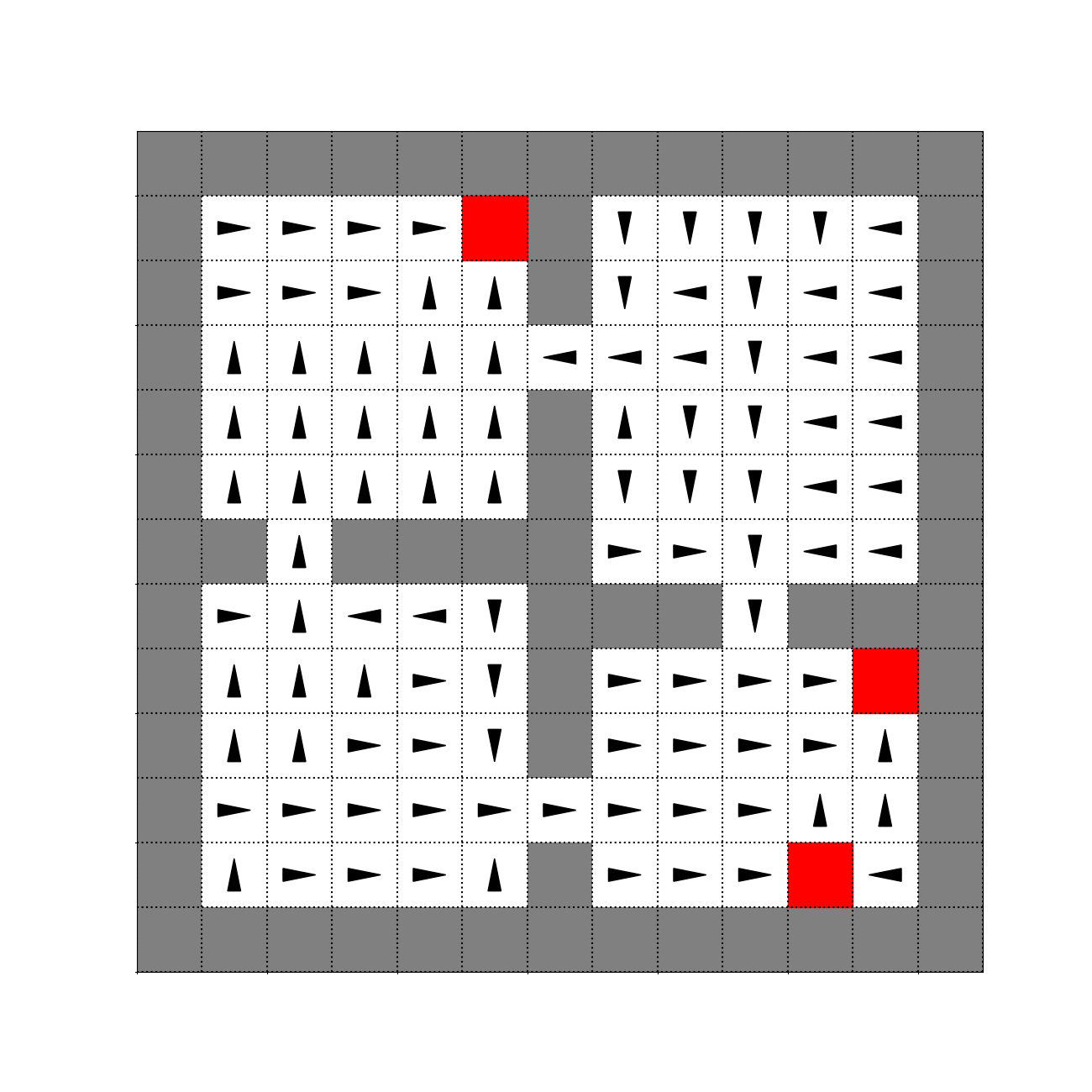}
        \caption{1,000 episodes}
    \end{subfigure}
    \begin{subfigure}[t]{0.19\textwidth}
        \includegraphics[width=\textwidth]{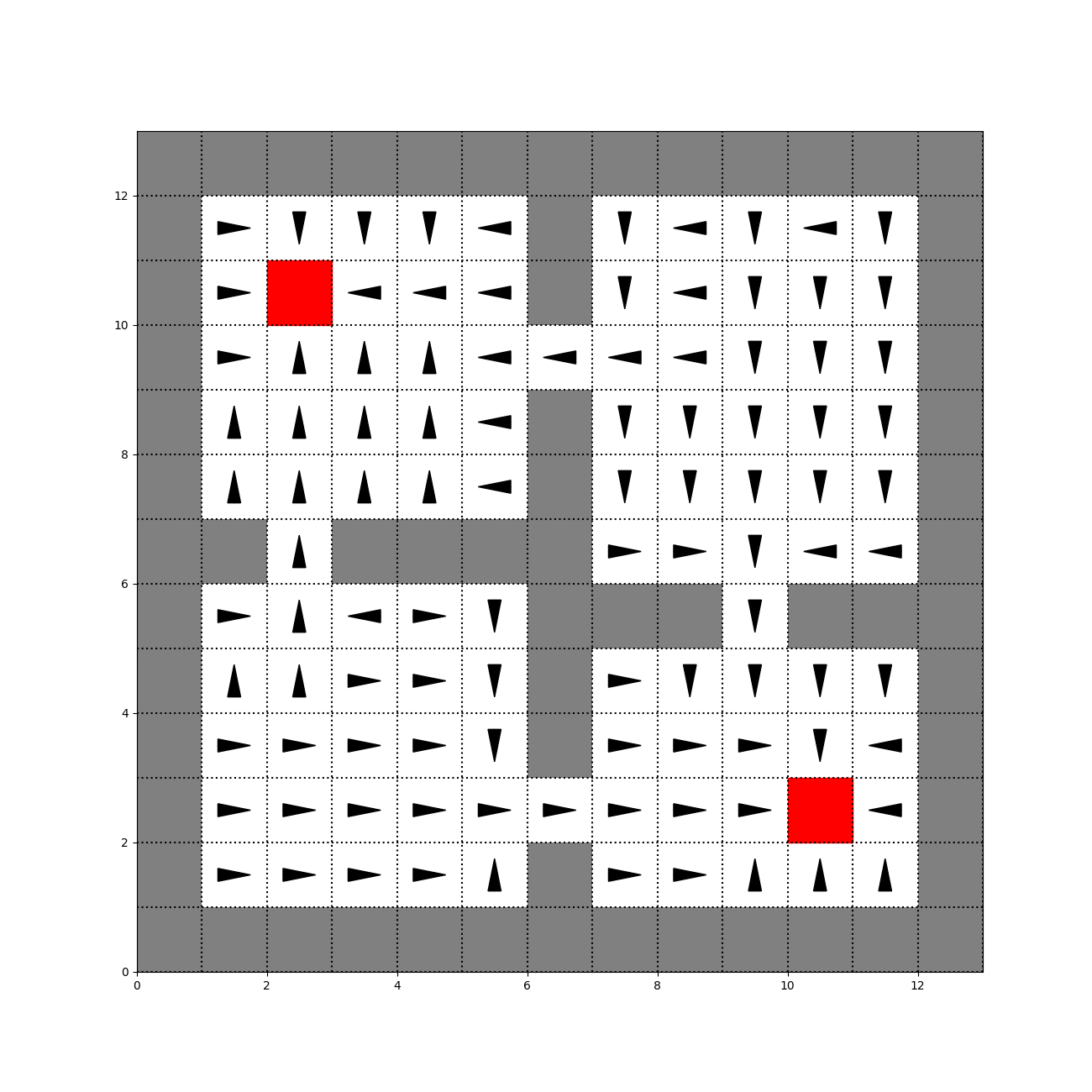}
        \caption{PVF}
    \end{subfigure}
    \begin{subfigure}[t]{0.19\textwidth}
        \includegraphics[width=\textwidth]{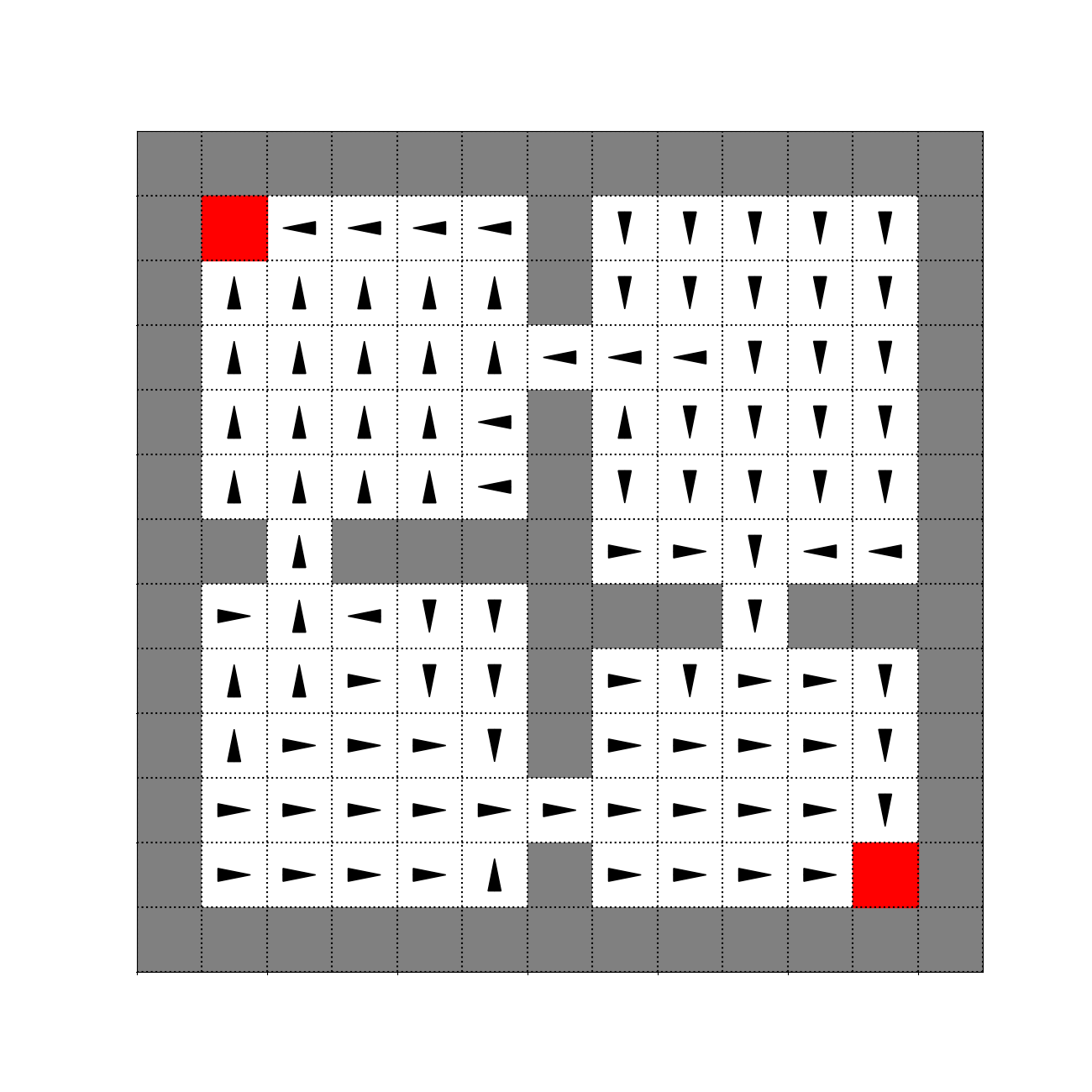}
        \caption{$(I - \gamma T)^{-1}$}
    \end{subfigure}
\caption{Evolution of the \textbf{third eigenoption} being estimated by the SR and baselines.}\label{fig:ev_3rd_policy}
\end{figure}

\begin{figure}
    \centering
    \begin{subfigure}[t]{0.19\textwidth}
        \includegraphics[width=\textwidth]{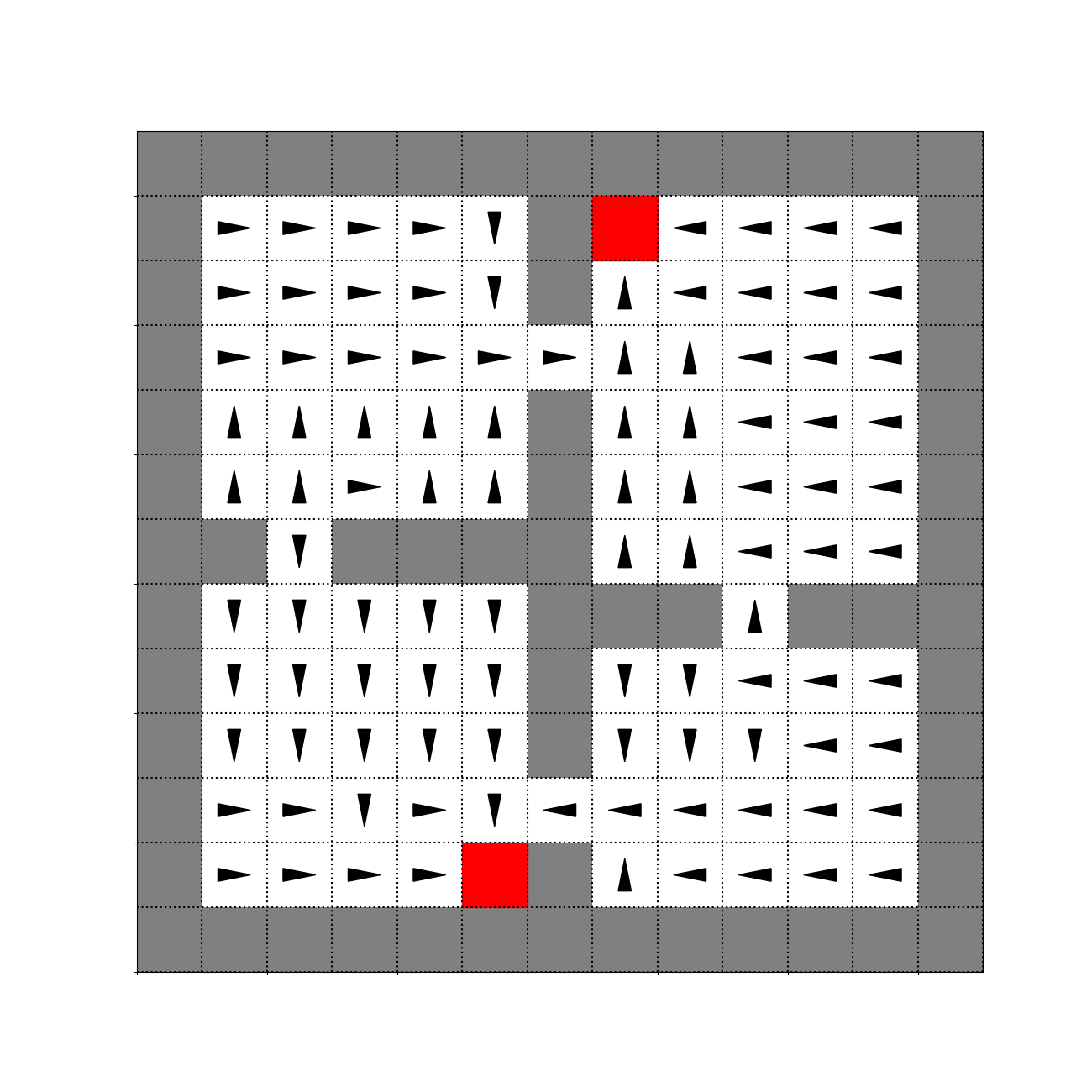}
        \caption{100 episodes}
    \end{subfigure}
    \begin{subfigure}[t]{0.19\textwidth}
        \includegraphics[width=\textwidth]{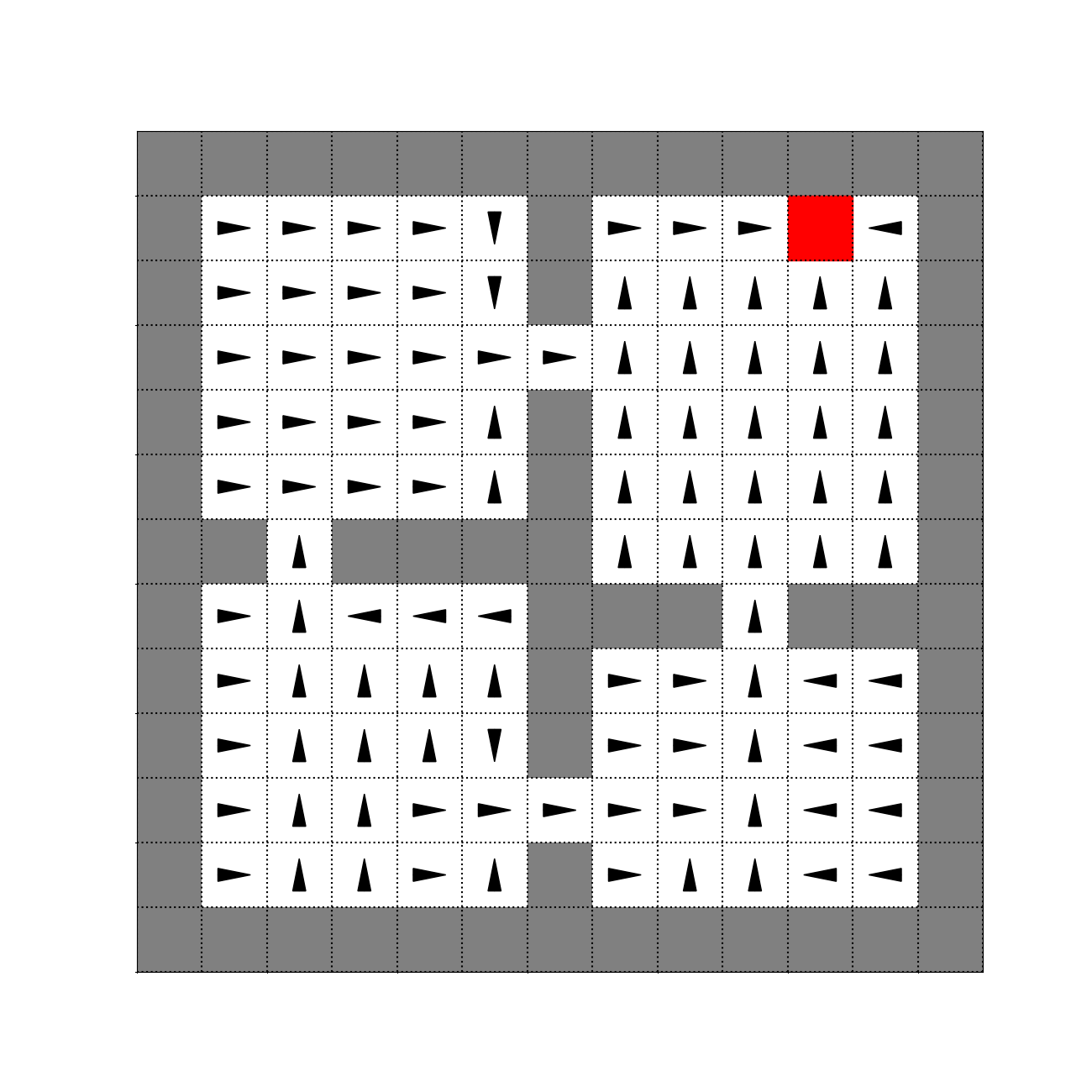}
        \caption{500 episodes}
    \end{subfigure}
    \begin{subfigure}[t]{0.19\textwidth}
        \includegraphics[width=\textwidth]{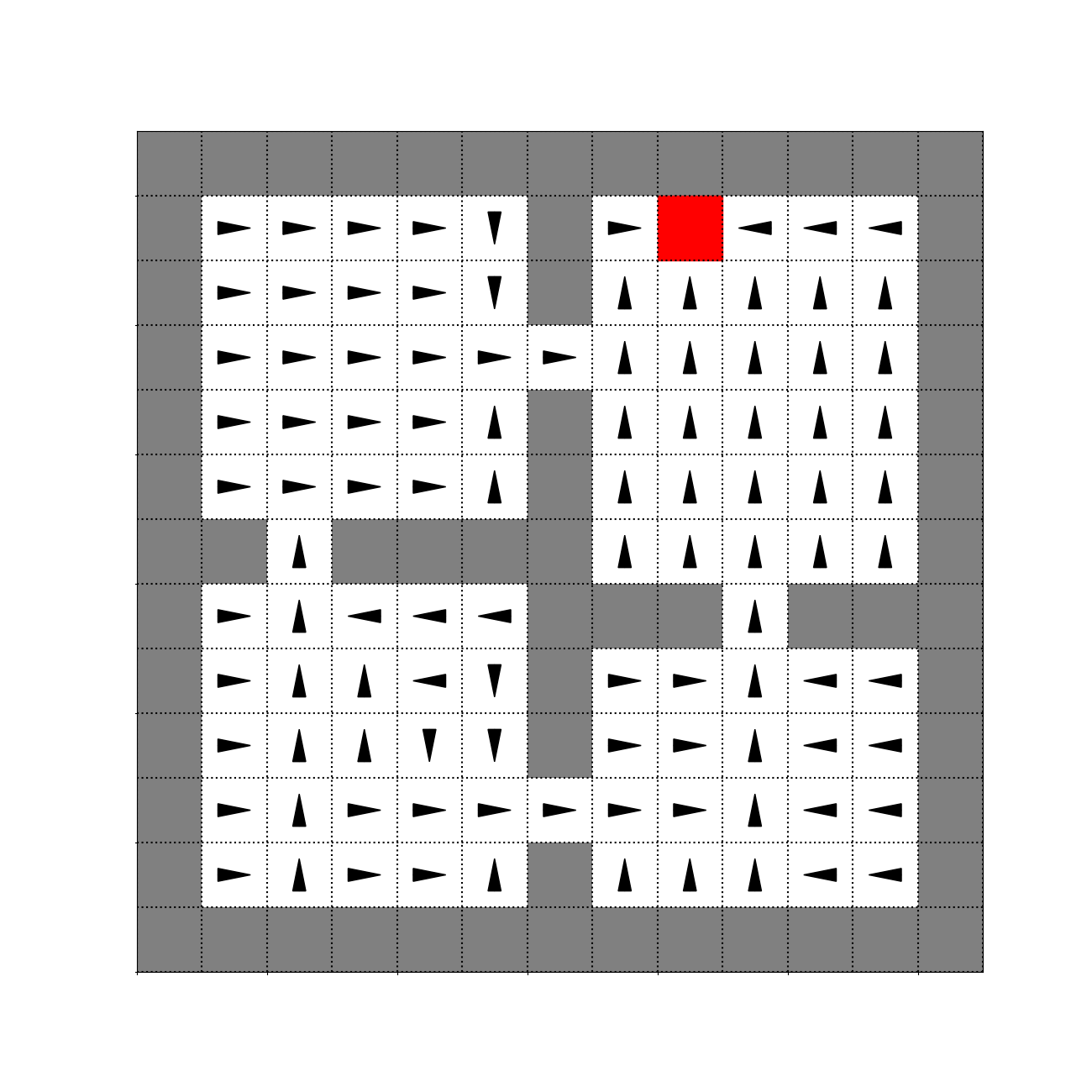}
        \caption{1,000 episodes}
    \end{subfigure}
    \begin{subfigure}[t]{0.19\textwidth}
        \includegraphics[width=\textwidth]{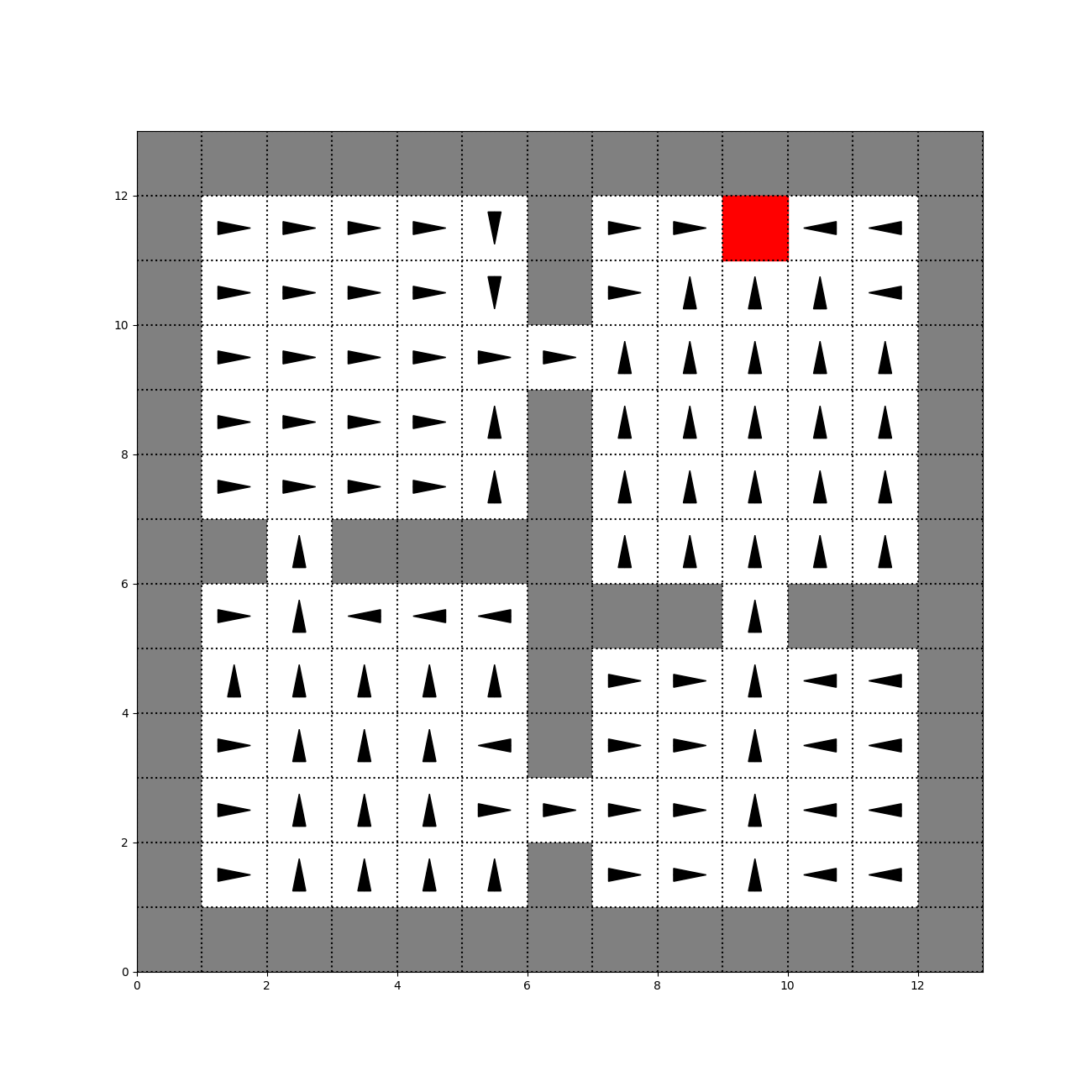}
        \caption{PVF}
    \end{subfigure}
    \begin{subfigure}[t]{0.19\textwidth}
        \includegraphics[width=\textwidth]{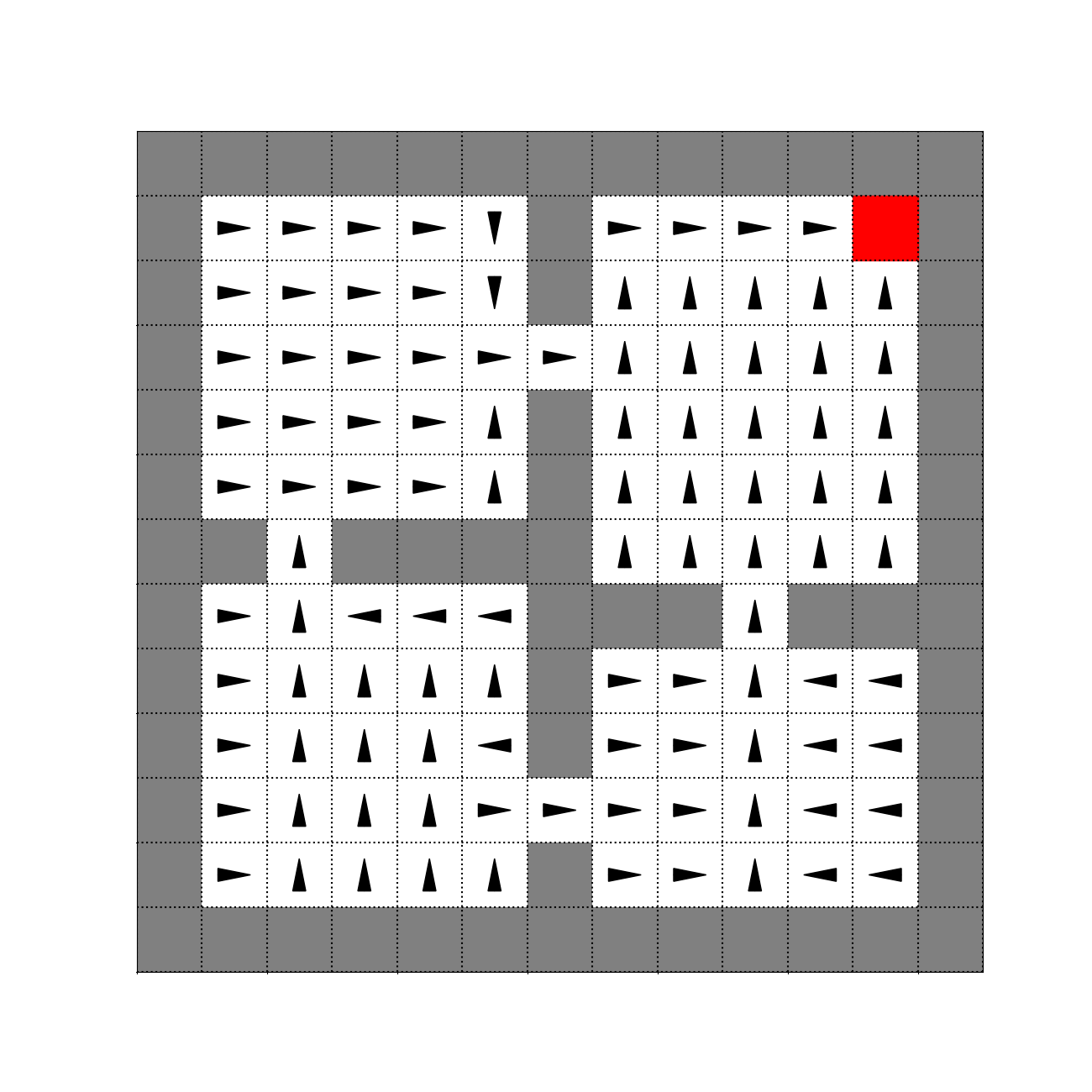}
        \caption{$(I - \gamma T)^{-1}$}
    \end{subfigure}
\caption{Evolution of the \textbf{fourth eigenoption} being estimated by the SR and baselines.}\label{fig:ev_4th_policy}
\end{figure}

\clearpage

\begin{figure}
    \centering
    \begin{subfigure}[t]{0.20\textwidth}
        \includegraphics[width=\textwidth]{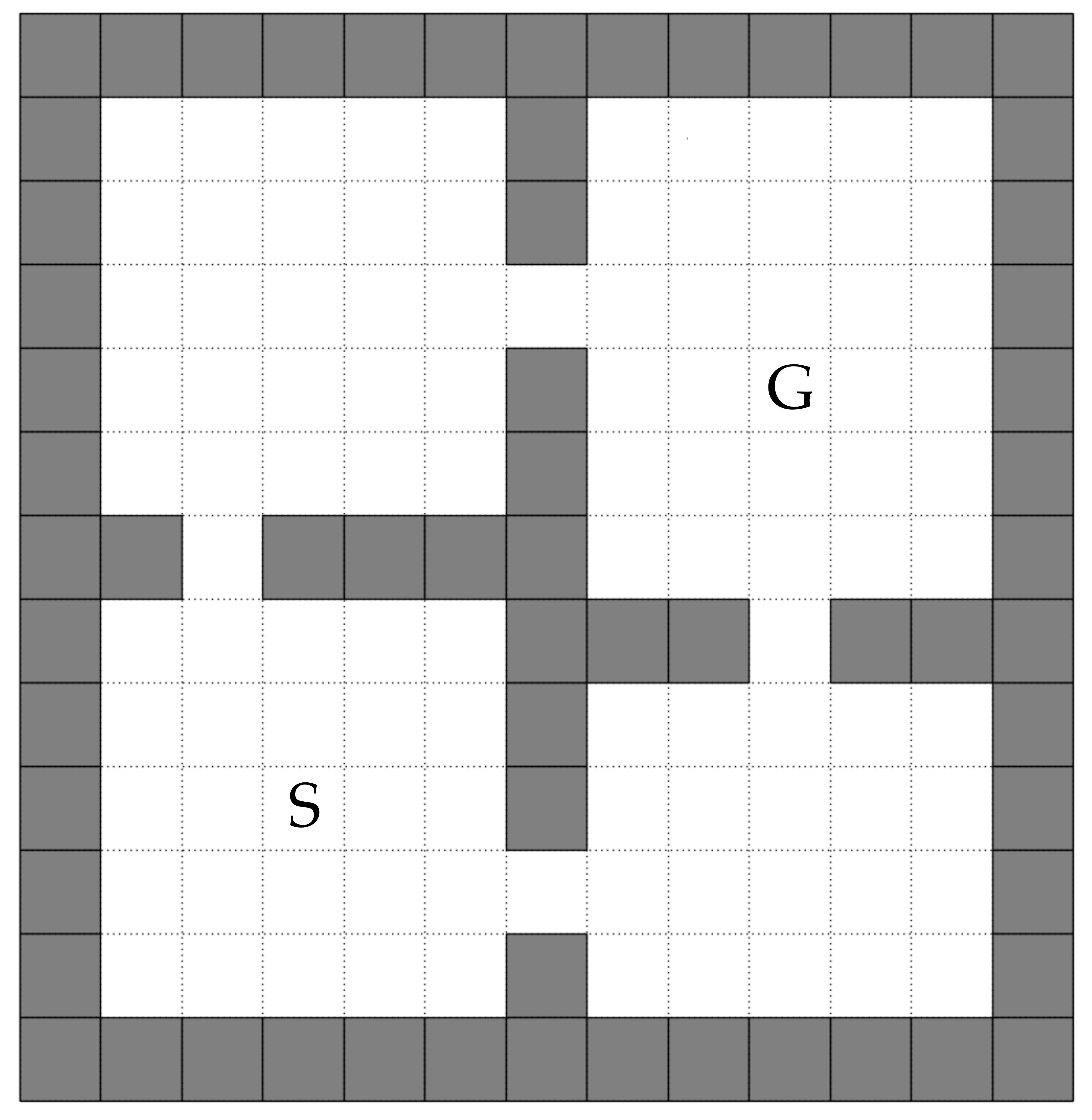}
        \caption{Setting 1}
    \end{subfigure}
    ~~~~~~
    \begin{subfigure}[t]{0.20\textwidth}
        \includegraphics[width=\textwidth]{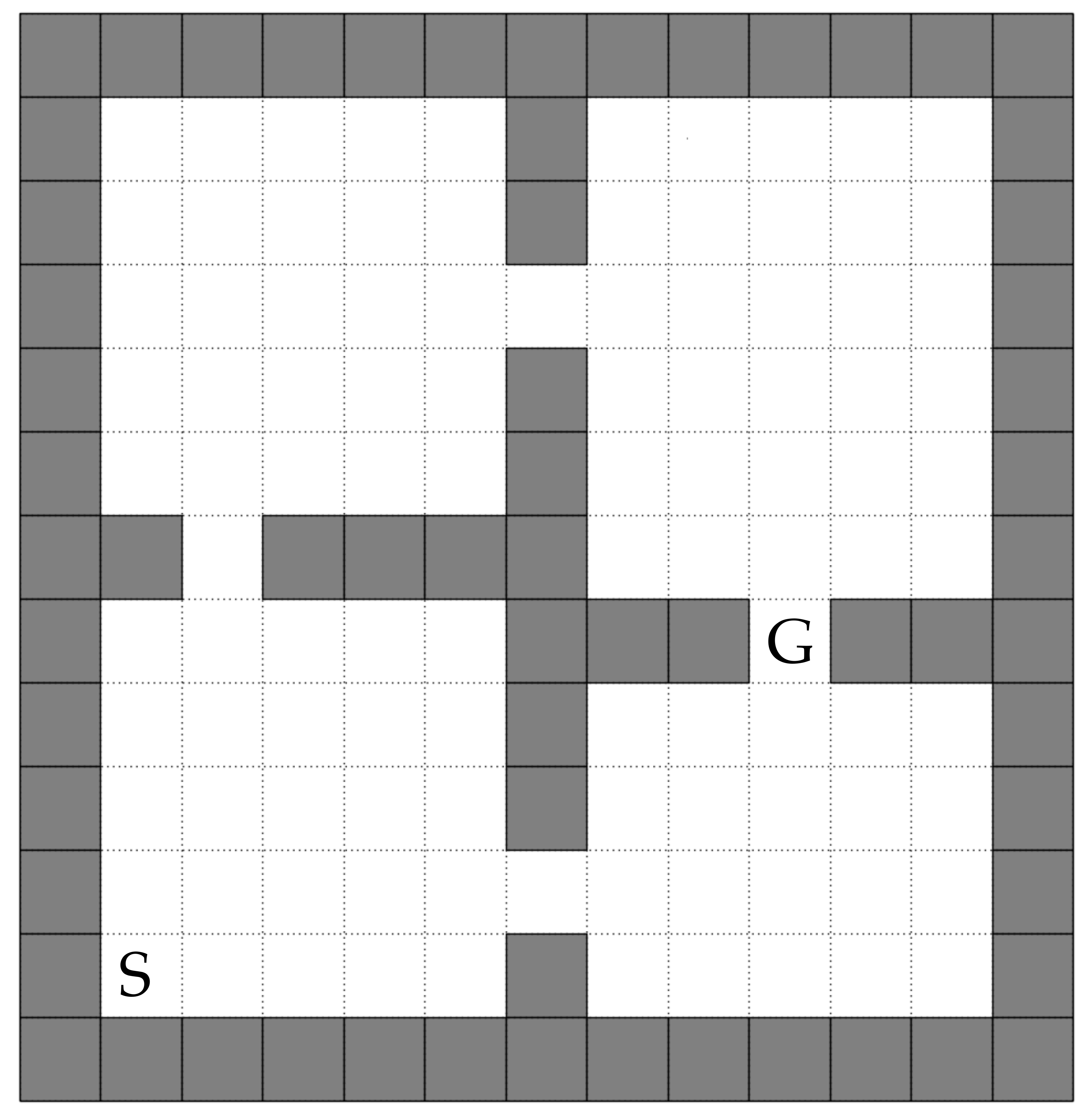}
        \caption{Setting 2}
    \end{subfigure}
    ~~~~~~
    \begin{subfigure}[t]{0.20\textwidth}
        \includegraphics[width=\textwidth]{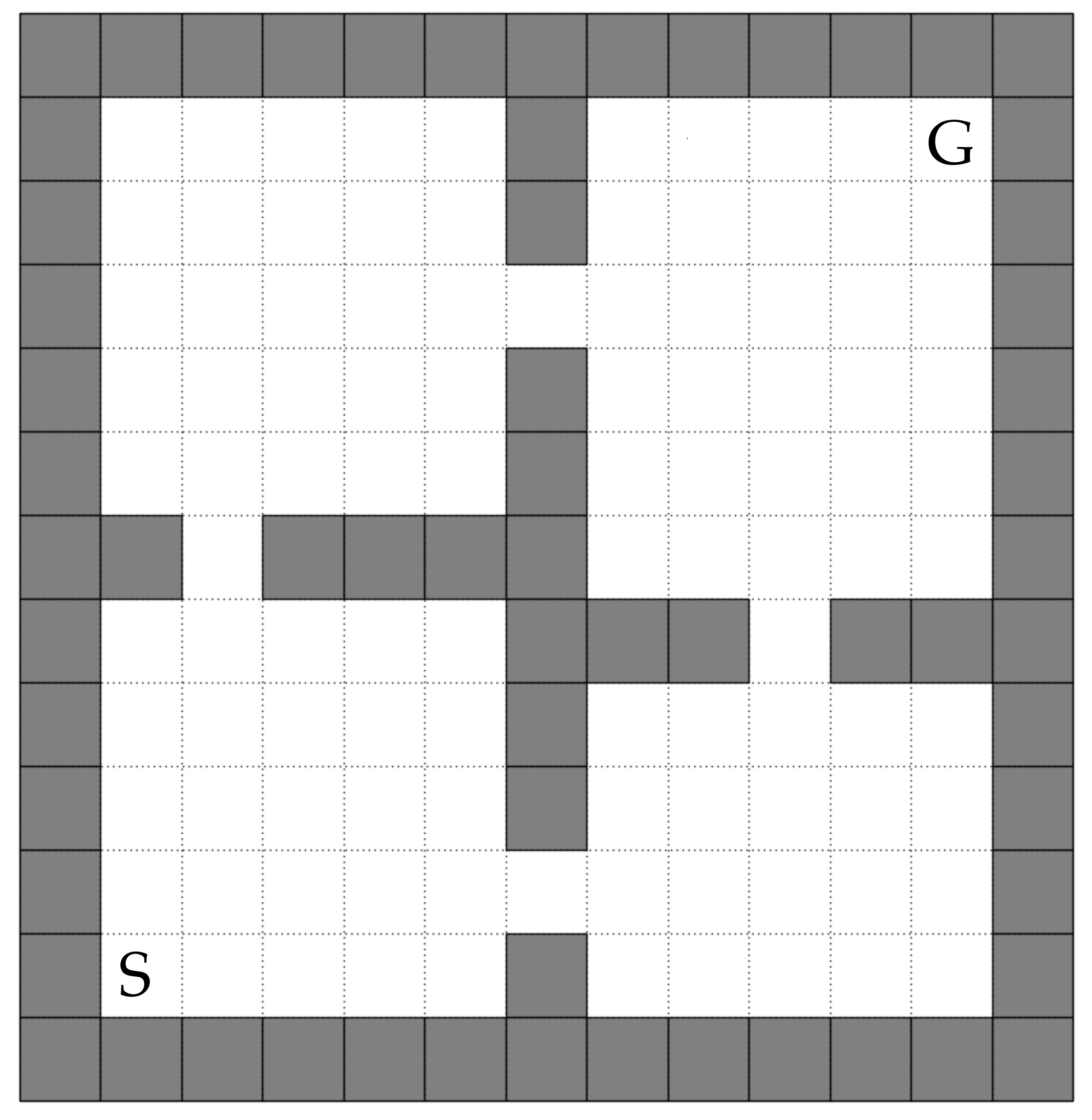}
        \caption{Setting 3}
    \end{subfigure}
    ~~~~~~
    \begin{subfigure}[t]{0.20\textwidth}
        \includegraphics[width=\textwidth]{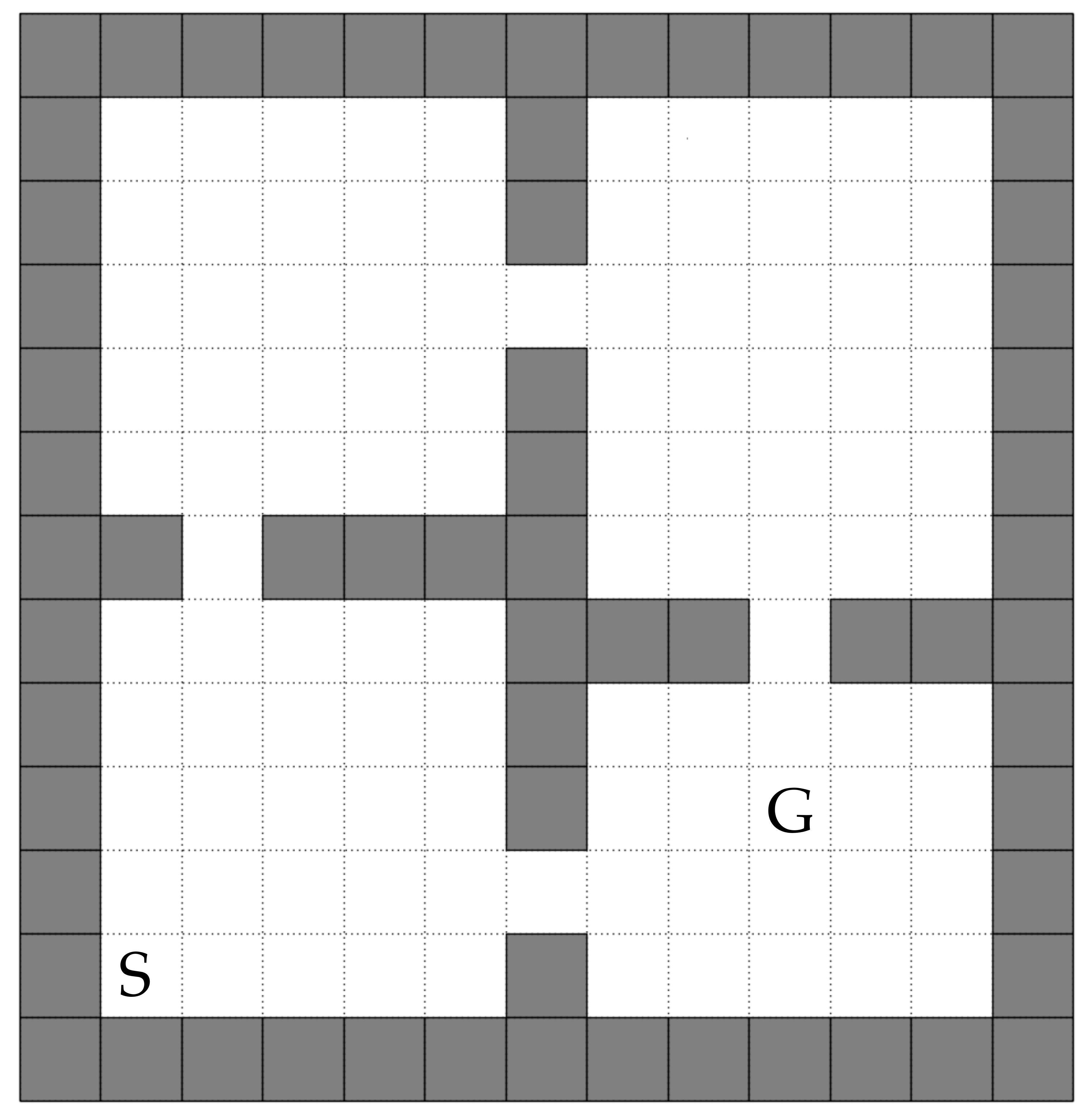}
        \caption{Setting 4}
    \end{subfigure}
\caption{Different environments (varying start and goal locations) used when evaluating the agent's ability to accumulate reward with and without eigenoptions.}\label{fig:scenarios}
\end{figure}

\begin{figure}
    \centering
    \begin{subfigure}[t]{0.24\textwidth}
        \includegraphics[width=\textwidth]{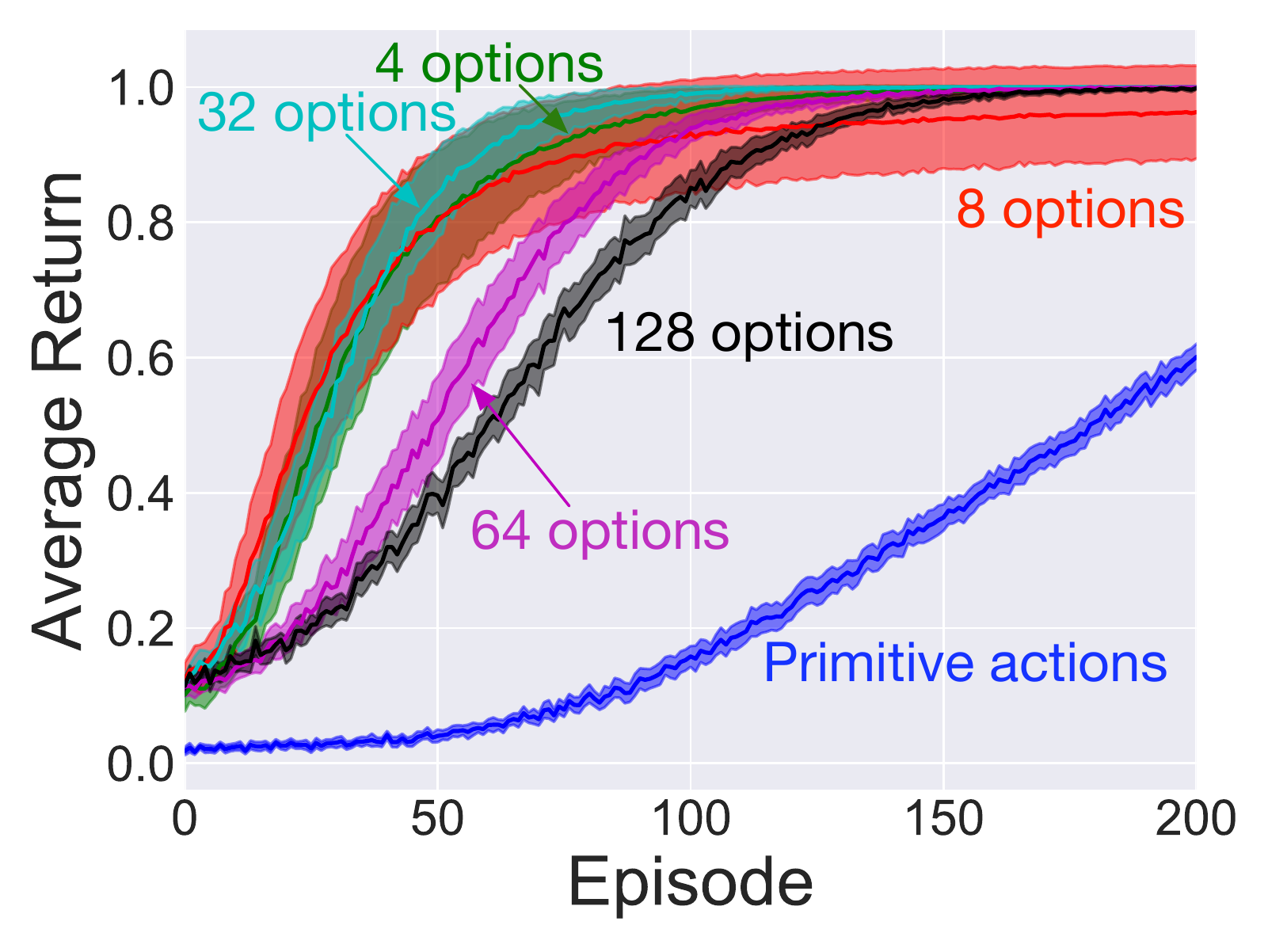}
        \caption{SR after 100 ep.}
    \end{subfigure}
    \begin{subfigure}[t]{0.24\textwidth}
        \includegraphics[width=\textwidth]{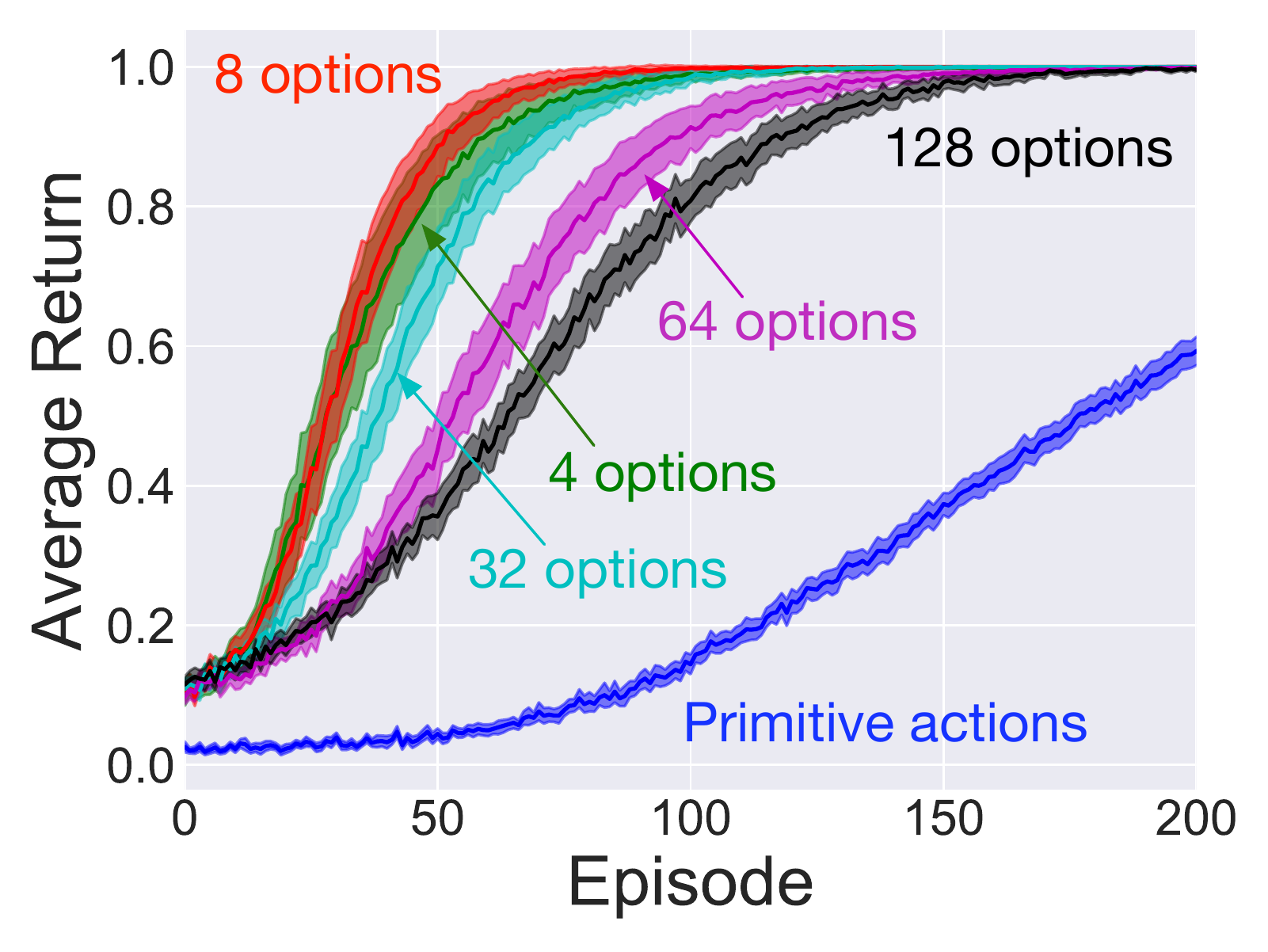}
        \caption{SR after 1,000 ep.}
    \end{subfigure}
    \begin{subfigure}[t]{0.24\textwidth}
        \includegraphics[width=\textwidth]{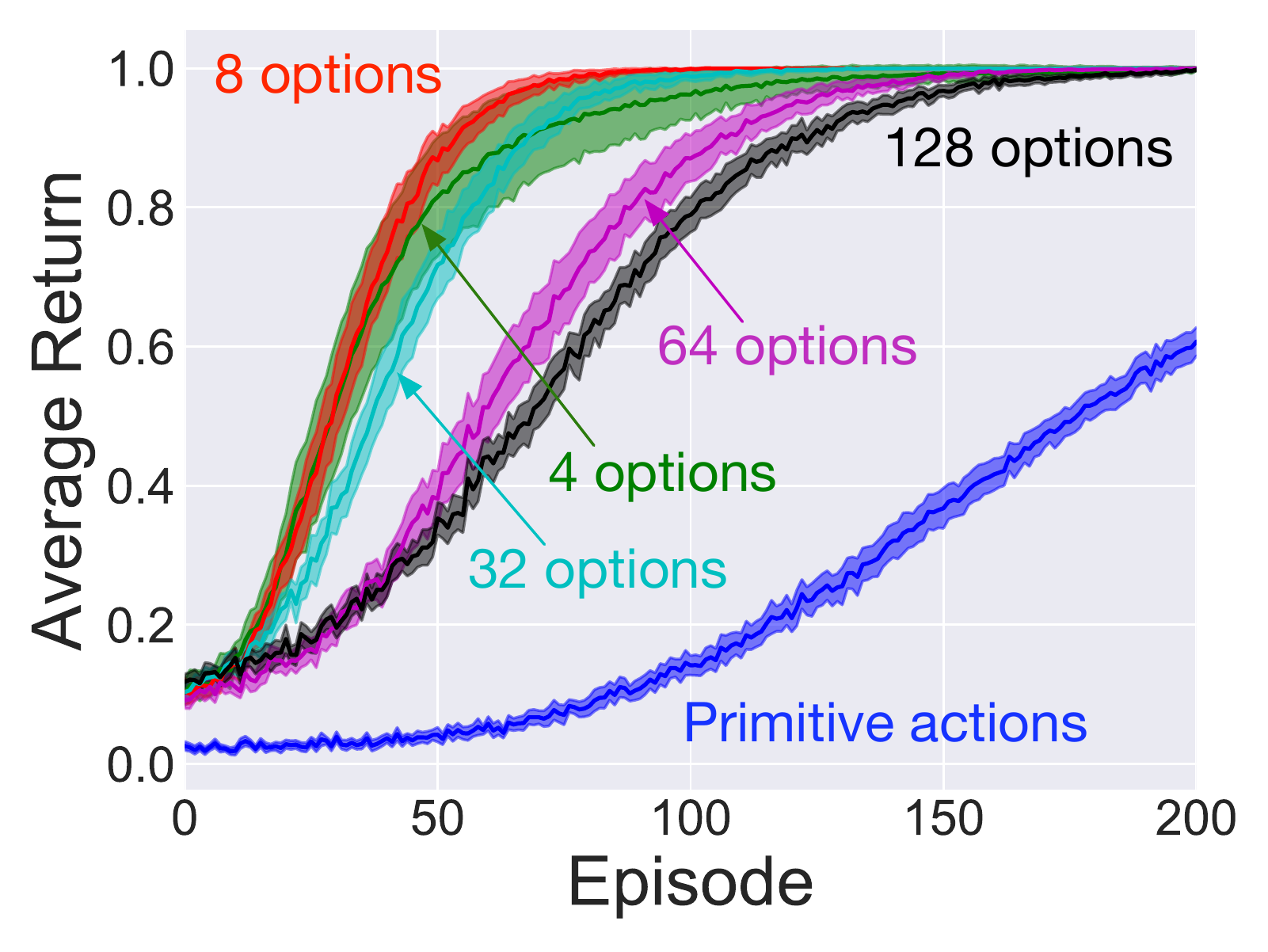}
        \caption{SR after 1,000 ep.}
    \end{subfigure}
    \begin{subfigure}[t]{0.24\textwidth}
        \includegraphics[width=\textwidth]{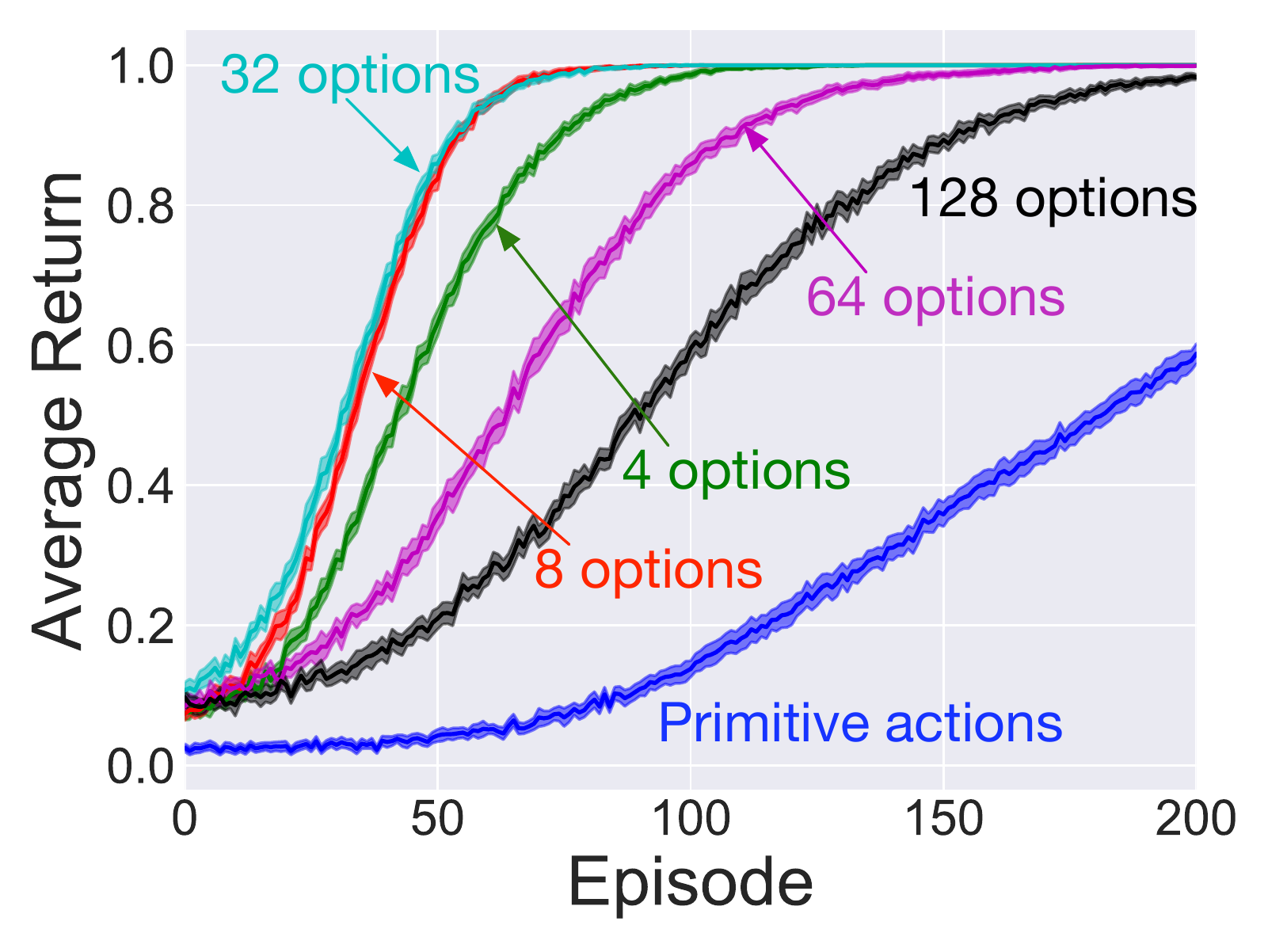}
        \caption{SR as $(I - \gamma T)^{-1}$}
    \end{subfigure}
\caption{Plot depicting the agent's performance when following options obtained through estimates of the SR ($100$, $500$, and $1,000$ episodes), as well as through the true SR, in environment~1.}\label{fig:results_scenario_1}
\end{figure}

\begin{figure}
    \centering
    \begin{subfigure}[t]{0.24\textwidth}
        \includegraphics[width=\textwidth]{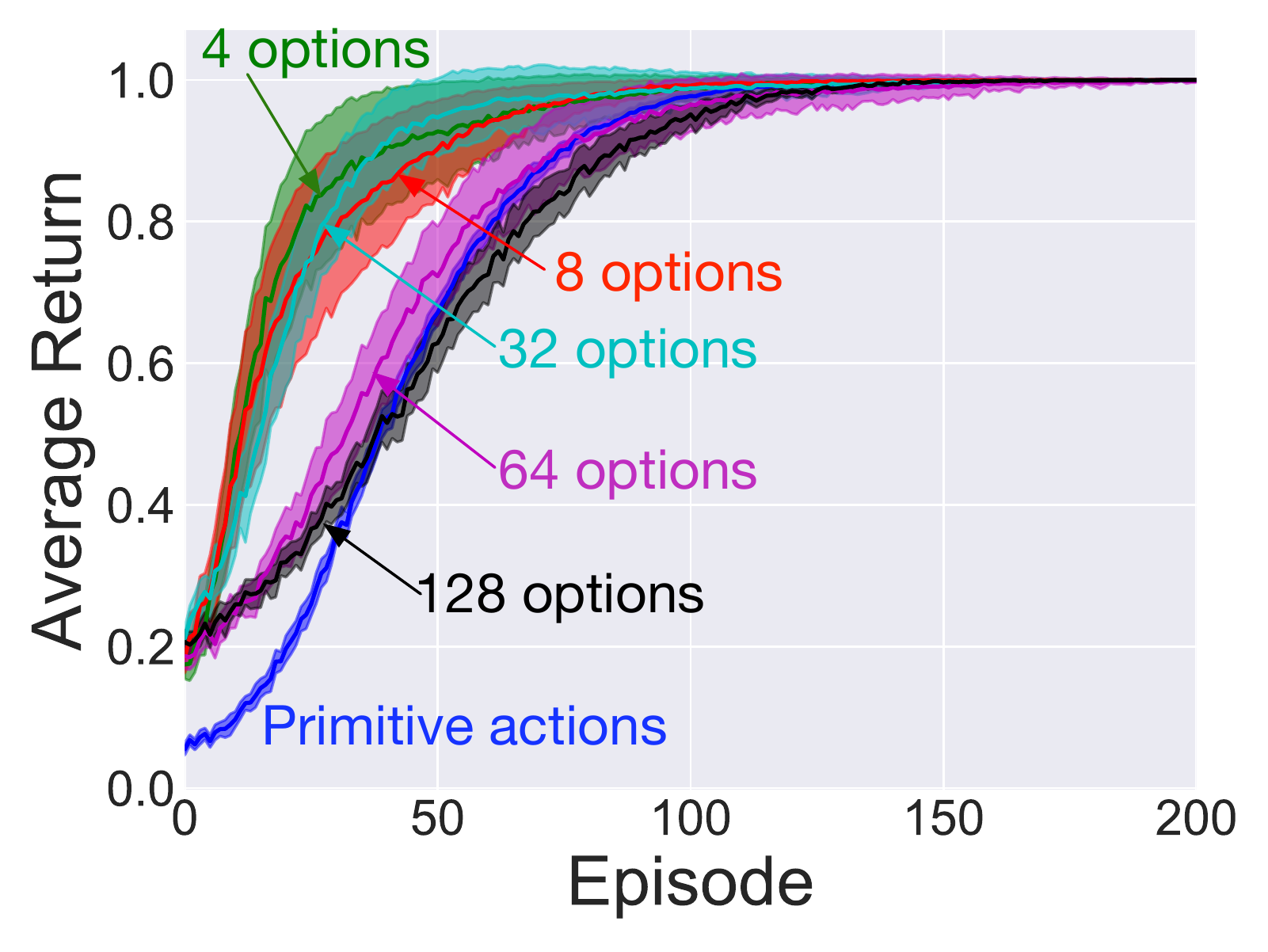}
        \caption{SR after 100 ep.}
    \end{subfigure}
    \begin{subfigure}[t]{0.24\textwidth}
        \includegraphics[width=\textwidth]{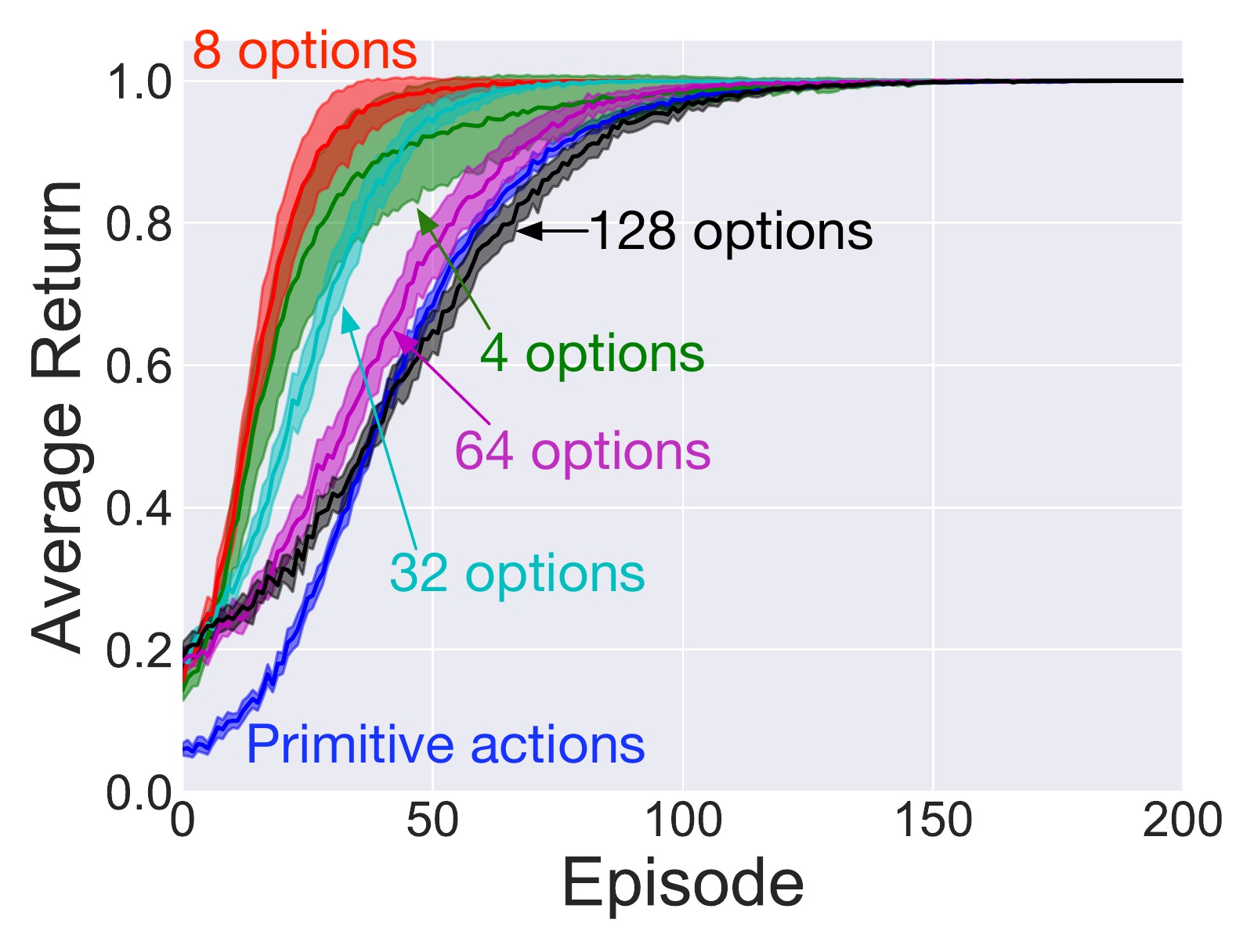}
        \caption{SR after 1,000 ep.}
    \end{subfigure}
    \begin{subfigure}[t]{0.24\textwidth}
        \includegraphics[width=\textwidth]{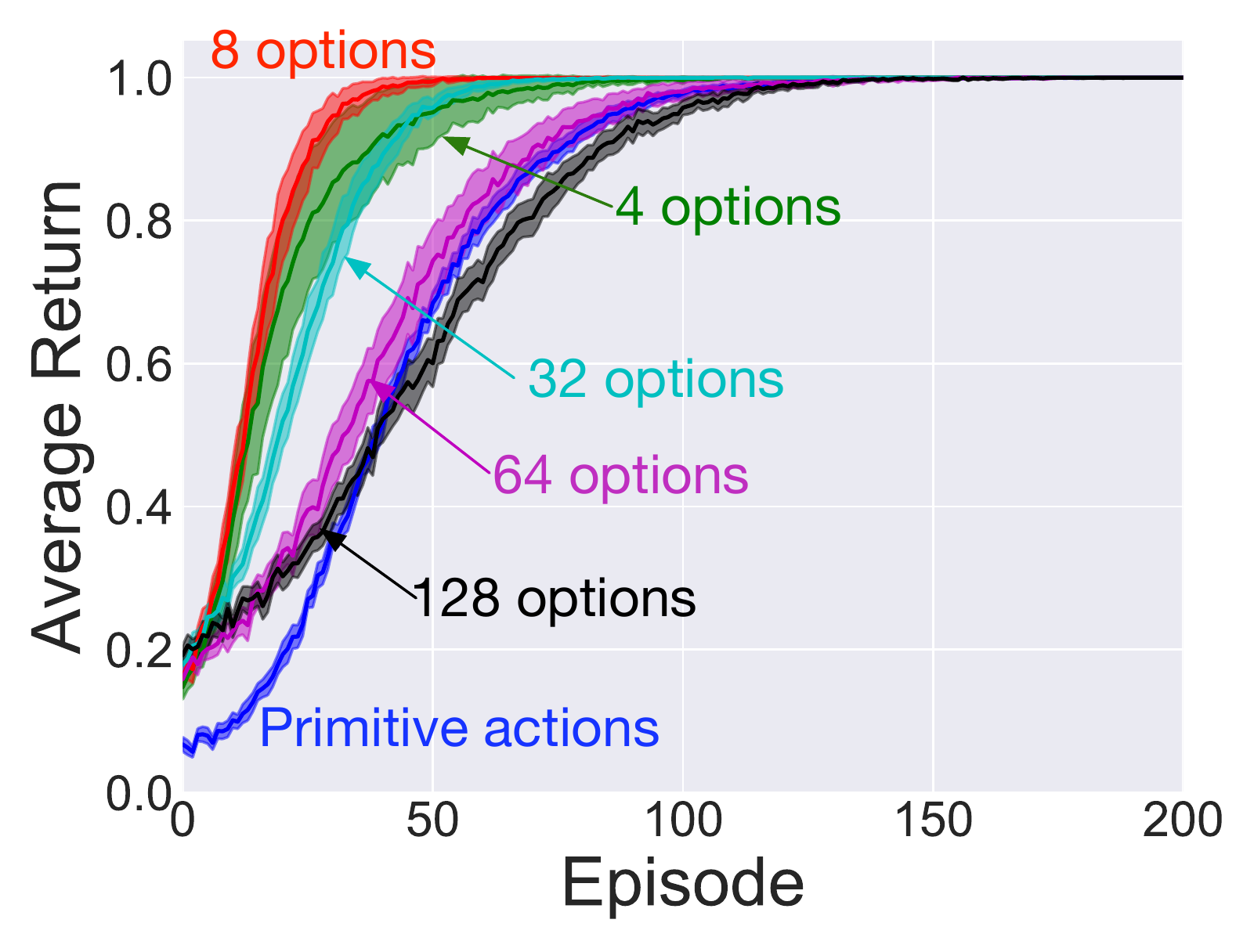}
        \caption{SR after 1,000 ep.}
    \end{subfigure}
    \begin{subfigure}[t]{0.24\textwidth}
        \includegraphics[width=\textwidth]{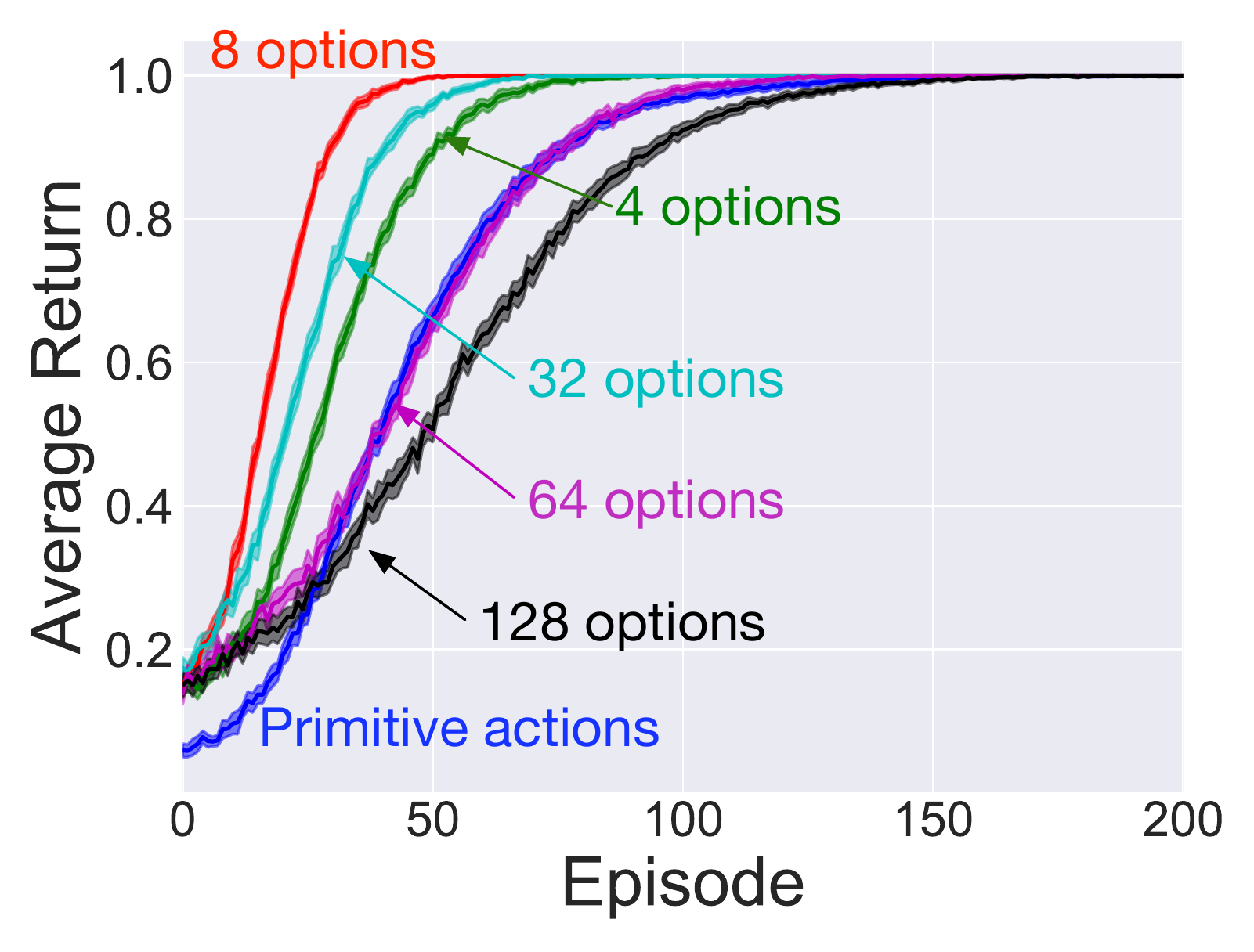}
        \caption{SR as $(I - \gamma T)^{-1}$}
    \end{subfigure}
\caption{Plot depicting the agent's performance when following options obtained through estimates of the SR ($100$, $500$, and $1,000$ episodes), as well as through the true SR, in environment~2.}\label{fig:results_scenario_2}
\end{figure}

\begin{figure}
    \centering
    \begin{subfigure}[t]{0.24\textwidth}
        \includegraphics[width=\textwidth]{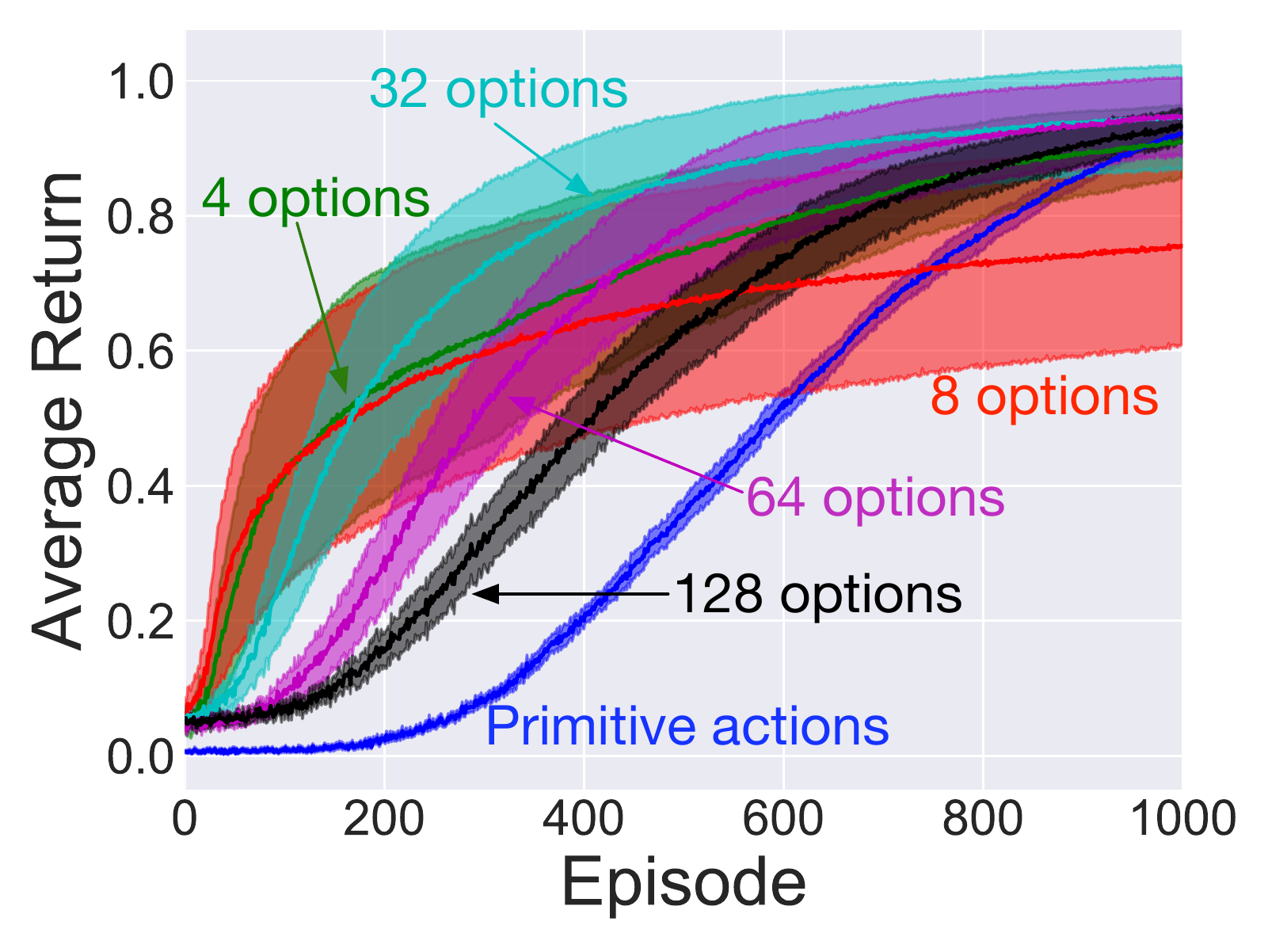}
        \caption{SR after 100 ep.}
    \end{subfigure}
    \begin{subfigure}[t]{0.24\textwidth}
        \includegraphics[width=\textwidth]{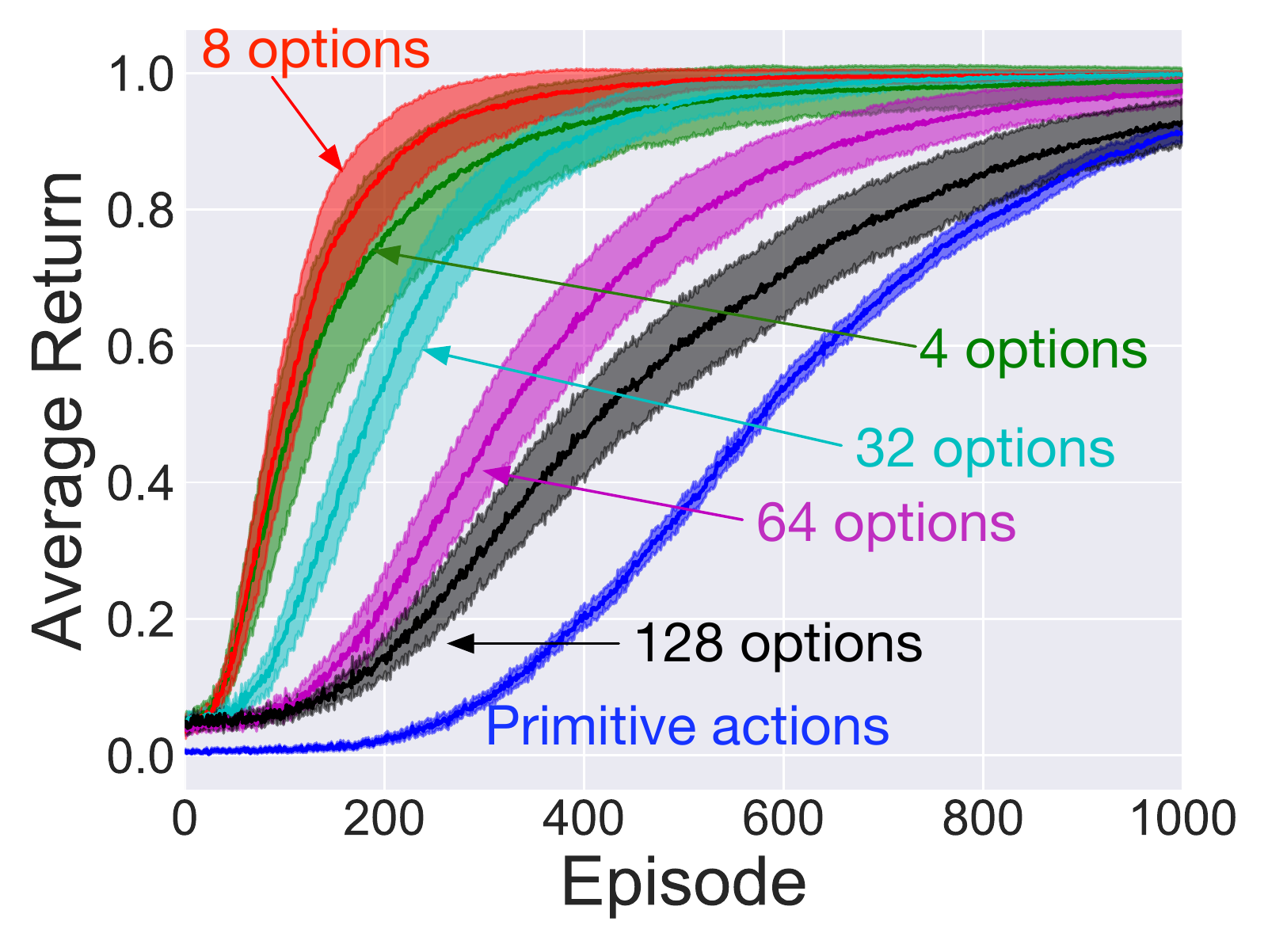}
        \caption{SR after 500 ep.}
    \end{subfigure}
    \begin{subfigure}[t]{0.24\textwidth}
        \includegraphics[width=\textwidth]{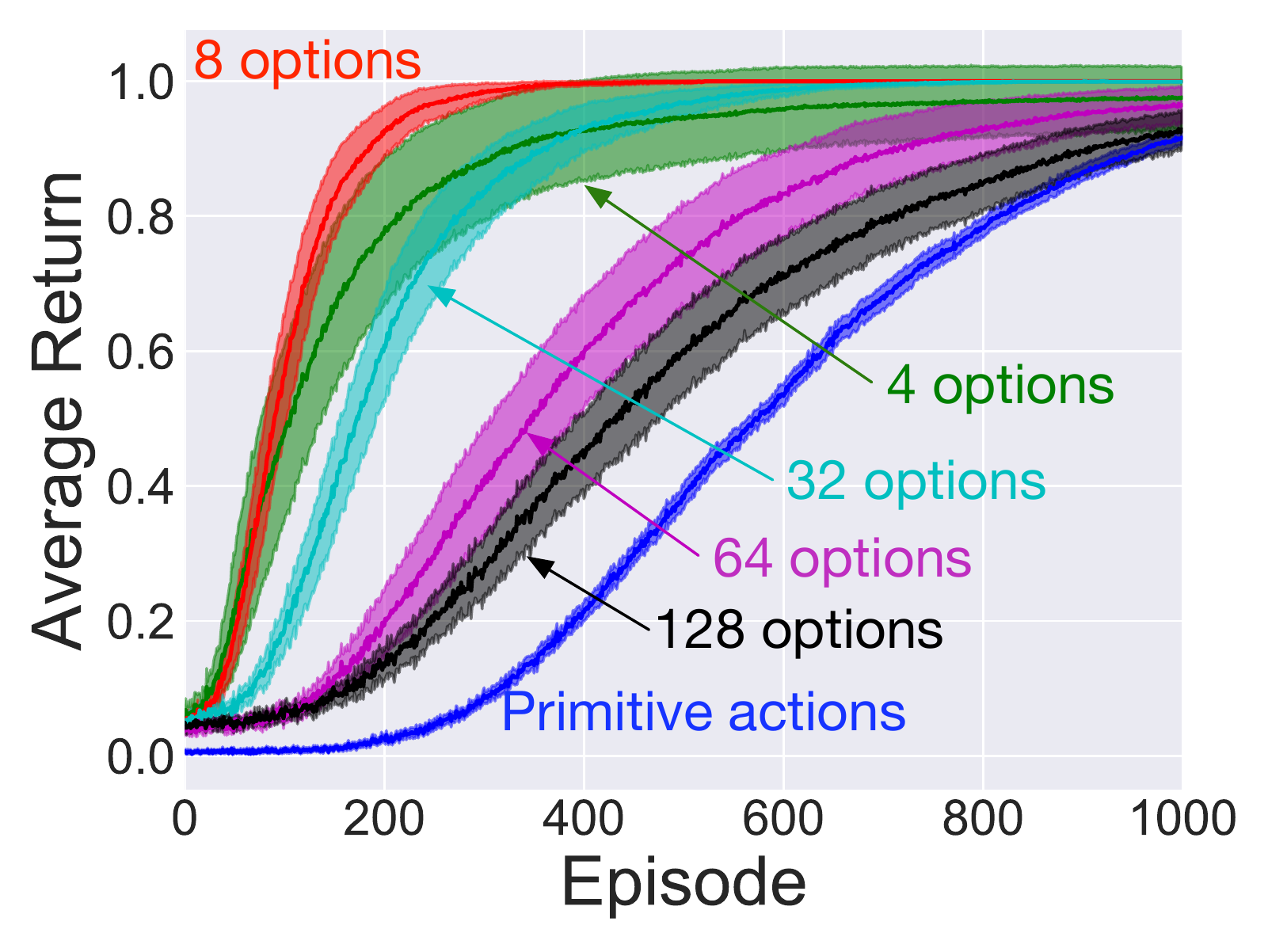}
        \caption{SR after 1,000 ep.}
    \end{subfigure}
    \begin{subfigure}[t]{0.24\textwidth}
        \includegraphics[width=\textwidth]{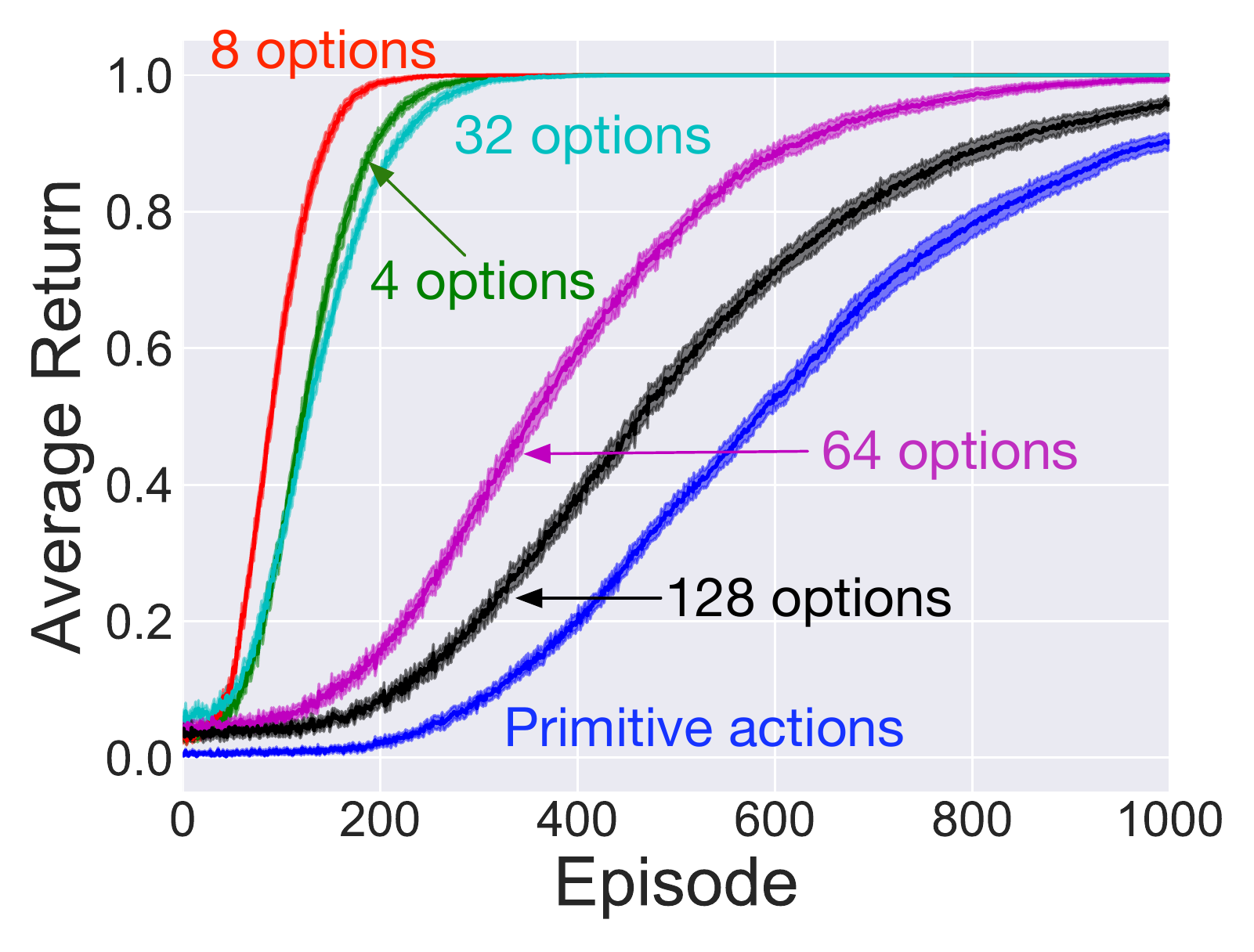}
        \caption{SR as $(I - \gamma T)^{-1}$}
    \end{subfigure}
\caption{Plot depicting the agent's performance when following options obtained through estimates of the SR ($100$, $500$, and $1,000$ episodes), as well as through the true SR, in environment~3.}\label{fig:results_scenario_3}
\end{figure}

\begin{figure}
    \centering
    \begin{subfigure}[t]{0.24\textwidth}
        \includegraphics[width=\textwidth]{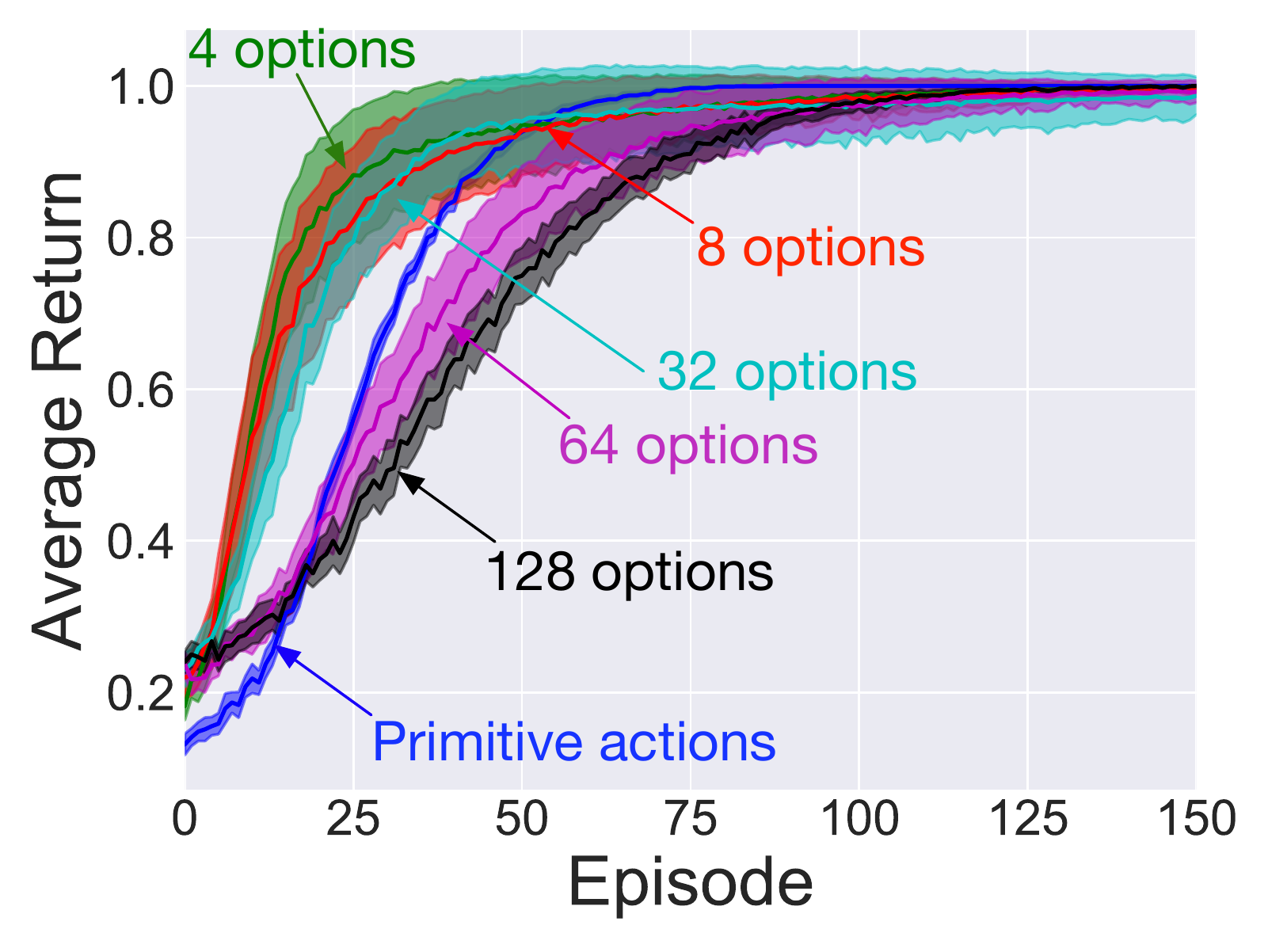}
        \caption{SR after 100 ep.}
    \end{subfigure}
    \begin{subfigure}[t]{0.24\textwidth}
        \includegraphics[width=\textwidth]{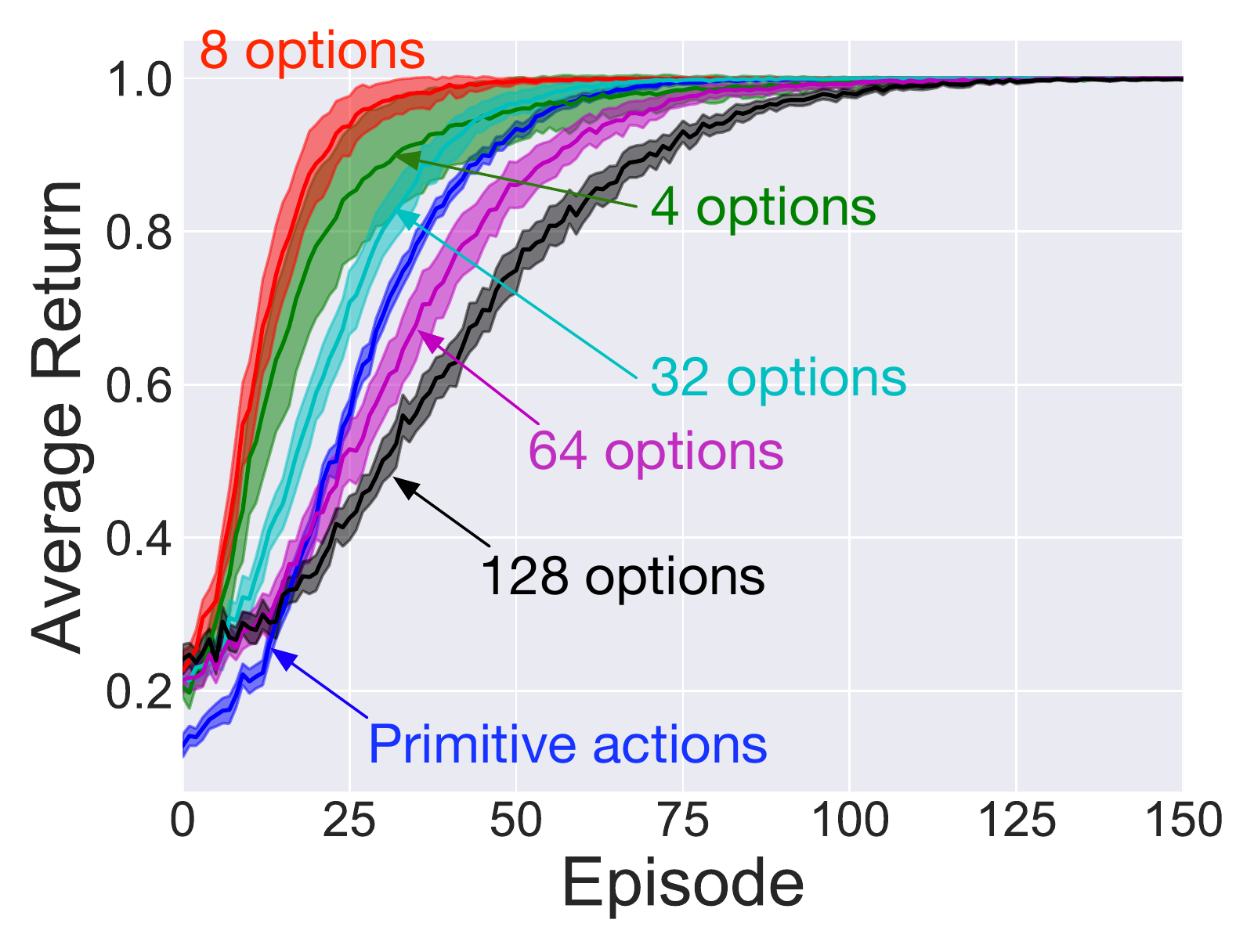}
        \caption{SR after 1,000 ep.}
    \end{subfigure}
    \begin{subfigure}[t]{0.24\textwidth}
        \includegraphics[width=\textwidth]{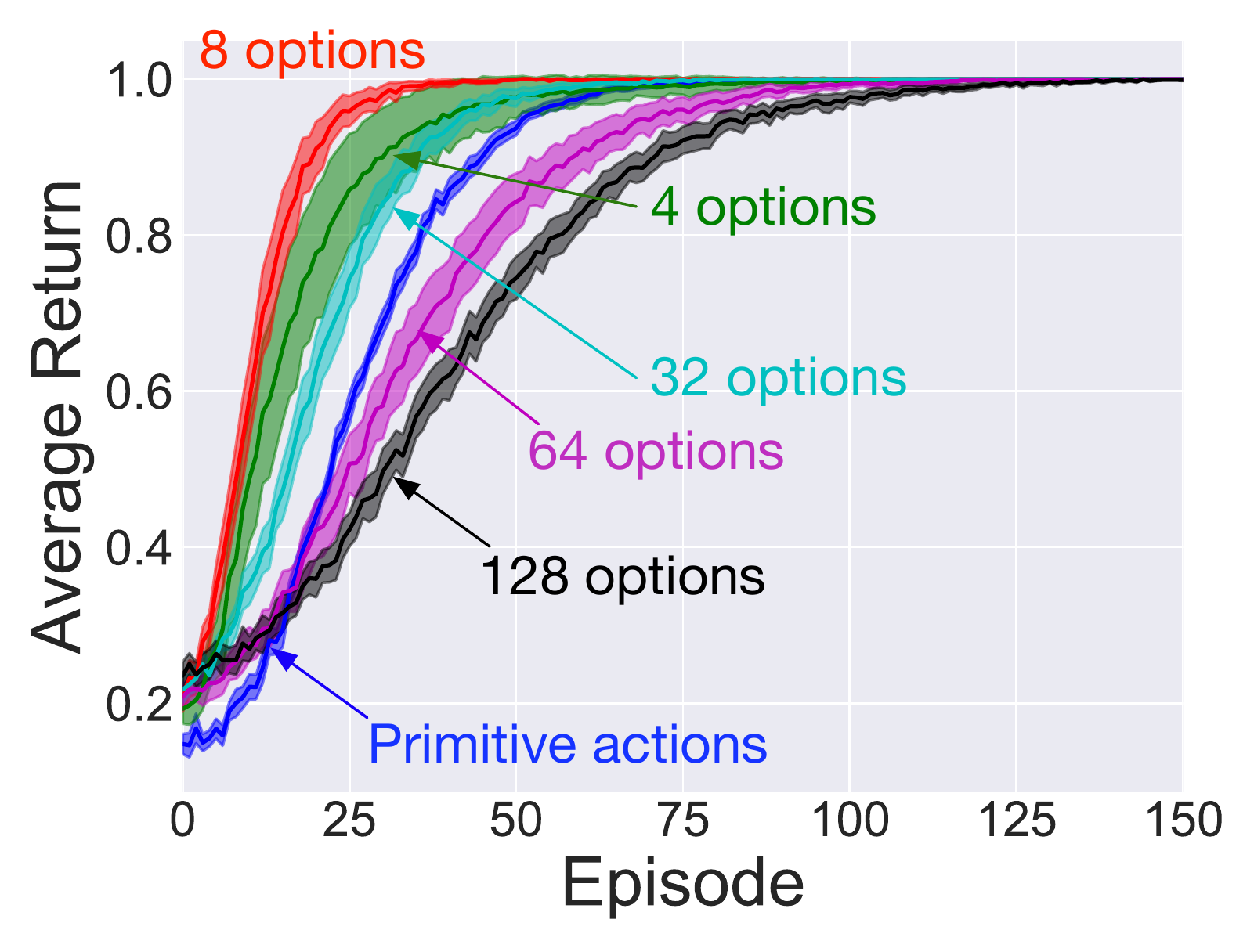}
        \caption{SR after 1,000 ep.}
    \end{subfigure}
    \begin{subfigure}[t]{0.24\textwidth}
        \includegraphics[width=\textwidth]{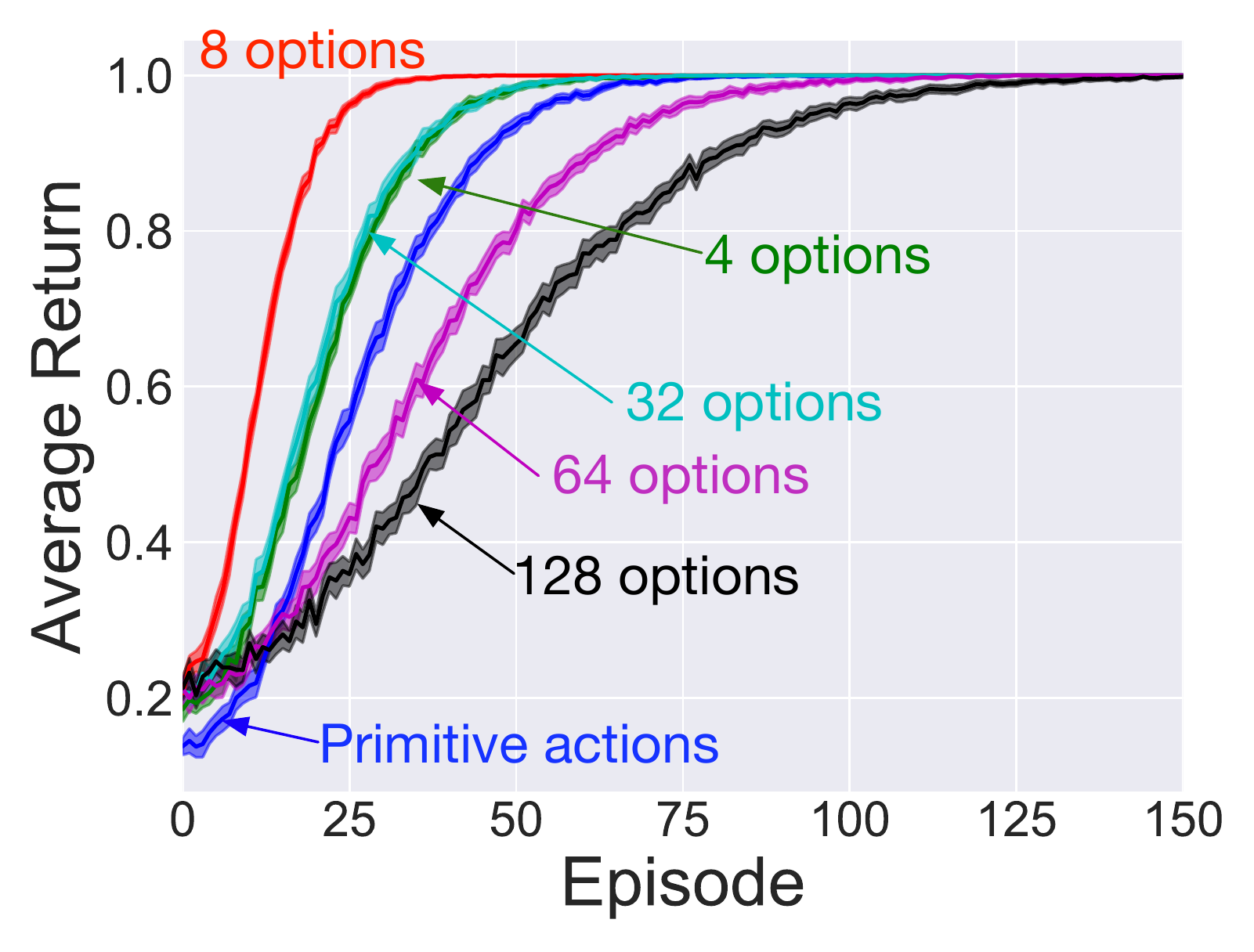}
        \caption{SR as $(I - \gamma T)^{-1}$}
    \end{subfigure}
\caption{Plot depicting the agent's performance when following options obtained through estimates of the SR ($100$, $500$, and $1,000$ episodes), as well as through the true SR, in environment~4.}\label{fig:results_scenario_4}
\end{figure}

\clearpage

\begin{figure}
    \centering
          \includegraphics[width=\textwidth]{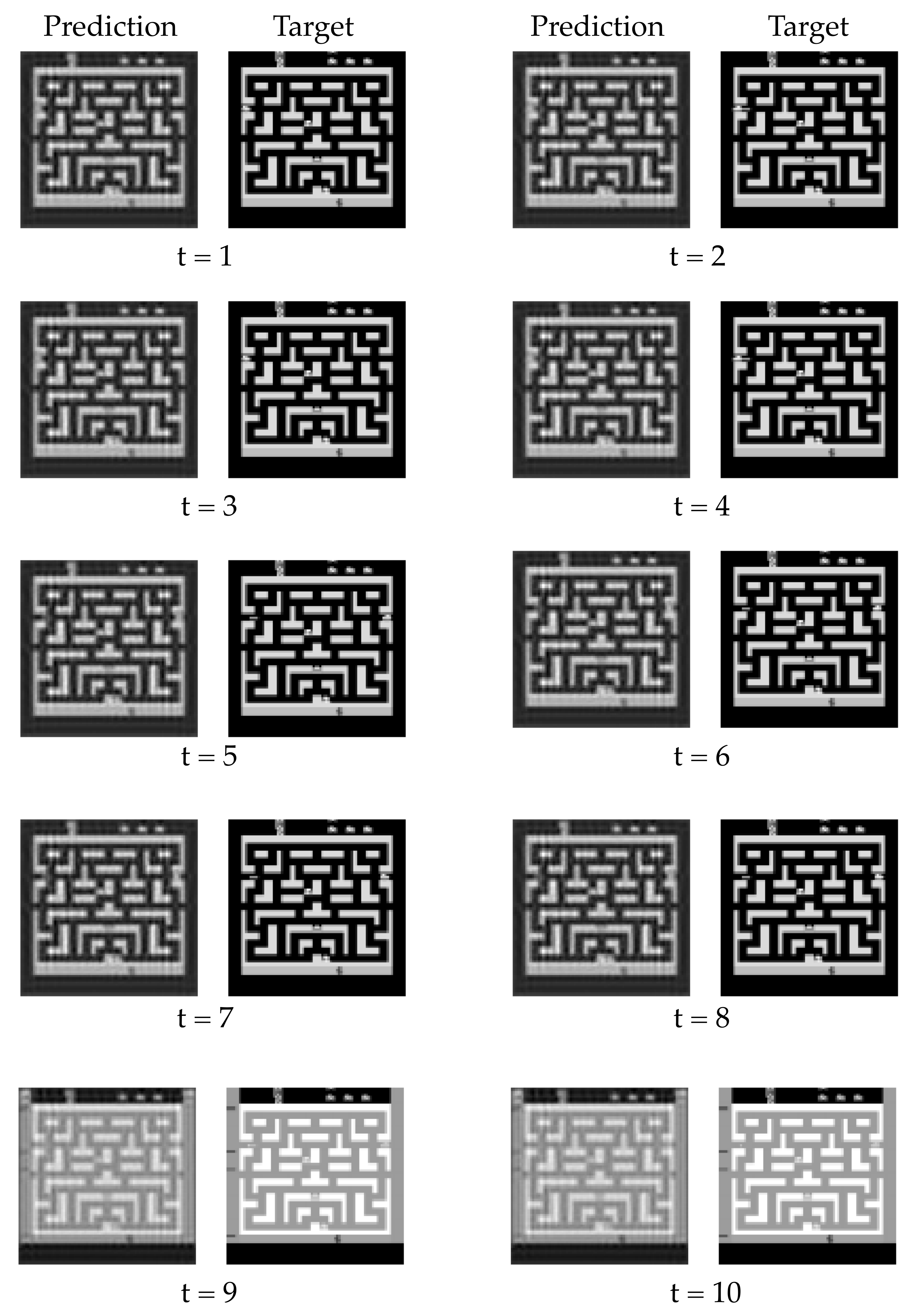}
\caption{Final 1-step predictions in the game \textsc{Bank Heist}. We use the task of predicting the next game screen as an auxiliary task when estimating the successor representation.}\label{fig:reconstruction_bank_heist}
\end{figure}

\begin{figure}
    \centering
          \includegraphics[width=\textwidth]{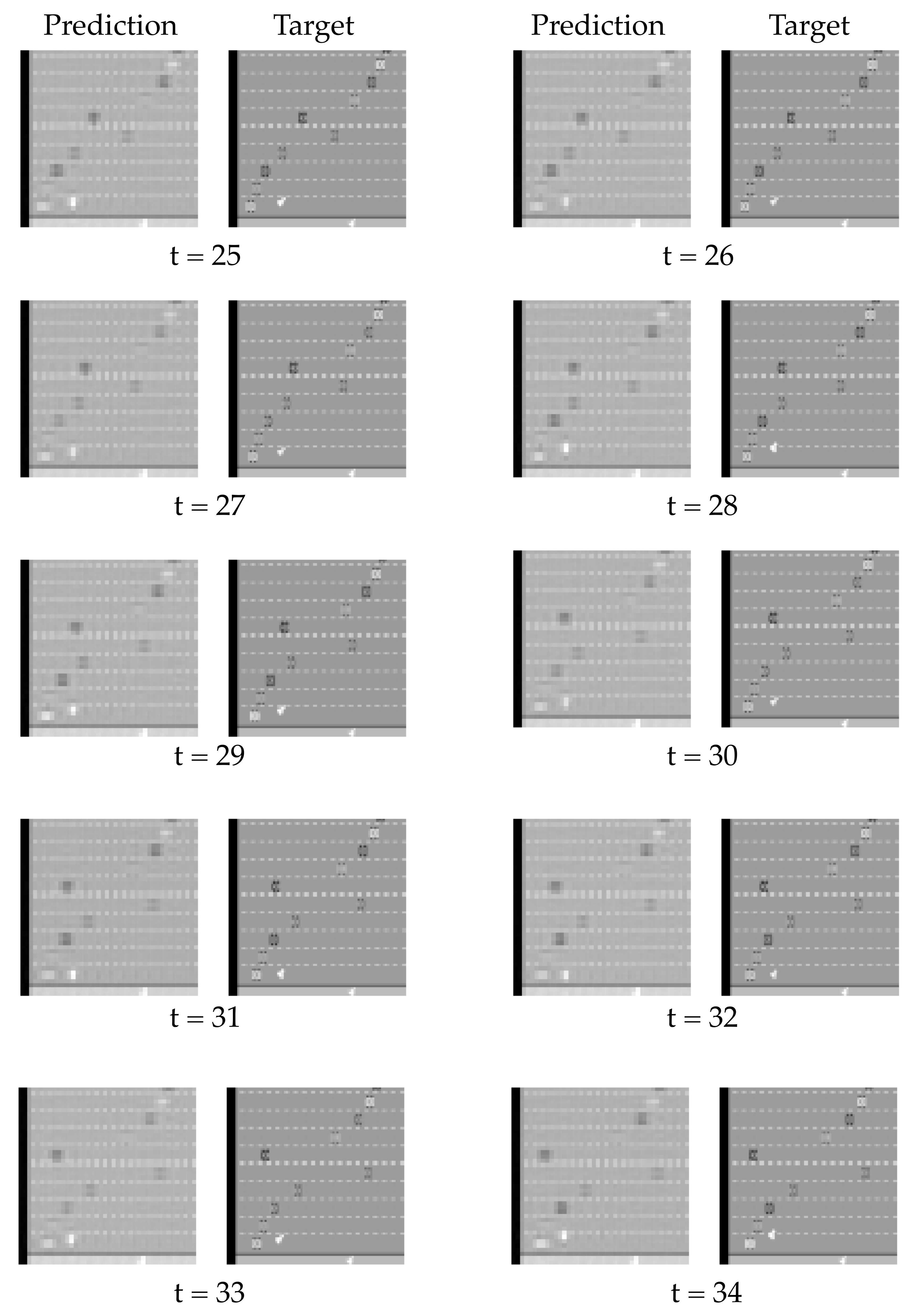}
\caption{Final 1-step predictions in the game \textsc{Freeway}. We use the task of predicting the next game screen as an auxiliary task when estimating the successor representation.}\label{fig:reconstruction_freeway}
\end{figure}

\begin{figure}
    \centering
          \includegraphics[width=\textwidth]{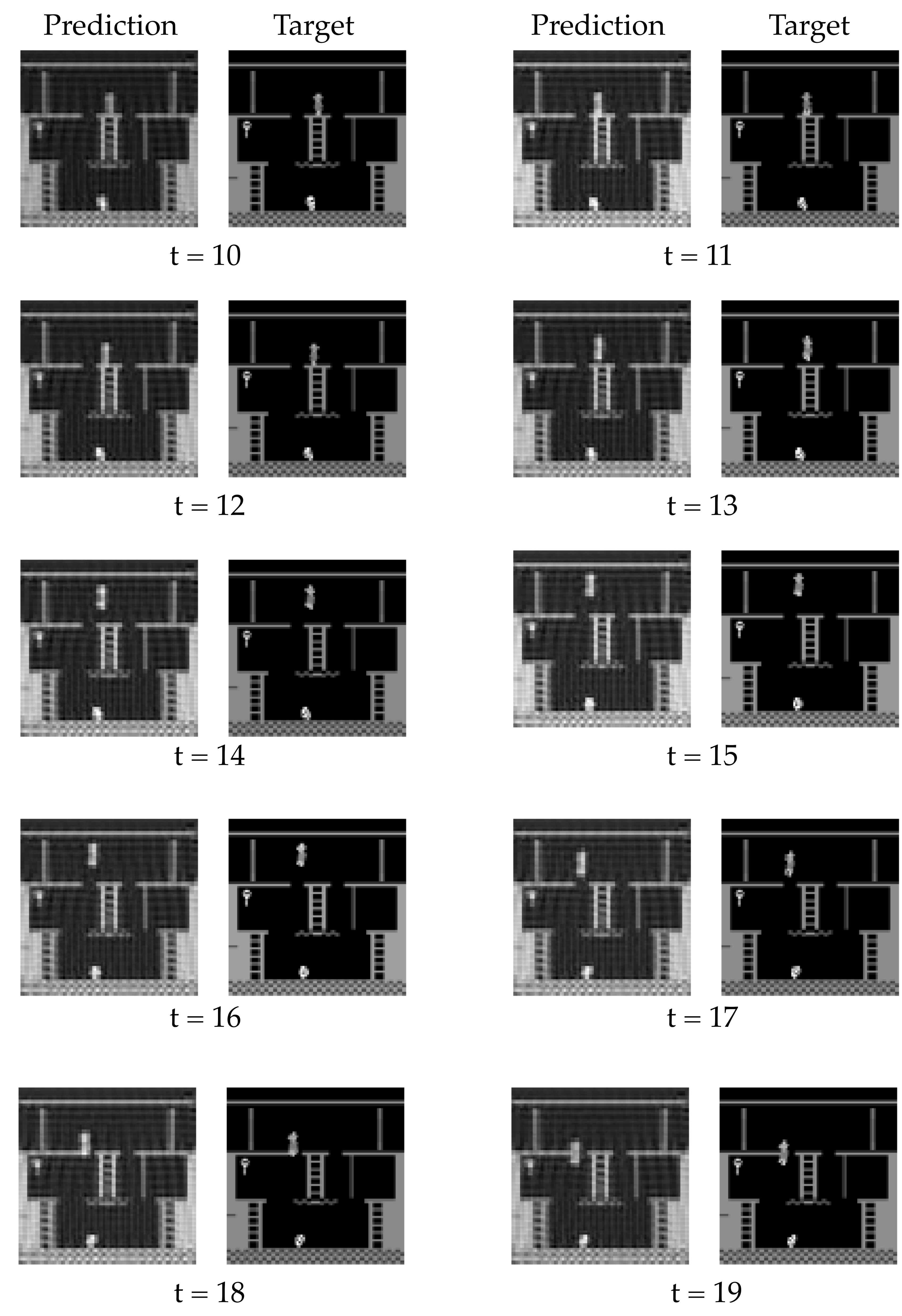}
\caption{Final 1-step predictions in the game \textsc{Montezuma's Revenge}. We use the task of predicting the next game screen as an auxiliary task when estimating the successor representation.}\label{fig:reconstruction_montezuma_revenge}
\end{figure}

\begin{figure}
    \centering
          \includegraphics[width=\textwidth]{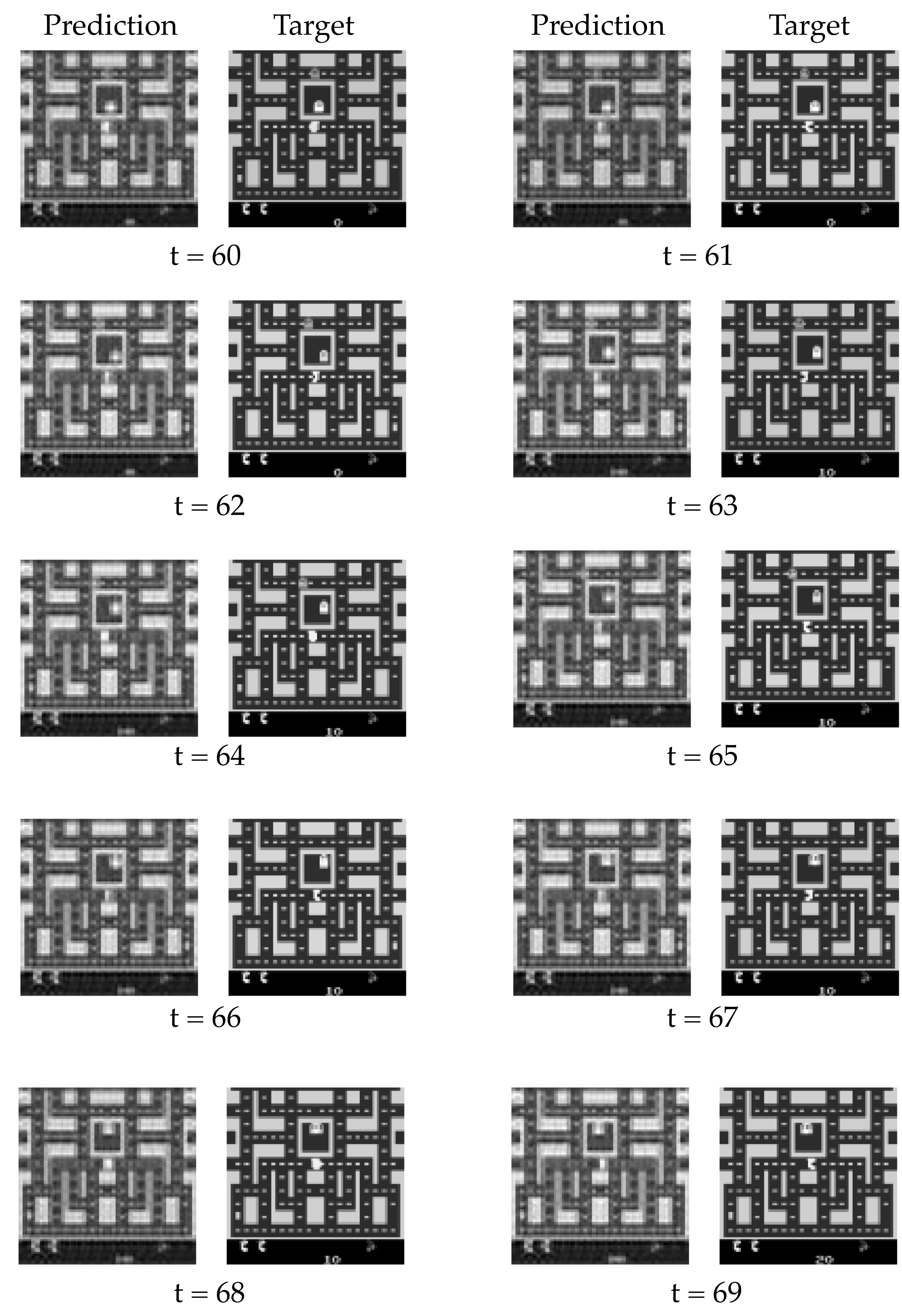}
\caption{Final 1-step predictions in the game \textsc{Ms. Pacman}. We use the task of predicting the next game screen as an auxiliary task when estimating the successor representation.}\label{fig:reconstruction_ms_pacman}
\end{figure}

\end{document}